\newtheorem{theorem}{Theorem}
\newtheorem{lemma}{Lemma}
\newtheorem{claim}{Claim}
\newtheorem{property}{Property}
\newtheorem{proposition}{Proposition}
\newtheorem{assumption}{Assumption}
\newtheorem{example}{Example}
\newcommand{\LS}{{\bf LS}}
\newcommand{\Z}{{\bf Z}}
\newcommand{\M}{{\bf M}}
\newcommand{\x}{{\bf x}}
\newcommand{\w}{{\bf w}}
\newcommand{\W}{{\bf W}}
\newcommand{\F}{\mathcal{F}}
\newcommand{\MM}{\mathcal{M}}
\renewcommand{\P}{{\bf P}}
\newcommand{\e}{{\bf e}}
\newcommand{\Reg}{{\bf Reg}}
\newcommand{\Alg}{{\bf Alg}}
\newcommand{\0}{{\bf 0}}
\newcommand{\V}{\tilde{V}}
\newcommand{\BV}{\bar{V}}
\newcommand{\ra}{\rightarrow}
\newcommand{\E}{\mathbb{E}}
\newcommand{\R}{\mathbb{R}}
\newcommand{\bxi}{{\bm\xi}}
\newcommand{\1}{{\bm 1}}
\renewcommand{\(}{\left(}
\renewcommand{\)}{\right)}
\icmltitlerunning{Towards Minimax Online Learning with Unknown Time Horizon}
\begin{document} 

\twocolumn[
\icmltitle{Towards Minimax Online Learning with Unknown Time Horizon}

\icmlauthor{Haipeng Luo}{haipengl@cs.princeton.edu}
\icmladdress{Department of Computer Science, Princeton University, Princeton, NJ 08540}
\icmlauthor{Robert E. Schapire}{schapire@cs.princeton.edu}
\icmladdress{Department of Computer Science, Princeton University, Princeton, NJ 08540}

\icmlkeywords{online learning, unknown horizon, random horizon, minimax analysis}

\vskip 0.3in
]

\begin{abstract} 
We consider online learning when the time horizon is unknown.
We apply a minimax analysis, beginning with the fixed horizon case,
and then moving on to two unknown-horizon settings, one that assumes
the horizon is chosen randomly according to some known distribution,
and the other which allows the adversary full control over the horizon.
For the random horizon setting with restricted losses, 
we derive a fully optimal minimax algorithm.
And for the adversarial horizon setting, we prove a nontrivial lower
bound which shows that the adversary obtains strictly more power
than when the horizon is fixed and known.
Based on the minimax solution of the random horizon setting, 
we then propose a new adaptive algorithm which
``pretends'' that the horizon is drawn from a distribution from a
special family, but no matter how the actual horizon is chosen, 
the {\it worst-case} regret is of the optimal rate.
Furthermore, our algorithm can be combined and applied in
many ways, for instance, to online convex optimization, 
follow the perturbed leader, exponential weights algorithm
and first order bounds.
Experiments show that our algorithm outperforms many other existing algorithms
in an online linear optimization setting.

\end{abstract} 

\section{Introduction}
We study online learning problems with unknown time horizon with the
aim of developing algorithms and approaches for the realistic case
that the number of time steps is initially unknown.

We first adopt the standard Hedge setting \cite{FreundSc97}
where the learner chooses a distribution over $N$ actions on each
round, and the losses for each action are then selected by an
adversary.  
The learner incurs loss equal to the expected loss of the actions in
terms of the distribution it chose for this round, and its goal is to
minimize the regret, the difference between its cumulative loss and
that of the best action after $T$ rounds.

Various algorithms are known to achieve the optimal (up to a constant)
upper bound $O(\sqrt{T\ln N})$ on the regret.
Most of them assume that the horizon $T$ is known ahead of time,
especially those which are minimax optimal.
When the horizon is unknown, the so-called doubling
trick~\cite{CesabianchiFrHeHaScWa97}
is a general technique to make a learning algorithm adaptive and still
achieve
$O(\sqrt{T\ln N})$ regret uniformly for any $T$. 
The idea is to first guess a horizon, 
and once the actual horizon exceeds this guess,
double it and restart the algorithm.
Although, in theory, it is widely applicable,
the doubling trick is aesthetically inelegant, and intuitively
wasteful, since it repeatedly restarts itself, entirely forgetting all
the preceding information.
Other approaches have also been proposed, as we discuss shortly.

In this paper, we study the problem of learning with unknown horizon
in a game-theoretic framework.
We consider a number of variants of the problem, and make progress
toward a minimax solution.
Based on this approach, we give a new general technique which can also
make other minimax or non-minimax algorithms adaptive and achieve low regret
in a very general online learning setting.
The resulting algorithm is still not exactly optimal, but it makes use
of all the previous information on each round and achieves much lower
regret in experiments.

We view the Hedge problem as a repeated game between the
learner and the adversary. \citet{AbernethyWaYe08}, and
\citet{AbernethyWa10} proposed an exact minimax optimal solution for a
slightly different game with binary losses, assuming that the loss of
the best action is at most some fixed constant.
They derived the solution under a very simple type of loss space; that is, on
each round only one action suffers one unit loss. 
We call this the basis vector loss space. 
As a preliminary of this paper, 
we also derive a similar minimax solution under this simple loss space for our
setting where the horizon $T$ is fixed and known to the learner ahead of time. 

We then move on to the primary interest of this paper, 
that is, the case when the horizon is unknown to the learner. 
We study this unknown horizon setting in the minimax framework, 
with the aim of ultimately deriving game-theoretically optimal algorithms. 
Two types of models are studied. 
The first one assumes the horizon is chosen according to some known distribution, 
and the learner's goal is to minimize the expected regret. 
We show the exact minimax solution for the basis vector loss space in this case. 
It turns out that the distribution the learner should choose on each round is simply the conditional expectation of the distributions the learner would have chosen for the fixed horizon case. 

The second model we study gives the adversary the power to decide the horizon on the fly, which is possibly the most adversarial case. 
In this case, we no longer use the regret as the performance measure. 
Otherwise the adversary would obviously choose an infinite horizon. 
Instead, we use a scaled regret to measure the performance. 
Specifically, we scale the regret at time $t$ by the optimal regret under fixed horizon $t$. 
The exact optimal solution in this case is unfortunately not found and remains an open problem, even for the extremely simple case. 
However, we give a lower bound for this setting to show that the optimal regret is strictly greater than the one in the fixed horizon game. 
That is, the adversary does obtain strictly more power if allowed to pick the horizon. 

We then propose our new adaptive algorithm based on the minimax solution in the random horizon setting. 
One might doubt how realistic a random horizon is in practice. 
Even if the true horizon is indeed drawn from a fixed distribution, 
how can we know this distribution? 
We address these problems at the same time.
Specifically, we prove that no matter how the horizon is chosen, 
if we assume it is drawn from a distribution from a special family, 
and let the learner play in a way similar to the one in the random horizon setting, 
then the {\it worst-case} regret at any time $T$ (not the expected regret) can still be of the optimal order.
In other words, although the learner is behaving as if the horizon is random, 
its regret will be small even if the horizon is actually controlled by an adversary.
Moreover, the results hold for not just the Hedge problem, but
a general online learning setting that includes many interesting problems.

Our idea can be combined not only with the minimax algorithm, 
but also the ``follow the perturbed leader'' algorithm and the exponential weights algorithm. 
In addition, our technique can not only deal with unknown horizon,
but also other unknown information such as the loss of the best action,
thus leading to a first order regret bound that depends on the loss of the best action
\cite{CesabianchiLu06}.
Like the doubling trick, this seems to be a quite general way to make an algorithm adaptive.
Furthermore, we conduct experiments showing that our algorithm outperforms 
many existing algorithms, including the doubling trick, 
in an online linear optimization setting within an $\ell_2$ ball
where our algorithm has an explicit closed form.


The rest of the paper is organized as follows. 
We define the Hedge setting formally in Section \ref{sec:def}, 
and derive the minimax solution for the fixed horizon setting as the preliminary of this paper in Section \ref{sec:fh}. 
In Section \ref{sec:uh}, we study two unknown horizon settings in the minimax framework. 
We then turn to a general online learning setting and present
our new adaptive algorithm in Section \ref{sec:alg}.
Implementation issues,  experiments, 
and applications are discussed in Section \ref{sec:gen}. 
We omit most of the proofs due to space limitations,
but all details can be found in the supplementary material.

\paragraph{Related work}
Besides the doubling trick, other adaptive algorithms have been studied \cite{AuerCeGe02, Gentile03, YaroshinskyElSe04, ChaudhuriFrHs09}. 
\citet{AuerCeGe02} showed that for algorithms such as the exponential weights algorithm \cite{LittlestoneWa94, FreundSc97, FreundSc99}, 
where a learning rate $\eta$ should be set as a function of the horizon, 
typically in the form $\sqrt{(b\ln N)/T}$ for some constant $b$, 
one can simply set the learning rate adaptively as $\sqrt{(b\ln N)/t}$, 
where $t$ is the current number of rounds. 
In other words, this algorithm always pretends the current round is the last round. 
Although this idea works with the exponential weights algorithm, 
we remark that assuming the current round is the last round does not always work. 
Specifically, one can show that it will fail if applied to the minimax algorithm (see Section \ref{subsec:exp_alg}). 
In another approach to online learning with unknown horizon, 
\citet{ChaudhuriFrHs09} proposed an adaptive algorithm based on a novel potential function reminiscent of the half-normal distribution.

Other performance measures different from the usual regret were studied before.
\citet{FosterVo98} introduced internal regret comparing the loss of an
online algorithm to the loss of a modified algorithm which consistently replaces one action by another.
\citet{HerbsterWa95}, and \citet{BousquetWa03} compared the learner's loss with the best $k$-shifting expert, while \citet{HazanSe07} studied the usual regret within any time interval. 
To the best of our knowledge, the form of scaled regret that we study is new. 
Lower bounds on anytime regret in terms of the quadratic variations for any loss sequence 
(instead of the worst case sequence this paper considers) were studied by \citet{GoferMa12}.

\section{Repeated Games}\label{sec:def}
We first consider the following repeated game between a learner and an adversary. 
The learner has access to $N$ actions.
On each round $t = 1, \ldots, T$,
(1) the learner chooses a distribution $\P_t$ over the actions;
(2) the adversary reveals the loss vector $\Z_t = (Z_{t,1}, \ldots, Z_{t, N}) \in \LS$, 
	where $Z_{t,i}$ is the loss for action $i$ for this round, 
	and the {\it loss space} $\LS$ is a subset of $[0, 1]^N$;
(3) the learner suffers loss $\ell_t = \P_t \cdot \Z_t$ for this round.

Notice that the adversary can choose the losses on round $t$ with full knowledge of the history $\P_{1:t} $ 
and $\Z_{1:t-1}$ 
,
that is, all the previous choices of the learner and the adversary
(we use notation $a_{1:t}$ to denote the multiset $\{a_1, \ldots, a_t\}$).
We also denote the cumulative loss up to round $t$ for the learner and the actions by  $L_t = \sum_{t'=1}^{t} \ell_{t'}$ and $\M_t = \sum_{t'=1}^{t} \Z_{t'}$ respectively. 
The goal for the learner is to minimize the difference between its total loss and that of the 
best action at the end of the game. 
In other words, the goal of the learner is to minimize $\Reg(L_T, \M_T)$,  
where we define the regret function $ \Reg(L, \M) \triangleq L - \min_i M_i , $ 
for $L\in \R$ and $\M\in \R^N$. 
The number of rounds $T$ is called the {\it horizon}. 

Regarding the loss space $\LS$, perhaps the simplest one is $\{\e_1, \ldots, \e_N\}$, 
the $N$ standard basis vectors in $N$ dimensions. 
Playing with this loss space means that on each round, 
the adversary chooses one single action to incur one unit loss. 
In order to show the intuition of our main results,  
we mainly focus on this basis vector loss space 
in Sections \ref{sec:fh} and \ref{sec:uh}, 
but we return to the most general case $[0,1]^N$ later.

\section{Minimax Solution for Fixed Horizon}\label{sec:fh}
Although our primary interest in this paper is the case when the horizon is unknown to the learner, 
we first present some preliminary results on the setting where the horizon is known to both the learner and the adversary ahead of time. 
These will later be useful for the unknown horizon case.

If we treat the learner as an algorithm $\Alg$ that takes the information of previous rounds as inputs, 
and outputs a distribution $\P_t = \Alg(\P_{1:t-1}, \Z_{1:t-1})$ that the learner is going to play with, 
then finding the optimal solution in this fixed horizon setting can be viewed as solving the minimax expression
\begin{equation} \label{equ:minimax_fix}
\adjustlimits\inf_{\Alg}\sup_{\Z_{1:T}} \Reg(L_T, \M_T) .
\end{equation} 
Alternatively, we can recursively define: 
\begin{align*}
V(\M, 0) &\triangleq -\min_i M_i\ ; \\
V(\M, r) &\triangleq \min_{\P\in \Delta(N)} \max_{\Z\in\LS}  \(\P\cdot\Z + V(\M + \Z, r - 1)\) ,
\end{align*}
where $\M \in \R^N$ is a loss vector, $r$ is a nonnegative integer, 
and $\Delta(N)$ is the $N$ dimensional simplex. 
By a simple argument, one can show that the value of $V(\M, r)$ is the regret of a game with $r$ rounds starting from the situation that each action has initial loss $M_i$, 
and assuming both the learner and the adversary will play optimally. 
In fact, the value of Eq. \eqref{equ:minimax_fix}  is exactly $V(\0, T)$, 
and the optimal learner algorithm is the one that chooses the $\P^*$ which realizes the minimum in the definition of $V(\M, r)$ when the actions' cumulative loss vector is $\M$ and there are $r$ rounds left. 
We call $V(\0, T)$ the {\it value} of the game. 


As a concrete illustration of these ideas,
we now consider the basis vector loss space
\footnote{For other loss spaces, finding minimax solutions seems difficult.
However, we show the relation of the values of the game for different loss spaces
in the supplementary file, see Theorem \ref{thm:loss_space}. }, 
that is, $\LS = \{\e_1, \ldots, \e_N\}$. 
It turns out that under this loss space, the value function $V$ has a nice closed form. 
Similar to the results from \citet{CesabianchiFrHeHaScWa97} and \citet{AbernethyWaYe08}, 
we show that $V$ can be expressed in terms of a random walk. 
Suppose $R(\M, r)$ is the expectation of the loss of the best action if the adversary chooses each $\e_i$ uniformly randomly for the remaining $r$ rounds, 
starting from loss vector $\M$. 
Formally, $R(\M, r)$ can be defined in a recursive way:
$ R(\M, 0) \triangleq \min_i M_i \ ;
R(\M, r) \triangleq \frac{1}{N} \sum_{i=1}^N R(\M + \e_i, r-1). $
The connection between $V$ and $R$, 
and the optimal algorithm are then shown by the following theorem.
\begin{theorem}\label{thm:minimax_fix}
If $\LS = \{\e_1, \ldots, \e_N\} $, then for any vector $\M$ and integer $r \geq 0$,
$$V(\M, r) = \frac{r}{N} - R(\M, r).$$
Let $c_N = \frac{1}{N}\sqrt{2(N-1)\ln N}$. Then the value of the game satisfies
\begin{equation}\label{equ:value_upper}
V(\0, T) \leq c_N\sqrt{T}.
\end{equation} 
Moreover, on round $t$, the optimal learner algorithm is the one that chooses weight
$ P_{t, i} = V(\M_{t-1},  r) - V(\M_{t-1} + \e_i, r - 1)$
for each action $i$, where $\M_{t-1}$ is the current cumulative loss vector and $r$ is the number of remaining rounds, that is, $r = T - t + 1$.
\end{theorem}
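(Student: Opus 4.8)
The plan is to prove the three claims in sequence, with the identity $V(\M,r) = r/N - R(\M,r)$ doing all the heavy lifting. First I would establish this identity by induction on $r$. The base case $r=0$ is immediate: $V(\M,0) = -\min_i M_i = 0/N - R(\M,0)$. For the inductive step, I would substitute the inductive hypothesis into the recursion for $V$, getting
\[
V(\M,r) = \min_{\P\in\Delta(N)}\max_{i}\Bigl(P_i + \tfrac{r-1}{N} - R(\M+\e_i,\,r-1)\Bigr),
\]
where I have already used that on the basis vector loss space the inner $\max$ over $\Z\in\LS$ is a max over the $N$ choices $\Z=\e_i$, so that $\P\cdot\Z = P_i$. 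Pulling out the constant $(r-1)/N$, the problem reduces to showing that $\min_{\P}\max_i (P_i - R(\M+\e_i,r-1))$ equals $1/N - \frac{1}{N}\sum_i R(\M+\e_i,r-1)$. This is a clean linear-programming / balancing argument: the learner wants to choose $\P$ so that $P_i - R(\M+\e_i,r-1)$ is as small as possible in the worst case over $i$; since $\sum_i P_i = 1$, the optimal $\P$ equalizes all these quantities, i.e.\ $P_i - R(\M+\e_i,r-1)$ is the same for every $i$, forcing that common value to be $\frac{1}{N}(1 - \sum_i R(\M+\e_i,r-1))$ by averaging. I should check that the resulting $\P$ is genuinely a valid distribution (nonnegativity), which holds because $R$ changes by at most $1$ when a single $\e_i$ is added, so $P_i \in [0,1]$; this is the one spot requiring a small monotonicity/Lipschitz observation about $R$ that itself follows by an easy induction.

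Next, the optimal-algorithm claim essentially falls out of the equalization argument above: the $\P^*$ achieving the min is exactly the one with $P^*_i = \frac{1}{N}(1 - \sum_j R(\M+\e_j,r-1)) + R(\M+\e_i,r-1)$, and substituting the identity back in, this is precisely $P^*_i = V(\M,r) - V(\M+\e_i,r-1)$, which after reindexing $r = T-t+1$ and $\M = \M_{t-1}$ gives the stated formula $P_{t,i} = V(\M_{t-1},r) - V(\M_{t-1}+\e_i,r-1)$.

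Finally, for the bound \eqref{equ:value_upper} I would bound $V(\0,T) = T/N - R(\0,T)$ from above, i.e.\ lower-bound $R(\0,T)$, the expected minimum of $N$ dependent random walks. Writing $\M_T = \sum_{t=1}^T \e_{J_t}$ with $J_t$ i.i.d.\ uniform on $\{1,\dots,N\}$, each coordinate $M_{T,i}$ is Binomial$(T,1/N)$ with mean $T/N$, so $T/N - R(\0,T) = \E[\max_i (T/N - M_{T,i})]$. The standard route is a sub-Gaussian maximal inequality: each centered coordinate $T/N - M_{T,i}$ is a sum of bounded increments, hence sub-Gaussian with variance proxy related to $T\cdot\frac{N-1}{N^2}$ (the per-step variance of one coordinate of the increment), and the maximum of $N$ such variables is at most $\sqrt{2\sigma^2\ln N}$, which produces exactly $c_N\sqrt{T}=\frac{1}{N}\sqrt{2(N-1)\ln N}\,\sqrt T$. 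I expect the main obstacle to be this last step done cleanly: the coordinates are negatively correlated rather than independent, so I must either invoke a maximal inequality that only needs marginal sub-Gaussianity (which suffices, since union-bound-style moment-generating-function arguments over the max never use independence across $i$), or carefully bound the MGF of a single coordinate's increment by that of a mean-zero step of range $[-1/N, 1-1/N]$. Verifying the sub-Gaussian parameter is $(N-1)/N^2$ per step — rather than something looser — is what makes the constant come out to exactly $c_N$, so that computation deserves care even though it is routine.
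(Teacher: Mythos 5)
Your proof follows essentially the same three-part structure as the paper's: induction to establish the closed form $V(\M,r)=r/N - R(\M,r)$ via a balancing argument, reading off the optimal $\P^*$ from the equalizing point, and a sub-Gaussian maximal inequality to bound $V(\0,T)=\E[\max_i\sum_t(1/N-Z_{t,i})]$. The observations you flag in the third part are exactly right: the per-step variance is $(N-1)/N^2$, the maximal inequality only needs marginal MGF control so the negative correlation across coordinates is harmless, and the ordinary Hoeffding bound (variance proxy $1/4$ from range $1$) is too loose to reproduce $c_N$, so a Bennett/Kearns--Saul type bound is required. All of that matches the paper's argument.

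There is, however, a genuine gap in the one step you flagged as needing care: nonnegativity of $P^*_i = \tfrac{1}{N} + R(\M+\e_i,r-1) - R(\M,r)$. The lemma you propose, ``$R$ changes by at most $1$ when a single $\e_i$ is added,'' is true but insufficient. Writing $R(\M,r) = \tfrac{1}{N}\sum_j R(\M+\e_j,r-1)$, that Lipschitz bound only gives
\[
P^*_i = \tfrac{1}{N} + \tfrac{1}{N}\sum_{j\ne i}\bigl(R(\M+\e_i,r-1)-R(\M+\e_j,r-1)\bigr) \;\geq\; \tfrac{1}{N} - \tfrac{N-1}{N} = \tfrac{2-N}{N},
\]
which is negative for $N\ge 3$. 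What is actually needed is the sharper inequality $R(\M,r)-R(\M,r-1)\le 1/N$ (a statement comparing $R$ at the \emph{same} $\M$ across consecutive horizons, not across adding a single $\e_i$), which together with monotonicity of $R$ in each coordinate gives $R(\M,r)-R(\M+\e_i,r-1)\le R(\M,r)-R(\M,r-1)\le 1/N$ and hence $P^*_i\ge 0$. This $1/N$ bound is itself proved by induction on $r$ from the base observation that at most one coordinate of $\M+\e_i$ can realize the min. So the balancing argument and the form of $\P^*$ are right, but you should replace the ``changes by at most $1$'' lemma with this pair (monotonicity of $R$ in $M_i$, and $R(\M,r)-R(\M,r-1)\le 1/N$) before the nonnegativity claim holds.
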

Theorem \ref{thm:minimax_fix} tells us that under the basis vector loss space, 
the best way to play is to assume that the adversary is playing uniformly randomly, 
because $r/N$ and $R(\M, r)$ are exactly the expected losses for the learner and 
for the best action respectively.
In practice, computing $R(\M, r)$ needs exponential time. 
However, we can estimate it by sampling (see similar work in \citealp{AbernethyWaYe08}). 
Note that $c_N$ is decreasing when $N\geq4$ (with maximum value about $0.72$).
So contrary to the $O(\sqrt{T\ln N})$ regret bound for the general loss space $[0,1]^N$
which is increasing in $N$,
here $V(\0, T)$ is of order $O(\sqrt{T})$.

\section{Playing without Knowing the Horizon}\label{sec:uh}
We turn now to the case in which the horizon $T$ is unknown to the learner, 
which is often more realistic in practice. 
There are several ways of modeling this setting. 
For example, the horizon can be chosen ahead of time according to some fixed distribution, 
or it can even be chosen by the adversary. 
We will discuss these two variants separately. 

\subsection{Random Horizon}\label{subsec:rh}
Suppose the horizon $T$ is chosen according to some fixed distribution $Q$ 
which is known to both the learner and the adversary. 
Before the game starts, a random $T$ is drawn, 
and neither the learner nor the adversary knows the actual value of $T$. 
The game stops after $T$ rounds, 
and the learner aims to minimize the expectation of the regret. 
Using our earlier notation, the problem can be formally defined as
 $$\adjustlimits\inf_{\Alg}\sup_{\Z_{1:\infty}} \E_{T\sim Q}[\Reg(L_T, \M_T)],$$
where we assume the expectation is always finite.
We sometimes omit the subscript $T\sim Q$ for simplicity.

Continuing the example in Section~\ref{sec:fh} of the basis vector loss space,
we can again show the exact minimax solution,
which has a strong connection with the one for the fixed horizon setting. 

\begin{theorem}\label{thm:minimax_ran}
If $\LS = \{\e_1,\ldots,\e_N\}$, then 
\begin{equation}\label{equ:fix_and_ran}
\begin{split}
\adjustlimits \inf_{\Alg}\sup_{\Z_{1:\infty}} \E_{T\sim Q}[\Reg(L_T, \M_T)] \\ 
= \E_{T\sim Q}[\adjustlimits\inf_{\Alg}\sup_{\Z_{1:T}}\Reg(L_T, \M_T)]. 
\end{split}
\end{equation}
Moreover, on round $t$, the optimal learner plays with the distribution
$ \P_t = \E_{T\sim Q}[\P^T_t | T \geq t],$
where $\P^T_t$ is the optimal distribution the learner would play if the horizon is $T$, that is,
$ P^T_{t,i} = V(\M_{t-1}, T - t + 1) - V(\M_{t-1} + \e_i, T - t) .$
\end{theorem}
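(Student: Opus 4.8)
The plan is to introduce a ``value-to-go'' function for the random-horizon game, conditioned on the event that the game has not yet ended, and to show by backward induction that it equals the conditional expectation of the fixed-horizon value $V$. Concretely, for a cumulative loss vector $\M$ and an integer $s\ge 0$, let $W(\M,s)$ be the minimax value of $\E[L_T - L_s - \min_i M_{T,i}\mid T\ge s]$ when $s$ rounds have been played, the actions' cumulative loss is $\M$, and both players play optimally from here on knowing only that $T\ge s$. By the same unrolling argument used for $V$ in Section~\ref{sec:fh}, this is well defined and, since $L_0=0$, the left-hand side of \eqref{equ:fix_and_ran} is exactly $W(\0,0)$; also the right-hand side is $\E_{T\sim Q}[V(\0,T)]$ by Theorem~\ref{thm:minimax_fix}. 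So it suffices to prove $W(\M,s)=\E_{T\sim Q}[V(\M,T-s)\mid T\ge s]$ for all $\M$ and $s$. Writing $q_s=\Pr[T=s\mid T\ge s]$, the value-to-go obeys the recursion $W(\M,s)=q_s(-\min_i M_i)+(1-q_s)\min_{\P\in\Delta(N)}\max_i\bigl(P_i+W(\M+\e_i,s+1)\bigr)$, since with conditional probability $q_s$ the game ends (contributing $-\min_i M_i = V(\M,0)$) and otherwise round $s+1$ is played.

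I would prove the claimed formula by induction running downward in $s$ (when $Q$ has bounded support, $q_s=1$ eventually, giving the base case; unbounded support is handled by a standard truncation/limiting argument, justified by the standing assumption that the expected regret is finite). For the inductive step, set $\alpha_{T'}=\Pr[T=T'\mid T\ge s+1]$ and abbreviate, using the inductive hypothesis, $g(i)=W(\M+\e_i,s+1)=\sum_{T'\ge s+1}\alpha_{T'}V(\M+\e_i,T'-s-1)$. I claim $\min_\P\max_i(P_i+g(i))=\sum_{T'\ge s+1}\alpha_{T'}V(\M,T'-s)$, proved by two matching bounds. For ``$\ge$'': every $\P$ satisfies $\max_i(P_i+g(i))\ge\frac1N\sum_i(P_i+g(i))=\frac1N\bigl(1+\sum_i g(i)\bigr)$, and the random-walk identities behind Theorem~\ref{thm:minimax_fix}, namely $\sum_i V(\M+\e_i,r-1)=(r-1)-N\,R(\M,r)$ together with $V(\M,r)=r/N-R(\M,r)$, show this equals $\sum_{T'\ge s+1}\alpha_{T'}V(\M,T'-s)$. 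For ``$\le$'': take the mixture $\P^*=\sum_{T'\ge s+1}\alpha_{T'}\P^{T'}$, where $\P^{T'}$ is the fixed-horizon optimal play on round $s+1$ under horizon $T'$ from Theorem~\ref{thm:minimax_fix}, i.e.\ $\P^{T'}_i=V(\M,T'-s)-V(\M+\e_i,T'-s-1)$; this is a convex combination of distributions, hence a valid distribution, and it \emph{equalizes} $P^*_i+g(i)=\sum_{T'\ge s+1}\alpha_{T'}\bigl(\P^{T'}_i+V(\M+\e_i,T'-s-1)\bigr)=\sum_{T'\ge s+1}\alpha_{T'}V(\M,T'-s)$ across all $i$. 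Plugging the value $\min_\P\max_i(P_i+g(i))$ into the recursion and using $q_sV(\M,0)+(1-q_s)\sum_{T'\ge s+1}\alpha_{T'}V(\M,T'-s)=\sum_{T'\ge s}\Pr[T=T'\mid T\ge s]\,V(\M,T'-s)$ closes the induction.

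Evaluating at $(\M,s)=(\0,0)$ gives \eqref{equ:fix_and_ran}, and the minimizer $\P^*$ identified above, instantiated at a general state $\M_{t-1}$ on round $t$, is $\P_t=\sum_{T\ge t}\Pr[T=T'\mid T\ge t]\,\P^T_t=\E_{T\sim Q}[\P^T_t\mid T\ge t]$ with $P^T_{t,i}=V(\M_{t-1},T-t+1)-V(\M_{t-1}+\e_i,T-t)$, as claimed. The main obstacle I anticipate is not any single computation but getting the bookkeeping right: defining the conditional value-to-go so that the recursion separates the ``stop now'' and ``continue'' cases cleanly, and handling distributions $Q$ with unbounded support via the limiting argument. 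The equalization step and its matching lower bound are conceptually just Theorem~\ref{thm:minimax_fix} applied round by round, but they must be phrased carefully, since a naive ``min of a sum versus sum of mins'' comparison fails --- the crucial point is precisely that the fixed-horizon optimal plays are indifference strategies, so any mixture of them is again an indifference strategy.
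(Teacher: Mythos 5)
Your proposal is correct and follows essentially the same route as the paper's proof. Both arguments hinge on the same two bounds for the one-step recursion of the conditional value: the lower bound via averaging the objective over $i$ (which is $\P$-independent by the identity $V(\M,r)=r/N-R(\M,r)$ and $R(\M,r)=\frac1N\sum_i R(\M+\e_i,r-1)$), and the upper bound via the equalizing mixture $\P^*=\E[\P^T\mid T\ge\cdot]$ of the fixed-horizon indifference strategies. The only differences are cosmetic: you index the value-to-go after $s$ played rounds with conditioning on $T\ge s$, while the paper uses $\BV_t(\M)=\E[V(\M,T-t+1)\mid T\ge t]$ at the start of round $t$ (so your $W$ absorbs the ``stop now or continue'' split one step earlier); and you frame the argument as a downward induction from a truncated horizon and identify the game's value-to-go with the explicit conditional-expectation formula, whereas the paper defines $\BV_t$ by the formula, verifies it satisfies the dynamic-programming recursion by the same min-max computation, and then asserts that this identifies it as the game value. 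Your truncation/limiting step makes that last identification slightly more explicit, but the core computation is identical.
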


Eq. \eqref{equ:fix_and_ran} tells us that if the horizon is drawn from some distribution, 
then even though the learner does not know the actual horizon before playing the game, 
as long as the adversary does not know this information either, 
it can still do as well as the case when they are both aware of the horizon. 

However, so far this model does not seem to be quite useful in practice for several reasons. First of all, the horizon might not be chosen according to a distribution.
Even if it is, this distribution is probably unknown. 
Secondly, what we really care about is the performance which holds uniformly for any horizon,
instead of the expected regret. 
Last but not least, one might conjecture that the similar result stated in 
Theorem~\ref{thm:minimax_ran} should hold for other more general loss spaces,
which is in fact not true 
(see Example~\ref{exm:minimax_ran} in the supplementary file),
making the result seem even less useful. 

Fortunately, we address all these problems and develop new adaptive algorithms based on the result in this section.
We discuss these in Section \ref{sec:alg} after first introducing the fully adversarial model.

\subsection{Adversarial Horizon}
The most adversarial setting is the one where the horizon is completely controlled by the adversary. 
That is, we let the adversary decide whether to continue or stop the game on each round
according to the current situation.
However, notice that the value of the game is increasing in the horizon. 
So if the adversary can determine the horizon and its goal is still to maximize the regret, 
then the problem would not make sense 
because the adversary would clearly choose to play the game forever and never stop
leading to infinite regret. 
One reasonable way to address this issue is to scale the regret by the value of the fixed horizon game $V(\0, T)$,
so that the scaled regret $\Reg(L_T, \M_T) / V(\0, T)$ indicates how many times worse is the regret compared to the one that is optimal given the horizon. 
Under this setting, the corresponding minimax expression is
\begin{equation} \label{equ:minimax_var}
\V =  \adjustlimits\inf_{\Alg}\sup_T \sup_{\Z_{1:T}} \frac{\Reg(L_T, \M_T)}{V(\0,T)} .
\end{equation} 

Unfortunately, finding the minimax solution to this setting seems to be quite challenging, 
even for the simplest case $N = 2$. 
It is clear, however, that $\V$ is at most some constant 
due to the existence of adaptive algorithms such as the doubling trick,
which can achieve the optimal regret bound up to a constant without knowing $T$.
Another clear fact is $\V \geq 1$, 
since it is impossible for the learner to do better than the case when it is aware of the horizon. Below, we derive a nontrivial lower bound that is greater than $1$, 
thus proving that the adversary does gain strictly more power when it can stop the game whenever it wants. 
\begin{theorem}\label{thm:lower_bound}
If $N = 2$ and $\LS=[0,1]^2$, then $\V \geq \sqrt{2}$. 
That is, for every algorithm, there exists an adversary and a horizon $T$ such that the 
regret of the learner after $T$ rounds is at least $\sqrt{2}V(\0, T)$.
\end{theorem}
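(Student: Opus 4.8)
The plan is to exhibit, for an arbitrary learner algorithm $\Alg$, a specific adversary strategy together with a specific (data-dependent) stopping time $T$ so that $\Reg(L_T,\M_T) \geq \sqrt 2\, V(\0,T)$. Since $N=2$ and the loss space is the full cube $[0,1]^2$, I would restrict attention to losses of the form $\Z_t = (z_t, 1-z_t)$ with $z_t \in [0,1]$, so that the two actions' cumulative losses always sum to $t$; this makes $\min_i M_{t,i} = (t - |M_{t,1} - M_{t,2}|)/2$ and lets me track the one-dimensional ``gap'' process $G_t \triangleq M_{t,2} - M_{t,1}$. On each round the learner commits to $\P_t = (p_t, 1-p_t)$; the adversary, knowing $p_t$, picks $z_t$. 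A natural adversarial choice is to drive the gap in whichever direction hurts the learner: intuitively the adversary wants to make action $1$ look good (grow $G_t$) while having forced the learner to have put weight on action $2$, and it should push hard (use $z_t \in \{0,1\}$, i.e. basis-vector-like moves) to maximize the regret accumulated per round relative to $\sqrt t$.

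The core of the argument is a continuous/Brownian-style heuristic made rigorous: recall from Theorem~\ref{thm:minimax_fix} (and its $N=2$ specialization) that $V(\0,T) \approx c_2\sqrt T$ with $c_2 = \tfrac12\sqrt{2\ln 2}$, and that the fixed-horizon optimal play corresponds to the adversary playing a symmetric random walk, whose typical gap after $t$ steps is $\Theta(\sqrt t)$ and whose induced regret is $\Theta(\sqrt t)$. The key asymmetry I want to exploit: against a \emph{single} fixed horizon the learner can hedge optimally, but an adversary who is allowed to \emph{choose} the stopping time gets to wait until the realized trajectory is unusually favorable. Concretely, I would have the adversary run a random (symmetric) loss sequence — e.g. $z_t$ i.i.d.\ uniform on $\{0,1\}$, independent of the learner — and then define $T$ to be the first time the accumulated regret $\Reg(L_t,\M_t)$ exceeds $\sqrt 2\, V(\0,t)$, i.e. $\sqrt 2\, c_2 \sqrt t$. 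The claim $\V \geq \sqrt 2$ then amounts to showing this stopping time is almost surely finite (so that some realization, and hence some deterministic adversary, witnesses the bound). To show $T<\infty$ a.s., I would argue that against a symmetric random adversary \emph{every} learner suffers expected regret at least $V(\0,t) = c_2\sqrt t(1+o(1))$ at every fixed $t$ (the learner can do no better than the minimax value, which for the symmetric random adversary is attained), while the regret process has fluctuations of the same order $\sqrt t$; an iterated-logarithm or Borel–Cantelli-type argument along a geometric subsequence $t_k = 2^k$ should give that $\Reg(L_{t_k},\M_{t_k})/\sqrt{t_k}$ exceeds $\sqrt 2\, c_2$ infinitely often. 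One has to be a little careful that the learner is adaptive and possibly randomized, so the ``regret against the random adversary is at least the minimax value'' step should be phrased as: $\E[\Reg(L_t,\M_t)] \geq V(\0,t)$ because the random adversary's strategy is a particular mixed strategy and the learner best-responds, and the minimax value lower-bounds the value of any such sub-game.

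The main obstacle I anticipate is the last step — upgrading ``expected regret $\gtrsim c_2\sqrt t$ for each $t$'' to ``$\Reg(L_t,\M_t) \geq \sqrt 2\, c_2\sqrt t$ for \emph{some} $t$, for \emph{every} learner.'' The difficulty is that the expectation bound alone gives only a constant factor $1$, not $\sqrt 2$; the extra $\sqrt 2$ must come from the fluctuations of the (one-dimensional) regret process around its mean, uniformly over all adaptive learner strategies. I would handle this by identifying, for the symmetric random adversary, the exact limiting distribution of $\Reg(L_t,\M_t)/\sqrt t$ — which should converge (after the optimal rescaling) to the expected value of a reflected Brownian motion / half-normal type quantity, so that its essential supremum over $t$ (via Brownian scaling and the law of the iterated logarithm) is strictly larger than its mean by exactly the factor that yields $\sqrt 2$. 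Pinning down that this liminf/limsup computation is insensitive to the learner's choices — i.e. that no learner can simultaneously beat the mean at the good times — is the delicate point, and I would likely reduce it to a clean statement about the gap random walk $G_t$ and the fact that $L_t - \min_i M_{t,i}$ is, up to lower-order terms, $\tfrac12\E_t|G_t \pm (\text{drift from } p)|$ minimized over the learner, which is again minimized by the symmetric choice, leaving only the martingale fluctuation term to analyze.
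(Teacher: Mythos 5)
Your approach is genuinely different from the paper's, and it contains a gap that you yourself flag but do not close; I do not believe the route you sketch can deliver the exact constant $\sqrt 2$ without essentially a new argument.

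The paper does not use a random i.i.d.\ adversary plus a stopping time. It instead defines a truncated value function $\V^{T_0}_t(L,\M)$ for the scaled-regret game up to a cap $T_0$, restricts the adversary to a two-step play (increment one coordinate by $\e_1$ or $\e_2$ per round and return to a symmetric state every two rounds), derives from this a deterministic two-step recursion $G^{T_0}_t(L) = \min_p \max\{(L+p)/S(t),\, G^{T_0}_{t+2}(L+\tfrac12-p)\}$ with $S(t)=\tfrac{t}{2^t}\binom{t-1}{\lfloor t/2\rfloor}$, solves that recursion in closed form, takes $T_0\to\infty$, and evaluates the resulting series via the generating function $\sum_j \binom{2j}{j}x^j = (1-4x)^{-1/2}$ to get exactly $1/F(1/8)=\sqrt 2$. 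The constant is entirely combinatorial and exact; it is not an asymptotic distributional limit.

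The gap you identify — upgrading ``$\E[\Reg_t]\ge V(\0,t)$'' to ``$\Reg_t \ge \sqrt 2\,V(\0,t)$ at some $t$, for every adaptive (possibly randomized) learner'' — is the entire content of the theorem, and your sketch does not close it. Two concrete problems. First, against the i.i.d.\ uniform adversary the learner's expected per-round loss is pinned to $1/2$, but the learner still controls the sign and magnitude of the martingale part of the regret increment, $(2p_t-1)(z_t-\tfrac12)$, and can correlate it negatively with the gap process; your ``reduce to the gap walk $G_t$'' step silently assumes the learner cannot exploit this, which is exactly what needs proof. Second, the quantitative claim that the essential supremum of $\Reg_t/\sqrt t$ over $t$ exceeds its mean ``by exactly the factor that yields $\sqrt 2$'' is not supported and is almost certainly not how the constant arises: for the uniform learner, $\Reg_t = \tfrac12|G_t|$ and the law of the iterated logarithm gives $\limsup \Reg_t/\sqrt t = \infty$, so the LIL overshoots $\sqrt 2$ by an unbounded margin rather than producing it; for a cleverer learner the fluctuations can be partially cancelled, and there is no half-normal scaling-limit computation that obviously leaves a residual factor of exactly $\sqrt 2$. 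Your plan might yield \emph{some} lower bound strictly greater than $1$ for a sufficiently naive learner, but establishing the bound uniformly over learners, and pinning down the sharp constant, requires the kind of exact game-tree recursion the paper carries out, not a Brownian heuristic.
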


\section{A New General Adaptive Algorithm}\label{sec:alg}
We study next how the random-horizon algorithm of Section \ref{subsec:rh} can be used when the horizon is entirely unknown and furthermore, for a much more general
class of online learning problems. 
In Theorem \ref{thm:minimax_ran}, we
proposed an algorithm that simply takes the conditional expectation of the distributions we would have played if the horizon were given.  Notice that
even though it is derived from the random horizon setting, it can still be used in any setting as an adaptive algorithm in the sense that it does not
require the horizon as a parameter. However, to use this algorithm, we should ask two questions:
What distribution should we use? 
And what can we say about the algorithm's performance for an arbitrary horizon instead of in expectation? 

%

As a first attempt, suppose we use a uniform distribution over $1,\ldots,T_0$, 
where $T_0$ is a huge integer. 
From what we observe in some numerical calculations, $\E[\P^T_t | T \geq t]$ tends to be a uniform distribution in this case. 
Clearly it cannot be a good algorithm if for each round, 
it just places equal weights for each action regardless of the actions' behaviors. 
In fact, one can verify that the exponential distribution (that is, $\Pr[T=t] \propto \alpha^t$ for some constant $0 <\alpha<1$) also does not work. 
These examples show that even though this algorithm gives us the optimal expected regret, 
it can still suffer a big regret for a particular trial of the game, 
which we definitely want to avoid.  

Nevertheless, it turns out that there does exist a family of distributions that can guarantee the regret to be of order $O(\sqrt{T})$ for any $T$. 
Moreover, this is true for a very general online learning problem
that includes the Hedge setting we have been discussing.
Before stating our results, we first formally describe this
general setting, which is sometimes called the
{\it online convex optimization} problem \cite{Zinkevich03, Shalevshwartz11}. 
Let $S$ be a compact convex set, and $\F$ be a set of convex functions defined on $S$.
On each round $t=1,\ldots,T$: 
(1) the learner chooses a point $\x_t \in S$; 
(2) the adversary chooses a loss function $f_t \in \F$; 
(3) the learner suffers loss $f_t(\x_t)$ for this round. 
The regret after $T$ rounds is defined by
$$ \Reg(\x_{1:T}, f_{1:T}) = \sum_{t=1}^T f_t(\x_t) 
- \min_{\x \in S} \sum_{t=1}^T f_t(\x). $$
It is clear that the Hedge problem is a special case of the above setting with
$S$ being the probability simplex, and $\F$ being a set of linear functions
defined by a point in the loss space, that is, $\F = \{f(\x) = \x\cdot \w : \w \in \LS\}$.
Similarly, to study the minimax algorithm we define the following $V_{S,\F}$ 
function of the multiset $\MM$ of loss functions we have encountered 
and the number of remaining rounds $r$:
\begin{align*}
V_{S,\F}(\MM, 0) &\triangleq -\min_{\x\in S} \sum_{f \in \MM} f(\x) ; \\
V_{S,\F}(\MM, r) &\triangleq \min_{\x \in S} \max_{f\in\F}  
\(f(\x) + V_{S,\F}(\MM \uplus \{f\}, r - 1)\),
\end{align*}
where $\uplus$ denotes multiset union.
We omit the subscript of $V_{S,\F}$ whenever there is no confusion.
Let $\x_t^T$ be the output of the minimax algorithm on round $t$.
In other words, $\x_t^T$ realizes the minimum in the definition of
$V(f_{1:t-1}, T-t+1)$. 
We will adapt the idea in Section~\ref{subsec:rh} and study the adaptive 
algorithm that outputs $\E_{T\sim Q}[\x_t^T | T \geq t] \in S$ 
on round $t$ for a distribution $Q$ on the horizon. 
One mild assumption needed is 
\begin{assumption}\label{assu:decrease}
$\forall \MM$ and $r>0$, $V(\MM, r) \geq V(\MM, 0)$ .
\end{assumption}
Roughly speaking, this assumption implies that the game is in
the adversary's favor: playing more rounds leads to greater regret.
It holds for the Hedge setting with basis vector loss space 
(see Property~\ref{pro:monotonicity_r} in the supplementary file).
In fact, it also holds as long as $\F$ contains the zero function 
$f_0(\x) \equiv 0$. To see this, simply observe that
\begin{align*}
V(\MM, r) &=  \adjustlimits\min_{\x \in S} \max_{f\in\F}  
\(f(\x) + V(\MM \uplus \{f\}, r - 1)\) \\
&\geq  V(\MM \uplus \{f_0\}, r - 1) \\
&\geq \ldots \geq V(\MM \uplus \{f_0,\ldots,f_0\}, 0)
= V(\MM, 0).
\end{align*}
So the assumption is mild and will hold for all the examples we consider.

Below, we first give a general upper bound on the regret that holds for 
any distribution and has no dependence on the choices of the adversary. 
After that we will show what the appropriate distributions are to make this 
 bound $O(\sqrt{T})$.

\begin{theorem}\label{thm:adaptive_gen_upper}
Let $\BV_t(\MM) = \E_{T\sim Q}[V(\MM, T - t + 1) | T \geq t]$
and $q_t = \Pr_{T\sim Q}[T = t | T \geq t]$. 
Suppose Assumption~\ref{assu:decrease} holds, and 
on round $t$ the learner chooses $\x_t = 
\E_{T\sim Q}[\x_t^T | T \geq t] $ where $\x_t^T$ is the output
of the minimax algorithm as described above.
Then for any $T_s$, the regret after $T_s$ rounds is at most
$$ \BV_1(\emptyset) + \sum_{t=1}^{T_s} q_t \BV_{t+1}(\emptyset).$$
\end{theorem}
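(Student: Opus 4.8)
The plan is to track a potential across rounds and telescope. Writing $\MM_t \triangleq f_{1:t}$ for the multiset of loss functions revealed in the first $t$ rounds (so $\MM_0 = \emptyset$), I would introduce the potential $\Phi_t \triangleq \BV_t(\MM_{t-1})$, which is well defined since $\x_t^T$ depends only on $\MM_{t-1}$; note $\Phi_1 = \BV_1(\emptyset)$. Because $-\min_{\x\in S}\sum_{f\in\MM_{T_s}} f(\x) = V(\MM_{T_s},0)$, the regret after $T_s$ rounds is exactly $\sum_{t=1}^{T_s} f_t(\x_t) + V(\MM_{T_s},0)$, so the whole task reduces to controlling $\sum_{t=1}^{T_s} f_t(\x_t)$ in terms of the $\Phi_t$.

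The key step is a one-step recursion for $\Phi_t$. For every horizon $T\geq t$, the minimax play $\x_t^T$ attains the minimum in $V(\MM_{t-1}, T-t+1) = \min_{\x\in S}\max_{f\in\F}\bigl(f(\x)+V(\MM_{t-1}\uplus\{f\}, T-t)\bigr)$, so evaluating the inner maximum at the adversary's actual choice $f_t\in\F$ gives $f_t(\x_t^T) + V(\MM_t, T-t) \leq V(\MM_{t-1}, T-t+1)$. I would take $\E_{T\sim Q}[\,\cdot\mid T\geq t]$ of both sides: on the left, convexity of $f_t$ and Jensen's inequality give $\E_{T\sim Q}[f_t(\x_t^T)\mid T\geq t] \geq f_t\bigl(\E_{T\sim Q}[\x_t^T\mid T\geq t]\bigr) = f_t(\x_t)$, and splitting $\E_{T\sim Q}[V(\MM_t, T-t)\mid T\geq t]$ over the events $\{T=t\}$ and $\{T\geq t+1\}$ turns it into $q_t V(\MM_t,0) + (1-q_t)\Phi_{t+1}$. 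Altogether this yields $f_t(\x_t) \leq (\Phi_t - \Phi_{t+1}) + q_t\bigl(\Phi_{t+1} - V(\MM_t,0)\bigr)$; summing over $t=1,\dots,T_s$ and telescoping the first term gives
\[
\Reg(\x_{1:T_s},f_{1:T_s}) \;\leq\; \Phi_1 + \bigl(V(\MM_{T_s},0)-\Phi_{T_s+1}\bigr) + \sum_{t=1}^{T_s} q_t\bigl(\Phi_{t+1}-V(\MM_t,0)\bigr).
\]

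To finish I need two facts. First, $V(\MM_{T_s},0) - \Phi_{T_s+1}\leq 0$: on the event $\{T\geq T_s+1\}$ one has $T-T_s\geq 1>0$, so Assumption~\ref{assu:decrease} gives $V(\MM_{T_s},T-T_s)\geq V(\MM_{T_s},0)$ pointwise, and hence $\Phi_{T_s+1}\geq V(\MM_{T_s},0)$ after taking the conditional expectation (this is the only use of the assumption; when $\Pr_{T\sim Q}[T\geq T_s+1]=0$ we have $q_{T_s}=1$ and this term simply drops). Second, I need the auxiliary inequality $V(\MM,r)\leq V(\MM,0)+V(\emptyset,r)$, which I would obtain from the slightly stronger $V(\MM\uplus\MM',r)\leq V(\MM,0)+V(\MM',r)$ proved by induction on $r$: the base case $r=0$ is the elementary bound $\min_{\x}\sum_{f\in\MM\uplus\MM'}f(\x)\geq \min_{\x}\sum_{f\in\MM}f(\x)+\min_{\x}\sum_{f\in\MM'}f(\x)$, and the inductive step plugs the hypothesis (applied to $\MM$ and $\MM'\uplus\{f\}$) into the $\min_{\x}\max_f$ that defines $V(\MM\uplus\MM',r)$. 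With this, conditioning on $\{T\geq t+1\}$ gives $\Phi_{t+1}-V(\MM_t,0) = \E_{T\sim Q}[V(\MM_t,T-t)-V(\MM_t,0)\mid T\geq t+1] \leq \E_{T\sim Q}[V(\emptyset,T-t)\mid T\geq t+1] = \BV_{t+1}(\emptyset)$, and substituting into the displayed bound produces exactly $\BV_1(\emptyset)+\sum_{t=1}^{T_s}q_t\BV_{t+1}(\emptyset)$.

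The main obstacle I anticipate is the bookkeeping in the potential recursion: keeping the $q_t$ and $1-q_t$ factors consistent when the conditional expectations are split, and, equally, recognizing that the residual term $\Phi_{t+1}-V(\MM_t,0)$ must be bounded using precisely the ``start fresh'' lemma $V(\MM,r)\leq V(\MM,0)+V(\emptyset,r)$ rather than any monotonicity or super/sub-additivity of $V(\emptyset,\cdot)$ in $r$, the latter of which is false (e.g.\ $V(\0,\cdot)\sim c_N\sqrt{\cdot}$ in the Hedge case is subadditive).
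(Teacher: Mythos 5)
Your argument is correct and essentially identical to the paper's: you use the same potential $\BV_t(f_{1:t-1})$, the same one-step inequality from the minimax definition, the same split of the conditional expectation over $\{T=t\}$ and $\{T\geq t+1\}$, the same ``start-fresh'' lemma $V(\MM_1\uplus\MM_2,r)-V(\MM_1,0)\leq V(\MM_2,r)$ (proved by the same induction on $r$), and Assumption~\ref{assu:decrease} at the terminal index, differing only in whether the lemma is applied per round or after summing. Nothing to fix.
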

To prove Theorem~\ref{thm:adaptive_gen_upper}, we
first show the following lemma.
\begin{lemma}\label{lem:separate}
For any $r\geq 0$ and multiset $\MM_1$ and $\MM_2$,
\begin{equation}\label{equ:separate}
V(\MM_1 \uplus \MM_2, r) - V(\MM_1, 0) \leq V(\MM_2, r). 
\end{equation}
\end{lemma}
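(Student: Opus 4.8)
The plan is to prove Lemma~\ref{lem:separate} by induction on $r$, using the recursive definition of $V_{S,\F}$ together with Assumption~\ref{assu:decrease}. The statement says that splitting off a sub-multiset $\MM_1$ of already-seen loss functions and charging it at ``zero remaining rounds'' can only make the value smaller, which intuitively reflects that the adversary loses nothing by lumping losses together rather than committing $\MM_1$ up front — a subadditivity-type property of the value function.

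\textbf{Base case.} For $r=0$, Eq.~\eqref{equ:separate} reads $V(\MM_1\uplus\MM_2,0)-V(\MM_1,0)\leq V(\MM_2,0)$. Unfolding the definition, this is $-\min_{\x\in S}\sum_{f\in\MM_1\uplus\MM_2}f(\x) + \min_{\x\in S}\sum_{f\in\MM_1}f(\x) \leq -\min_{\x\in S}\sum_{f\in\MM_2}f(\x)$, i.e.\ $\min_{\x}\big(\sum_{\MM_1}f(\x)+\sum_{\MM_2}f(\x)\big) \geq \min_{\x}\sum_{\MM_1}f(\x) + \min_{\x}\sum_{\MM_2}f(\x)$, which is just superadditivity of the pointwise minimum over a common domain. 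So the base case is immediate.

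\textbf{Inductive step.} Assume the claim for $r-1$ and all multisets. Expand the left side using the recursion: let $\x^\ast$ be the learner's minimax choice for $V(\MM_1\uplus\MM_2,r)$ and let $f^\ast$ be any loss function, so $V(\MM_1\uplus\MM_2,r) = \max_{f}\big(f(\x^\ast)+V(\MM_1\uplus\MM_2\uplus\{f\},r-1)\big)$. I would instead bound $V(\MM_1\uplus\MM_2,r)$ from above by plugging into the $\min_{\x}$ the optimal learner choice $\x'$ for $V(\MM_2,r)$: this gives $V(\MM_1\uplus\MM_2,r) \leq \max_f\big(f(\x') + V(\MM_1\uplus\MM_2\uplus\{f\},r-1)\big)$. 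For each $f$, apply the inductive hypothesis with $\MM_1$ and $\MM_2\uplus\{f\}$ to get $V(\MM_1\uplus\MM_2\uplus\{f\},r-1) \leq V(\MM_1,0) + V(\MM_2\uplus\{f\},r-1)$. Substituting, $V(\MM_1\uplus\MM_2,r) \leq V(\MM_1,0) + \max_f\big(f(\x') + V(\MM_2\uplus\{f\},r-1)\big) = V(\MM_1,0) + V(\MM_2,r)$, where the last equality holds because $\x'$ was chosen as the minimizer defining $V(\MM_2,r)$. Rearranging yields Eq.~\eqref{equ:separate}.

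\textbf{Main obstacle.} The one subtlety is whether Assumption~\ref{assu:decrease} is actually needed, or where it enters. Re-examining the argument above, the induction seems to close without it — but the suspicious point is the inductive step applied at $r=1$, where $V(\MM_2\uplus\{f\},0) = -\min_{\x}\sum_{g\in\MM_2\uplus\{f\}}g(\x)$ must combine correctly; that is handled by the base case, so no extra assumption appears. I expect the assumption is instead invoked in the proof of Theorem~\ref{thm:adaptive_gen_upper} itself (to telescope $\BV_t$ terms and drop negative-round contributions), not in Lemma~\ref{lem:separate}. Nonetheless I would double-check the edge case where $\MM_2$ is empty: then the claim reduces to $V(\MM_1,r)-V(\MM_1,0)\leq V(\emptyset,r)$, which is a genuine consequence of the same induction and does not need the assumption. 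So the real work is just being careful that the learner's choice $\x'$ appearing in the $\max_f$ bound is the \emph{same} point that realizes $V(\MM_2,r)$, so that the final equality is exact rather than an inequality in the wrong direction.
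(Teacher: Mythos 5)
Your proposal is correct and uses essentially the same argument as the paper: induct on $r$, handle $r=0$ by superadditivity of the pointwise minimum, and close the inductive step by applying the hypothesis to $V(\MM_1\uplus\MM_2\uplus\{f\},r-1)$ for each $f$ inside the $\min_{\x}\max_f$ recursion. Your observation that Assumption~\ref{assu:decrease} plays no role in this lemma (it is used later in the telescoping of Theorem~\ref{thm:adaptive_gen_upper}) also matches the paper.
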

\begin{proof}
If $r = 0$, then Eq.~\eqref{equ:separate} holds since
$$ \min_{\x\in S} \sum_{f \in \MM_1} f(\x) +
\min_{\x\in S} \sum_{f \in \MM_2} f(\x)
\leq \min_{\x\in S} \sum_{f \in \MM_1 \uplus \MM2} f(\x).$$
Now assume Eq.~\eqref{equ:separate} holds for $r-1$. 
By induction one has
\begin{align*}
&V(\MM_1 \uplus \MM_2, r) - V(\MM_1, 0) \\
=& \min_{\x \in S} \max_{f\in\F}  
\(f(\x) + V(\MM_1 \uplus \MM_2 \uplus \{f\}, r - 1)\) - V(\MM_1, 0) \\
\leq& \min_{\x \in S} \max_{f\in\F}  
\(f(\x) + V(\MM_2 \uplus \{f\}, r - 1)\) = V(\MM_2, r),
\end{align*}
concluding the proof.
\end{proof}

\begin{proof}[Proof of Theorem~\ref{thm:adaptive_gen_upper}]
By definition of $\x_t^T$, we have
\begin{align*}
&V(f_{1:t-1}, T-t+1) \\
=& \max_{f\in \F}(f(\x_t^T) + V(f_{1:t-1}\uplus \{f\}), T-t) \\
\geq& f_t(\x_t^T) + V(f_{1:t}, T-t).
\end{align*}
Therefore, by convexity and the fact that 
$\Pr[T=t' | T\geq t] = (1-q_t)\Pr[T=t' | T\geq t+1]$ for any $t'>t$,
the loss of the algorithm on round $t$ is
\begin{align*}
&f_t(\x_t) = f_t(\E[\x_t^T | T \geq t])
\leq \E[f_t(\x_t^T) | T \geq t] \\
\leq& \E[V(f_{1:t-1}, T-t+1) - V(f_{1:t}, T-t) | T \geq t] \\
=& \BV_t(f_{1:t-1}) - q_t V(f_{1:t}, 0) - (1-q_t)\BV_{t+1}(f_{1:t}) \\
\leq& \BV_t(f_{1:t-1}) - \BV_{t+1}(f_{1:t}) + q_t \BV_{t+1}(\emptyset),
\end{align*}
where the last equality holds because 
$\BV_{t+1}(f_{1:t}) - V(f_{1:t}, 0) = \E[V(f_{1:t}, T-t) - V(f_{1:t},0) | T \geq t + 1]
\leq \E[V(\emptyset, T-t) | T \geq t + 1] = \BV_{t+1}(\emptyset)$ 
by Lemma~\ref{lem:separate}.
We conclude the proof by summing up $f_t(\x_t)$ over $t=1,\ldots,T_s$ 
and pointing out that $\BV_{T_s+1}(f_{1:T_s}) = \E[V(f_{1:T_s}, T-T_s)| T \geq T_s + 1]
\geq \E[V(f_{1:T_s}, 0)| T \geq T_s + 1] = -\min_{\x\in S}\sum_{t=1}^{T_s} f_t(\x_t)$
by Assumption~\ref{assu:decrease}.
\end{proof}

As a direct corollary, we now show an appropriate choice of $Q$. 
We assume that the optimal regret under the fixed horizon setting
is of order $O(\sqrt{T})$. That is:
\begin{assumption}\label{assu:sqrt}
For any $T$, $V(\emptyset, T) \leq c_N\sqrt{T}$ for some constant $c_N$ that
might depend on $N$.
\end{assumption}
This is proven to be true in the literature for all the examples we consider, 
especially when $\F$ contains linear functions.
\begin{theorem}\label{thm:adaptive_upper}
Under Assumption~\ref{assu:sqrt} and the same conditions of 
Theorem \ref{thm:adaptive_gen_upper}, 
if $\Pr[T = t] \propto 1/t^d$ where $d > \frac{3}{2}$ is a constant, 
then for any $T_s$, the regret after $T_s$ rounds is at most
$$ \frac{\Gamma(d-\frac{3}{2})}{\Gamma(d)}(d-1)^2c_N\sqrt{\pi T_s} + o(\sqrt{T_s}),$$
where $\Gamma$ is the gamma function.
Choosing $d \approx 2.35$ approximately minimizes the main term in the bound, 
leading to regret approximately $3c_N\sqrt{T_s} + o(\sqrt{T_s})$.
\end{theorem}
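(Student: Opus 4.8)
The plan is to invoke Theorem~\ref{thm:adaptive_gen_upper} and then perform a routine asymptotic analysis of the resulting bound for the specific choice $\Pr[T=t] = t^{-d}/\zeta(d)$. By Theorem~\ref{thm:adaptive_gen_upper} the regret after $T_s$ rounds is at most $\BV_1(\emptyset) + \sum_{t=1}^{T_s} q_t\,\BV_{t+1}(\emptyset)$, where $\BV_t(\emptyset) = \E_{T\sim Q}[V(\emptyset, T-t+1)\mid T\ge t]$ and $q_t = \Pr[T=t\mid T\ge t]$. Applying Assumption~\ref{assu:sqrt} inside the conditional expectation gives $\BV_t(\emptyset) \le c_N\,\E[\sqrt{T-t+1}\mid T\ge t]$, so the whole problem reduces to estimating the two quantities $q_t$ and $\E[\sqrt{T-t+1}\mid T\ge t]$.

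First I would estimate the tail $\Pr[T\ge t] = \frac{1}{\zeta(d)}\sum_{s\ge t} s^{-d}$ by comparing the sum with $\int_t^\infty x^{-d}\,dx = \frac{t^{1-d}}{d-1}$; monotonicity of $x\mapsto x^{-d}$ gives $\sum_{s\ge t}s^{-d} = \frac{t^{1-d}}{d-1}(1+o(1))$, hence $q_t = \frac{t^{-d}}{\sum_{s\ge t}s^{-d}} = \frac{d-1}{t}(1+o(1))$. Next, for the numerator of $\E[\sqrt{T-t+1}\mid T\ge t]$ I would approximate $\sum_{s\ge t} s^{-d}\sqrt{s-t+1}$ by $\int_t^\infty x^{-d}\sqrt{x-t}\,dx$; the substitution $x = tu$ rewrites this as $t^{3/2-d}\int_1^\infty u^{-d}\sqrt{u-1}\,du$, and the further substitution $u = 1/v$ identifies the integral as the Beta function $B(d-\tfrac32,\tfrac32) = \frac{\Gamma(d-3/2)\Gamma(3/2)}{\Gamma(d)}$, which converges precisely because $d > \tfrac32$. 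Dividing by the tail estimate and using $\Gamma(3/2) = \sqrt\pi/2$ yields $\E[\sqrt{T-t+1}\mid T\ge t] = \frac{(d-1)\Gamma(d-3/2)}{2\Gamma(d)}\sqrt{\pi t}\,(1+o(1))$. Substituting both estimates into $\sum_{t=1}^{T_s} q_t\,\BV_{t+1}(\emptyset)$ and using $\sum_{t\le T_s} t^{-1/2} = 2\sqrt{T_s}\,(1+o(1))$ produces the main term $\frac{\Gamma(d-3/2)}{\Gamma(d)}(d-1)^2 c_N\sqrt{\pi T_s}$, while $\BV_1(\emptyset)$ is $O(1)$ in $T_s$ and is absorbed into the $o(\sqrt{T_s})$ remainder.

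The main obstacle is the bookkeeping of error terms: I must show that the aggregate of all the sum-versus-integral discrepancies, once weighted by $q_t\sim (d-1)/t$ and summed over $t\le T_s$, is genuinely $o(\sqrt{T_s})$ rather than merely $O(\sqrt{T_s})$ with a worse constant. This requires quantitative comparison estimates (Euler--Maclaurin, or elementary monotonicity bounds on $x\mapsto x^{-d}$ and $x\mapsto x^{-d}\sqrt{x-t}$) that are uniform enough in $t$; the delicate regime is small $t$, where the integral approximations are crude, but there the contribution is also small, so splitting the sum at some $t_0\to\infty$ growing slowly with $T_s$ should close the gap. A final, purely numerical step is the optimization of $h(d) = \frac{(d-1)^2\Gamma(d-3/2)}{\Gamma(d)}$: setting $h'(d)=0$, an equation in digamma functions, gives $d\approx 2.35$, at which $h(d)\sqrt\pi\approx 3$, yielding the stated regret $\approx 3 c_N\sqrt{T_s} + o(\sqrt{T_s})$.
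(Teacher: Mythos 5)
Your proposal is correct and follows essentially the same route as the paper: invoke Theorem~\ref{thm:adaptive_gen_upper}, bound $q_t\lesssim (d-1)/t$, estimate $\BV_{t+1}(\emptyset)$ via the Beta-function integral $\int_1^\infty u^{-d}\sqrt{u-1}\,du = \Gamma(d-\tfrac32)\Gamma(\tfrac32)/\Gamma(d)$, sum over $t$, and optimize $d$. The one place where the paper is more careful than your sketch is exactly the spot you flag as the ``main obstacle'': the paper does not split the outer $t$-sum at a slowly growing threshold but instead handles the inner sum $\sum_{k\ge 1}\sqrt{k}/(t+k)^d$ uniformly in $t$ via its Claim~\ref{clm:int}, explicitly splitting at the peak $k\approx t/(2d-1)$ of the non-monotone integrand $\sqrt{k}/(t+k)^d$ (a point your Euler--Maclaurin remark implicitly needs but does not spell out); this yields the clean bound $\BV_{t+1}(\emptyset)\le (d-1)c_N\sqrt{\pi}\,\tfrac{\Gamma(d-3/2)}{2\Gamma(d)}\sqrt{t}+o(\sqrt{t})$ for every $t$, after which the final summation is immediate.
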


Theorem \ref{thm:adaptive_upper} tells us that pretending that the horizon is drawn from the distribution $\Pr[T = t] \propto 1/t^d\ (d>3/2)$ can always achieve low regret,
even if the actual horizon $T_s$ is chosen adversarially.  
Also notice that the constant $3$ in the bound for the term $c_N\sqrt{T_s}$ is less than the one for the doubling trick with the fixed horizon optimal algorithm, 
which is $2 + \sqrt{2}$ \cite{CesabianchiLu06}.  
We will see in Section \ref{subsec:ball_game} an experiment 
showing that our algorithm performs much better than the doubling trick.

It is straightforward to apply our new algorithm to different instances
of the online convex optimization framework.
Examples include Hedge with basis vector loss space, predicting with 
expert advice \cite{CesabianchiFrHeHaScWa97},
online linear optimization within an $\ell_2$ ball \cite{AbernethyBaRaTe08}
or an $\ell_\infty$ ball \cite{McMahanAb13}. 
These are examples where minimax algorithms for fixed horizon are
already known. In theory, however, our algorithm is still applicable 
when the minimax algorithm is unknown, such as Hedge with the general loss
space $[0,1]^N$.

\section{Implementation and Applications}\label{sec:gen}
In this section, we discuss the implementation issue of our new algorithm,
and also show that the idea of using a ``pretend prior distribution'' 
is much more applicable in online learning than we have discussed so far. 

\subsection{Closed Form of the Algorithm}\label{subsec:ball_game}
Among the examples listed at the end of Section~\ref{sec:alg}, 
we are especially interested in
online linear optimization within an $\ell_2$ ball since our 
algorithm enjoys an explicit closed form in this case.
Specifically, we consider the following problem 
(all the norms are $\ell_2$ norms):
take $S= \{\x \in \R^N : \|\x\| \leq 1\}$,
and $\F = \{f(\x) = \x \cdot \w : \w \in S\}$.
In other words, the adversary also chooses a point in $S$ on each round,
which we denote by $\w_t$. 
\citet{AbernethyBaRaTe08} showed a simple but exact minimax optimal algorithm 
for the fixed horizon setting (for $N>2$): on each round $t$, choose
\begin{equation}\label{equ:ball_game_minimax}
\x_t^T = \left. -\W_{t-1} \middle/ \sqrt{\|\W_{t-1}\|^2 + (T-t+1)} \right.,
\end{equation}
where $\W_t = \sum_{t'=1}^t \w_{t'}$.
This strategy guarantees the regret to be at most $\sqrt{T}$.
To make this algorithm adaptive, we again assign a distribution over the horizon.
However, in order to get an explicit form for $\E[\x_t^T | T\geq t]$,
a continuous distribution on $T$ is necessary. 
It does not seem to make sense at first glance since the horizon is always an integer,
but keep in mind that the random variable $T$ is merely an artifact of our algorithm,
and Eq.~\eqref{equ:ball_game_minimax} is well defined with $T\geq t$ 
being a real number.
As long as the output of the learner is in the set $S$, our algorithm is valid.
The analysis for our algorithm also holds with minor changes.
Specifically, we show the following:
\begin{theorem}\label{thm:ball_game}
Let $T \geq 1$ be a continuous random variable with probability density $f(T) \propto 1/T^2$. 
If the learner chooses $\x_t = \E[\x_t^T | T\geq t]$ on round t, 
where $\x_t^T$ is defined by Eq. \eqref{equ:ball_game_minimax},
then the regret after $T_s$ rounds is at most $\pi \sqrt{T_s} + o(\sqrt{T_s})$ for any $T_s$.
Moreover, $\x_t$ has the following explicit form
\begin{equation}\label{equ:ball_game_adaptive}
\x_t 
= \begin{cases}
\(\frac{t\cdot \tanh^{-1}\(\sqrt{1-t/c}\)}{(c-t)^{3/2}}-\frac{\sqrt{c}}{c-t}\) \W_{t-1} 
&\text{\quad if $c\neq t$} \\
- \frac{2t}{3c^{3/2}}\W_{t-1} &\text{\quad else,}
\end{cases}
\end{equation} 
where $c = 1 + \|\W_{t-1}\|^2$.
\end{theorem}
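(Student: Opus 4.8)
The plan is to split the theorem into two essentially independent parts: the regret bound, which follows by specializing the machinery of Theorem~\ref{thm:adaptive_gen_upper} (or Theorem~\ref{thm:adaptive_upper}) to the $\ell_2$-ball game with a continuous ``pretend'' horizon, and the closed-form computation of $\x_t=\E[\x_t^T\mid T\ge t]$, which is a direct integral calculation. I would handle the regret bound first. Since the fixed-horizon minimax value here is $V(\emptyset,T)\le\sqrt{T}$ (so Assumption~\ref{assu:sqrt} holds with $c_N=1$), and the zero function $f_0\equiv 0$ lies in $\F$ (take $\w=\0$), Assumption~\ref{assu:decrease} holds as well, so Theorem~\ref{thm:adaptive_gen_upper} applies verbatim once we check that its proof goes through with a continuous $T$: the only ingredients are Jensen's inequality (convexity of $f_t$), the one-step unrolling of the $V$-recursion at the minimax point $\x_t^T$, Lemma~\ref{lem:separate}, and the identity $\Pr[T=t'\mid T\ge t]=(1-q_t)\Pr[T=t'\mid T\ge t+1]$; in the continuous case the ``atom'' $q_t$ is replaced by the density mass on $[t,t+1)$ and the telescoping argument is unchanged. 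With $f(T)\propto 1/T^2$ one then estimates $\BV_1(\emptyset)+\sum_t q_t\BV_{t+1}(\emptyset)$ exactly as in the proof of Theorem~\ref{thm:adaptive_upper} with $d=2$, $c_N=1$: $\BV_{t}(\emptyset)=\E[\sqrt{T-t+1}\mid T\ge t]$ and one integrates $\int_t^\infty \sqrt{T-t}\,T^{-2}\,dT\big/\int_t^\infty T^{-2}\,dT$, getting a leading term of the form $(\text{const})\sqrt{t}$; plugging $d=2$ into $\frac{\Gamma(d-3/2)}{\Gamma(d)}(d-1)^2c_N\sqrt{\pi T_s}=\frac{\Gamma(1/2)}{\Gamma(2)}\cdot 1\cdot\sqrt{\pi T_s}=\sqrt{\pi}\cdot\sqrt{\pi T_s}=\pi\sqrt{T_s}$ gives the claimed bound, modulo the $o(\sqrt{T_s})$ error from the discrete-vs-continuous bookkeeping.

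For the closed form, I would write $T=t s$ or, more conveniently, parametrize directly: conditioned on $T\ge t$, the density of $T$ is $\propto 1/T^2$ on $[t,\infty)$, normalized to $t/T^2$. Then
$$
\x_t=\E[\x_t^T\mid T\ge t]
= -\W_{t-1}\int_t^\infty \frac{t}{T^2}\cdot\frac{dT}{\sqrt{\|\W_{t-1}\|^2+(T-t+1)}}
= -t\,\W_{t-1}\int_t^\infty \frac{dT}{T^2\sqrt{c+T-t}},
$$
where $c=1+\|\W_{t-1}\|^2$. So everything reduces to evaluating $I(t,c)=\int_t^\infty \frac{dT}{T^2\sqrt{c+T-t}}$. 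Substituting $u=\sqrt{c+T-t}$, so $T=t+u^2-c$ and $dT=2u\,du$, turns this into $\int_{\sqrt{c}}^\infty \frac{2\,du}{(t-c+u^2)^2}$, a rational integral in $u$ whose antiderivative is standard: for $t-c>0$ it involves $\arctan$, and for $t-c<0$ (i.e. $c>t$) it involves $\arctanh$ (equivalently a log), which is exactly the dichotomy $c\ne t$ vs the boundary case in Eq.~\eqref{equ:ball_game_adaptive}; at $c=t$ the integrand becomes $2/u^4$ and integrates elementarily to give the $-\frac{2t}{3c^{3/2}}$ coefficient. Carrying out the partial-fraction / reduction-formula evaluation of $\int \frac{du}{(a^2+u^2)^2}$ (with $a^2=t-c$ when $t>c$, and the hyperbolic analogue when $c>t$), evaluating between $u=\sqrt{c}$ and $u=\infty$, and multiplying by $-t\,\W_{t-1}$ yields the stated expression; I would double-check the $c>t$ branch gives the $\tanh^{-1}(\sqrt{1-t/c})$ form by noting $1-t/c = (c-t)/c$ and that $\tanh^{-1}\sqrt{(c-t)/c}$ is the boundary evaluation of the $\arctanh$ antiderivative at $u=\sqrt c$ (the contribution at $u=\infty$ vanishing).

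I should also verify the side condition that $\x_t\in S$, i.e.\ $\|\x_t\|\le 1$, since the analysis is only valid if the learner's output is feasible: this follows because $\x_t$ is a convex combination (an average over $T$) of the points $\x_t^T=-\W_{t-1}/\sqrt{\|\W_{t-1}\|^2+(T-t+1)}$, each of which has norm $\|\W_{t-1}\|/\sqrt{\|\W_{t-1}\|^2+(T-t+1)}<1$, and $S$ is convex.

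The main obstacle I expect is bookkeeping rather than conceptual: first, making precise the ``minor changes'' needed to run the proof of Theorem~\ref{thm:adaptive_gen_upper} with a continuous horizon and then converting the resulting integral bound into the clean $\pi\sqrt{T_s}+o(\sqrt{T_s})$ form (handling the lower limit of integration, the $T\ge t$ truncation, and the asymptotics of $\int_t^\infty\sqrt{T-t}\,T^{-2}dT$ carefully); and second, getting the sign and the two cases in the closed-form integral exactly right — in particular keeping track of which branch ($\arctan$ vs $\arctanh$) corresponds to $c>t$ versus $c<t$ and checking continuity at $c=t$ via a limit, so that Eq.~\eqref{equ:ball_game_adaptive} matches on the nose.
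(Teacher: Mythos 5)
Your proposal is correct and follows essentially the same path as the paper. The paper's proof is a direct potential argument: it defines $\Phi_t^T = \sqrt{\|\W_{t-1}\|^2 + (T-t+1)}$, invokes Abernethy et al.'s inequality $\x_t^T\cdot\w_t \le \Phi_t^T-\Phi_{t+1}^T$, sets $U_t=\E[\Phi_t^T\mid T\ge t]$, telescopes using the density factorization, and bounds the residuals exactly as in the proof of Theorem~\ref{thm:adaptive_upper} with $d=2$ (noting $q_t\le (d-1)/t$ still holds), arriving at $\frac{\Gamma(d-3/2)}{\Gamma(d)}(d-1)^2\sqrt{\pi T_s}+o(\sqrt{T_s})=\pi\sqrt{T_s}+o(\sqrt{T_s})$; the closed form is then a ``direct calculation'' that the paper does not spell out but which your $u=\sqrt{c+T-t}$ substitution reproduces correctly, including the $c=t$ limit $-\tfrac{2t}{3c^{3/2}}\W_{t-1}$.

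The one place your framing is slightly loose is the claim that Theorem~\ref{thm:adaptive_gen_upper} ``applies verbatim once we check that its proof goes through with a continuous $T$.'' It cannot apply literally, because the abstract $V(\MM,r)$ is defined only for integer $r$, so ``the one-step unrolling of the $V$-recursion'' and Lemma~\ref{lem:separate} have no meaning for real $T$. What actually carries the argument is that in this game the concrete potential $\Phi_t^T$ is well-defined for real $T$ and plays the role of $V$: the Abernethy et al.\ inequality $\x_t^T\cdot\w_t\le\Phi_t^T-\Phi_{t+1}^T$ replaces the one-step recursion, and the elementary bound $\sqrt{a+b}-\sqrt a\le\sqrt b$ replaces Lemma~\ref{lem:separate}. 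The paper makes this substitution explicit; your proposal gestures at it (``minor changes'') but should name $\Phi_t^T$ as the object that survives the passage to continuous $T$, since this is precisely why the ball game admits this treatment when an arbitrary instance of the abstract framework would not. Aside from that, the ingredient checks (Assumption~\ref{assu:decrease} via $\0\in S$, Assumption~\ref{assu:sqrt} with $c_N=1$, Jensen, feasibility $\|\x_t\|\le1$ by convexity of $S$) and the evaluation of the constant at $d=2$ are all correct.
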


The algorithm we are proposing in Eq. \eqref{equ:ball_game_adaptive} looks quite inexplicable 
if one does not realize that it comes from the expression $\E[\x_t^T | T\geq t]$ with an appropriate distribution. 
Yet the algorithm not only enjoys a low theoretic regret bound as shown in Theorem \ref{thm:ball_game},
but also achieves very good performance in simulated experiments.

To show this, we conduct an experiment that compares the regrets of four algorithms
at any time step within 1000 rounds 
against an adversary that chooses points in $S$ uniformly at random ($N=10$).
The results are shown in Figure \ref{fig:ballgame}, 
where each data point is the maximum regret over 1000 randomly generated adversaries
for the corresponding algorithm and horizon.
The four algorithms are: 
the minimax algorithm in Eq. \eqref{equ:ball_game_minimax} (OPT);
the one we proposed in Theorem \ref{thm:ball_game} (DIST);
online gradient descent, a general algorithm for online optimization (see \citealp{Zinkevich03}) (OGD);
and the doubling trick with the minimax algorithm (DOUBLE).
Note that OPT is not really an adaptive algorithm: 
it ``cheats'' by knowing the horizon $T=1000$ in advance, 
and thus performs best at the end of the game.
We include this algorithm merely as a baseline.
Figure \ref{fig:ballgame} shows that our algorithm DIST achieves consistently much lower regret than any other adaptive algorithm,
including OGD which seems to enjoy a better constant in the regret bound
($2\sqrt{2T_s}$, see \citealp{Zinkevich03}).
Moreover, for the first 450 rounds or so, our algorithm performs even better
than OPT, implying that using the optimal algorithm with a large guess on the
horizon is inferior to our algorithm.
Finally, we remark that although the doubling trick is widely applicable in theory,
in experiments it is beaten by most of the other algorithms.

\begin{figure}[ht]
\vskip 0.2in
\begin{center}
\centerline{\includegraphics[scale=0.45]{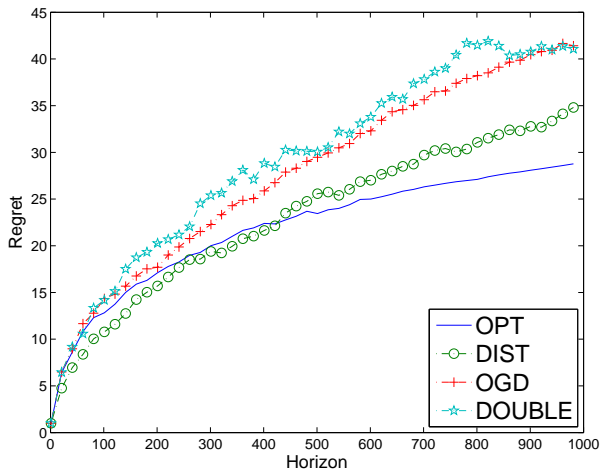}}
\caption{Comparison of four algorithms.}
\label{fig:ballgame}
\end{center}
\vskip -0.2in
\end{figure} 

\subsection{Randomized Play and Efficient Implementation}\label{subsec:random_play}
Implementation is an issue for our algorithm when there is no closed form 
for $\E[\x_t^T | T \geq t]$, which is usually the case.
One way to address this problem is to compute the sum of the first 
sufficient number of  terms in the series, which can be a good estimate 
since the weight for each term decreases rapidly. 

However, there is another more natural way to deal with the implementation
issue when we are in a similar setting but allowed to play randomly.
Specifically, consider a modified Hedge setting where on each round $t$,
the learner can bet on one and only one action $I_t$,  
and then the loss vector $\Z_t \in [0,1]^N$ is revealed with the learner suffering
loss $Z_{t, I_t}$ for this round.
It is well known that in this kind of problem, randomization is necessary
for the learner to achieve sub-linear regret.  That is, $I_t$ is a random
variable and $\Z_t$ is decided without knowing the actual draw of $I_t$.
In addition, suppose $\P_t$, the conditional distribution of $I_t$ given 
the past, only depends on $\Z_{1:t-1}$, and the learner 
achieves sub-linear regret in the usual Hedge setting 
(sometimes called {\it pseudo-regret}):
\begin{equation}\label{equ:oblivious}
\sum_{t=1}^{T} \P_t \cdot \Z_t  - \min_i M_{T,i} \leq c_N\sqrt{T} 
\end{equation}
(recall $\M_t = \sum_{t'=1}^t \Z_{t'}$)
for any $\Z_{1:T}$ and a constant $c_N$.
Then the learner also achieves sub-linear regret with high probability 
in the randomized setting. 
That is, with probability at least $1-\delta$, the actual regret satisfies:
\begin{equation}\label{equ:nonoblvious}
\sum_{t=1}^{T} Z_{t, I_t}  - \min_i M_{T,i} \leq c_N\sqrt{T} 
+ \sqrt{\frac{T}{2}\ln \frac{1}{\delta}}.  
\end{equation}
We refer the interested reader to Lemma 4.1 of \citet{CesabianchiLu06}
for more details.

Therefore, in this setting we can implement our algorithm in an efficient
way: on round $t$, first draw a horizon $T\geq t$ according to distribution
$\Pr[T=t'] \propto 1/t'^d$, then draw $I_t$ according to $\P_t^T$.
It is clear that the marginal distribution of $I_t$ of this process is exactly
$\E[\P_t^T | T \geq t]$. 
Hence, Eq.~\eqref{equ:oblivious} is satisfied by Theorem~\ref{thm:adaptive_upper}
and as a result Eq.~\eqref{equ:nonoblvious} holds. 

\subsection{Combining with the FPL algorithm}\label{subsec:FPL}
Even if we have an efficient randomized implementation,
or sometimes even have a closed form of the output, 
it is still too constrained if we can only apply our technique 
to minimax algorithms since they are usually difficult to derive
and sometimes even inefficient to implement. 
It turns out, however,  that the ``pretend prior distribution'' idea
is applicable for many other non-minimax algorithms, 
which we will discuss from this section on.

Continuing the randomized setting discussed in the previous section,
we study the well-known ``follow the perturbed leader (FPL)'' algorithm
\cite{KalaiVe05},
which chooses $I_t \in \arg\min_i (M_{t-1,i} + \xi_{t, i})$
where $\bxi_t \in R^N$ is a random variable drawn from some distribution.
This distribution sometimes requires the horizon $T$ as a parameter.
If this is the case, applying our technique would have a simple 
{\it Bayesian interpretation}: put a prior distribution on an unknown parameter
of another distribution.
Working out the marginal distribution of $\bxi_t$ would then give
an adaptive variant of FPL.

For simplicity, consider drawing $\bxi_t^T$ uniformly at random from the
hypercube $[0, \Delta_T]^N$ (see Chapter 4.3 of \citealp{CesabianchiLu06}).
If $\Delta_T = \sqrt{TN}$, then the pseudo-regret is upper bounded
by $2\sqrt{TN}$ (whose dependence on $N$ is not optimal).
Now again let $T \geq 1$ be a continuous random variable with
probability density $f(T) \propto 1/T^d (d > 3/2)$, and $\bxi_t$ be obtained
by first drawing $T$ given $T \geq t$, and then drawing a point
uniformly from $[0, \Delta_T]^N$. We show the following:
\begin{lemma}\label{lem:FPL}
If $\Delta_t = \sqrt{btN}$ for some constant $b>0$,
the marginal density function of $\bxi_t$ is
\begin{equation}\label{equ:FPLdensity}
f_t(\bxi) \propto \begin{cases}
0 &\text{\quad if $\min_i \xi_i < 0$} \\
\min\left\{1,  \(\frac{\Delta_t}{\|\bxi\|_\infty}\)^{2d-2+N}\right\} &\text{\quad else.}
\end{cases}
\end{equation}
The normalization factor is $\frac{d-1}{d-1+N/2} \Delta_t^{-N}$.
\end{lemma}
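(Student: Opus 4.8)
The plan is to obtain $f_t$ directly by integrating the auxiliary horizon $T$ out of the joint law of $(T,\bxi_t)$. Conditioned on $T\geq t$, the density of $T$ is $g_t(T)=T^{-d}\big/\!\int_t^\infty s^{-d}\,ds=(d-1)\,t^{d-1}T^{-d}$ for $T\geq t$ (and $0$ otherwise), where we use $d>1$, which follows from $d>3/2$. Given $T$, the vector $\bxi_t$ is uniform on $[0,\Delta_T]^N$ with $\Delta_T=\sqrt{bTN}$, so its conditional density is $\Delta_T^{-N}\,\mathbf{1}[\bxi\in[0,\Delta_T]^N]=(bTN)^{-N/2}\,\mathbf{1}[\bxi\in[0,\Delta_T]^N]$. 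Integrating out $T$,
\[
f_t(\bxi)=(d-1)\,t^{d-1}(bN)^{-N/2}\int_t^\infty T^{-d-N/2}\,\mathbf{1}[\bxi\in[0,\Delta_T]^N]\,dT .
\]

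First I would handle the case $\min_i\xi_i<0$: then $\bxi\notin[0,\Delta_T]^N$ for every $T$, the integrand is identically $0$, and $f_t(\bxi)=0$, as claimed. For $\min_i\xi_i\geq 0$ the one step that takes a moment of thought --- and really the only one --- is that, because $\Delta_T$ is strictly increasing in $T$, the $N$-dimensional condition $\bxi\in[0,\Delta_T]^N$ collapses to the scalar inequality $\|\bxi\|_\infty\leq\Delta_T$, i.e.\ $T\geq M^2/(bN)$ where $M:=\|\bxi\|_\infty$. Using $\Delta_t^2=btN$, this threshold exceeds $t$ precisely when $M>\Delta_t$, so the integral above runs from $a:=\max\{t,\,M^2/(bN)\}$ to $\infty$; and $\int_a^\infty T^{-d-N/2}\,dT=a^{-(d-1+N/2)}/(d-1+N/2)$, which converges since $d-1+N/2>0$ (again guaranteed by $d>3/2$).

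It then only remains to substitute and simplify, splitting on whether $M\leq\Delta_t$ or $M>\Delta_t$. If $M\leq\Delta_t$ the lower limit is $a=t$, and $(d-1)t^{d-1}(bN)^{-N/2}\cdot t^{-(d-1+N/2)}/(d-1+N/2)=\frac{d-1}{d-1+N/2}(btN)^{-N/2}=\frac{d-1}{d-1+N/2}\Delta_t^{-N}$, which is the asserted value with the $\min$ equal to $1$. If $M>\Delta_t$ the lower limit is $a=M^2/(bN)$; substituting $a^{-(d-1+N/2)}=(bN)^{d-1+N/2}M^{-(2d-2+N)}$ and $t^{d-1}(bN)^{d-1}=(btN)^{d-1}=\Delta_t^{2d-2}$ gives
\[
f_t(\bxi)=\frac{d-1}{d-1+N/2}\,\Delta_t^{2d-2}M^{-(2d-2+N)}=\frac{d-1}{d-1+N/2}\,\Delta_t^{-N}\Big(\frac{\Delta_t}{\|\bxi\|_\infty}\Big)^{2d-2+N},
\]
and here $(\Delta_t/M)^{2d-2+N}<1$, so again this matches \eqref{equ:FPLdensity}. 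Both sub-cases thus yield $f_t(\bxi)=\frac{d-1}{d-1+N/2}\Delta_t^{-N}\cdot\min\{1,(\Delta_t/\|\bxi\|_\infty)^{2d-2+N}\}$ on the positive orthant, which simultaneously proves the proportionality in \eqref{equ:FPLdensity} and reads off the normalization constant $\frac{d-1}{d-1+N/2}\Delta_t^{-N}$; since $f_t$ was computed as an honest density, no separate check that it integrates to $1$ is needed. In short, the lemma is a direct calculation whose only subtle point is the reduction of cube membership to a threshold on $T$, everything else being bookkeeping with the relations $\Delta_T^2=bTN$ and the convergence conditions implied by $d>3/2$.
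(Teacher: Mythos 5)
Your proof is correct and follows essentially the same route as the paper's: integrate out the auxiliary horizon $T$, reduce the cube-membership indicator to the threshold $T \geq \max\{t, \|\bxi\|_\infty^2/(bN)\}$, and evaluate the resulting tail integral (your $a$ is the paper's $\bar{t}$). The case split and the final simplification match the paper's calculation exactly.
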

\begin{theorem}\label{thm:FPL}
Suppose on round $t$, the learner chooses
$$ I_t \in \arg\min_i (M_{t-1,i} + \xi_{t, i}) ,$$
where $\bxi_t$ is a random variable with density function~\eqref{equ:FPLdensity}.
Then the pseudo-regret after $T_s$ rounds is at most
$$ \(\frac{d-1}{\sqrt{b}(d-1/2)} + \frac{\sqrt{b}(d-1)^2}{d-3/2}\)2\sqrt{T_sN} .$$
Choosing $b = \frac{d-3/2}{(d-1/2)(d-1)}$ and $d = 1 + \frac{\sqrt{3}}{2}$ 
minimizes the main term in the bound, leading to about $4.6\sqrt{T_sN}$.
\end{theorem}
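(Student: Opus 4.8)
The plan is to run the argument of Theorem~\ref{thm:adaptive_gen_upper} with the minimax value function replaced by a follow-the-perturbed-leader ``regret-to-go'' functional, and with Lemma~\ref{lem:separate} and Assumption~\ref{assu:decrease} replaced by the two standard ingredients of FPL analysis: a one-step look-ahead (be-the-perturbed-leader) inequality and a stability estimate for uniform perturbations. By Lemma~\ref{lem:FPL}, drawing $\bxi_t$ from density~\eqref{equ:FPLdensity} is the same as the two-stage process ``draw a real $T\geq t$ with density $\propto T^{-d}$, then draw $\bxi_t^T\sim\mathrm{Unif}[0,\Delta_T]^N$ with $\Delta_T=\sqrt{bTN}$'', so it suffices to analyze the learner whose play on round $t$ is the marginal $\P_t=\E[\P_t^T\mid T\geq t]$, where $\P_t^T$ is the law of $\arg\min_i(M_{t-1,i}+\xi_{t,i}^T)$. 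I will use that $q_t:=\Pr[T\in[t,t{+}1)\mid T\geq t]=1-(1+1/t)^{1-d}=(d-1)/t+O(t^{-2})$ and that, for the density $\propto T^{-d}$, $\E[\sqrt T\mid T\geq t]=\tfrac{d-1}{d-3/2}\sqrt t$ and $\E[1/\sqrt T\mid T\geq t]=\tfrac{d-1}{d-1/2}/\sqrt t$.

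\emph{Step 1 (stability).} Let $\hat I_t^T=\arg\min_i(M_{t,i}+\xi_{t,i}^T)$ be the look-ahead choice that also uses the just-revealed loss $\Z_t$, and let $\hat I_t$ be its horizon average. Since $\mathrm{Unif}[0,\Delta_T]^N$ and its translate by $\Z_t\in[0,1]^N$ differ in total variation by at most $N/\Delta_T$, so do the laws of $I_t^T$ and $\hat I_t^T$; hence $\E[Z_{t,I_t}]\le\E[Z_{t,\hat I_t}]+\E[N/\Delta_T\mid T\geq t]$. Summing over $t\leq T_s$ and using the moment formula for $1/\sqrt T$ together with $\sum_{t\leq T_s}t^{-1/2}=2\sqrt{T_s}+O(1)$ contributes exactly $\tfrac{d-1}{\sqrt b\,(d-1/2)}\,2\sqrt{T_sN}+o(\sqrt{T_s})$, the first term of the claimed bound.

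\emph{Step 2 (look-ahead regret).} It remains to bound $\sum_{t\leq T_s}\E[Z_{t,\hat I_t}]-\min_i M_{T_s,i}$. For a fixed horizon parameter $T$, let $\hat G_T(\M,r)$ be the worst-case (over the adversary's losses) regret of the look-ahead FPL with perturbation scale $\Delta_T$ run for $r$ more rounds starting from cumulative loss $\M$, with $\hat G_T(\M,0)=-\min_i M_i$. This functional has: (i) the one-step recursion $\hat G_T(\M,r)=\sup_{\Z\in[0,1]^N}\bigl(\E_{\xi^T}[Z_{j}]+\hat G_T(\M+\Z,r-1)\bigr)$, where $j=\arg\min_i\bigl((\M+\Z)_i+\xi_i\bigr)$; (ii) monotonicity in $r$ (append a zero-loss round); and, crucially, (iii) the \emph{state-independent} bound $\hat G_T(\M,r)+\min_i M_i\le\E[\max_i\xi_i^T]\le\Delta_T$, which is exactly the be-the-perturbed-leader inequality, whose guarantee depends only on the range of the perturbation and not on $\M$. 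Applying (i) with the actual $\Z_t$ gives $\E_{\xi^T}[Z_{t,\hat I_t^T}]\le\hat G_T(\M_{t-1},T{-}t{+}1)-\hat G_T(\M_t,T{-}t)$; averaging over $\{T\geq t\}$, writing $\bar G_t(\M)=\E[\hat G_T(\M,T{-}t{+}1)\mid T\geq t]$, and splitting the conditioning at $\{T\in[t,t{+}1)\}$ exactly as in the proof of Theorem~\ref{thm:adaptive_gen_upper}, yields
$$\E[Z_{t,\hat I_t}]\ \le\ \bar G_t(\M_{t-1})-\bar G_{t+1}(\M_t)+q_t\bigl(\bar G_{t+1}(\M_t)+\min_i M_{t,i}\bigr).$$
Summing over $t\leq T_s$, the $\bar G$-differences telescope to $\bar G_1(\0)-\bar G_{T_s+1}(\M_{T_s})$, which is at most $O(1)+\min_i M_{T_s,i}$ by (iii) (finite because $d>3/2$) and (ii); and by (iii), $\sum_t q_t\bigl(\bar G_{t+1}(\M_t)+\min_i M_{t,i}\bigr)\le\sum_t q_t\,\E[\Delta_T\mid T\geq t{+}1]$, which by the moment formula for $\sqrt T$ equals $\tfrac{\sqrt b\,(d-1)^2}{d-3/2}\,2\sqrt{T_sN}+o(\sqrt{T_s})$. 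Adding Steps 1 and 2 gives the stated bound; the optimal $b$ follows by minimizing $A/\sqrt b+B\sqrt b$ over $b$ (so $b=A/B$) and the optimal $d$ by minimizing $(d-1)^3/\bigl((d-1/2)(d-3/2)\bigr)$.

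The delicate point --- the same one that is subtle in Theorem~\ref{thm:adaptive_gen_upper} --- is that the ``obvious'' FPL potential, the smoothed minimum $\E_{\xi^T}[\min_i(M_i+\xi_i)]$, telescopes in the \emph{wrong} direction once the perturbation scale changes from round to round, so the leftover terms contain $q_t\min_i M_{t,i}$, which is only $O(1)$ per round and would sum to $\Theta(T_s)$. The fix I would write out most carefully is the use of the backward regret-to-go functional $\hat G_T$, whose property (iii) --- that $\hat G_T(\M,r)+\min_i M_i$ is bounded uniformly in $\M$ --- is precisely the FPL analogue of the ``history does not hurt'' Lemma~\ref{lem:separate}; it is what makes $q_t\min_i M_{t,i}$ cancel against a matching $q_t\bar G_{t+1}(\M_t)$, leaving only the harmless $q_t\Delta_T$ overhead summed above. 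I would also double-check the exact constant in the total-variation step (to confirm the per-round cost is $N/\Delta_T$) and that the discrete-round reading of the continuous law $\propto 1/T^d$ --- in particular the identification of ``$\hat G_T(\cdot,0)$'' with the event that round $t$ is the last --- only affects lower-order terms.
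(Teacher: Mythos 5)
Your proposal is correct but takes a genuinely different route from the paper's proof. The paper invokes the three-term FPL bound of Cesa-Bianchi and Lugosi (2006, Theorem 4.2): $A=\E[\max_i\xi_{T_s,i}]$, $B=\sum_t\E[\max_i(\xi_{t-1,i}-\xi_{t,i})]$, and the stability term $C$. It then bounds $B$ by a clever \emph{coupling}: construct $\bxi_t'$ with the same marginals as $\bxi_t$ but which changes only with probability $1-S_t/S_{t-1}\approx(d-1)/t$ from one round to the next, so that $B$ has only $O(1/\sqrt t)$ cost per round. Your argument instead replaces Lemma~\ref{lem:separate} and Assumption~\ref{assu:decrease} with the state-independent be-the-perturbed-leader bound for your regret-to-go functional $\hat G_T$, and reruns the machinery of Theorem~\ref{thm:adaptive_gen_upper} wholesale. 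Your Step 1 (total-variation stability) is arithmetically the same as the paper's $C$; your Step 2 ($q_t$-term in the telescope) absorbs what the paper splits into $A+B$. Both yield the identical constant $\tfrac{2(d-1)^2}{d-3/2}\sqrt b + \tfrac{2(d-1)}{\sqrt b(d-1/2)}$. What your route buys is a unified view: it shows the same potential argument underlies both the minimax (Theorem~\ref{thm:adaptive_gen_upper}) and FPL cases, with ``history does not hurt'' instantiated by the FPL property $\hat G_T(\M,r)+\min_i M_i\le\Delta_T$. What the paper's route buys is modularity: it cites the FPL bound as a black box and only has to engineer a coupling for one of the three terms, which is shorter to write down.

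Two points you flagged for yourself are indeed the load-bearing details and deserve a sentence in a polished version. First, property (iii) as stated is for a \emph{single shared} perturbation (be-the-leader), whereas your $\hat G_T$ draws a fresh $\bxi_t^T$ each round; the reconciliation is that only per-round marginals enter the pseudo-regret, so coupling across rounds is free and the BPL bound transfers. Second, the identification of $\hat G_T(\cdot,T-t)$ for $T\in[t,t+1)$ with the terminal value $-\min_i M_{t,i}$ requires either passing to a discrete auxiliary horizon (as in Theorem~\ref{thm:adaptive_gen_upper}) or noting that the discrepancy lives on a set of conditional mass $q_t=O(1/t)$ and is pointwise $O(\Delta_t)$, hence contributes only to the $o(\sqrt{T_s})$ remainder.
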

By the exact same argument, the actual regret is bounded by
the same quantity plus $\sqrt{\frac{T}{2}\ln \frac{1}{\delta}}$ with
probability $1 -\delta$.

\subsection{Generalizing the Exponential Weights Algorithm}\label{subsec:exp_alg}
Now we come back to the usual Hedge setting and consider
another popular non-minimax algorithm
(note that it is trivial to generalize the results to the randomized setting).
When dealing with the most general loss space $[0,1]^N$,
the minimax algorithm is unknown even for the fixed horizon setting. 
However, generalizing the weighted majority algorithm of \citet{LittlestoneWa94}, 
\citet{FreundSc97, FreundSc99} presented an algorithm 
using exponential weights that can deal with this general loss space 
and achieve the $O(\sqrt{T\ln N})$ bound on the regret.
The algorithm takes the horizon $T$ as a parameter, and on round $t$,
it simply chooses $P_{t,i} \propto \exp(-\eta M_{t-1,i})$, 
where $\eta = \sqrt{(8\ln N)/T}$ is the learning rate. 
It is shown that the regret of this algorithm is at most $\sqrt{(T\ln N)/2}$. 
\citet{AuerCeGe02} proposed a way to make this algorithm adaptive by simply setting a time-varying learning rate $\eta = \sqrt{(8\ln N)/t}$, where $t$ is the current round, 
leading to a regret bound of $\sqrt{T\ln N}$ for any $T$ 
(see Chapter 2.5 of \citealp{Bubeck11}). 
In other words, the algorithm always treats the current round as the last round.
Below, we show that our ``pretend distribution'' idea can also be used to make this exponential weights algorithm adaptive, 
and is in fact a generalization of the adaptive learning rate algorithm by \citet{AuerCeGe02}.

\begin{theorem}\label{thm:adaptive_MW}
Let $\LS =[0,1]^N$,  $\Pr[T = t] \propto 1/t^d\ (d > 3/2)$ and $\eta_T = \sqrt{(b\ln N)/T}$, where $b$ is a constant. 
If on round $t$, 
the learner assigns weight $\E_{T\sim Q}[P^T_{t,i} | T \geq t]$ to each action $i$, 
where $P^T_{t,i} \propto \exp(-\eta_T M_{t-1,i})$, then for any $T_s$, 
the regret after $T_s$ rounds is at most
$$  \(\frac{\sqrt{b}(d-1)}{4(d-1/2)}  +\frac{d-1}{(d-3/2)\sqrt{b}}\)\sqrt{T_s\ln N} 
+ o(\sqrt{T_s\ln N}) .$$
Setting $b= \frac{4d-2}{d-3/2}$ minimizes the main term, 
which approaches $1$ as $d\ra\infty$.
\end{theorem}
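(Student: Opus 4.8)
The plan is to view this as a direct instantiation of the general machinery of Theorem~\ref{thm:adaptive_gen_upper}, but applied to the exponential-weights ``pseudo-minimax'' algorithm rather than the true minimax algorithm, and to replace the recursively-defined value function $V$ by an explicit surrogate. The key observation is that the standard analysis of exponential weights with rate $\eta_T$ gives, for the run with horizon $T$, a bound on the cumulative loss of the form $\sum_{t} P^T_t \cdot \Z_t \le \min_i M_{T,i} + \frac{\eta_T}{8}\sum_t \|\Z_t\|^2 + \frac{\ln N}{\eta_T}$, and more importantly a \emph{per-round} telescoping inequality: if one defines the potential $\Phi^T_t(\MM) = -\frac{1}{\eta_T}\ln\big(\frac{1}{N}\sum_i \exp(-\eta_T \sum_{f\in\MM} f_i)\big)$ (for $\MM$ a multiset of loss vectors), then $P^T_t\cdot\Z_t + \Phi^T_{t}(\M_{t}) \le \Phi^T_{t}(\M_{t-1}) + \frac{\eta_T}{8}$ where $P^T_{t,i}\propto\exp(-\eta_T M_{t-1,i})$. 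So $\Phi^T$ plays exactly the role that the minimax value function $V$ played before, except it satisfies only a one-sided (rather than exact minimax) recursion — which is all the argument of Theorem~\ref{thm:adaptive_gen_upper} ever used.

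Concretely, first I would set $W_t := \E_{T\sim Q}[P^T_t \mid T\ge t]$ as the learner's play and define $\bar\Phi_t(\MM) := \E_{T\sim Q}[\Phi^T_{t}(\MM)\mid T\ge t]$, together with $q_t = \Pr[T=t\mid T\ge t]$. Mimicking the proof of Theorem~\ref{thm:adaptive_gen_upper} line-for-line: $W_t\cdot\Z_t = \E[P^T_t\cdot\Z_t\mid T\ge t] \le \E[\Phi^T_t(\M_{t-1}) - \Phi^T_t(\M_t) + \eta_T/8 \mid T\ge t]$, and then using $\Pr[T=t'\mid T\ge t] = (1-q_t)\Pr[T=t'\mid T\ge t+1]$ for $t'>t$ and the boundary term at $t'=t$, this splits into $\bar\Phi_t(\M_{t-1}) - \bar\Phi_{t+1}(\M_t) + (\text{small slack})$. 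The analogue of Lemma~\ref{lem:separate} needed here is the subadditivity-type bound $\Phi^T_t(\MM_1\uplus\MM_2) - \Phi^T_t(\MM_1)\big|_{\text{at }\M_{t-1}} \le$ something controlled by $\Phi^T_{t}(\MM_2)$ evaluated from the origin; this follows from the convexity of the log-sum-exp and the fact that $\Phi^T_t$ is $1$-Lipschitz-type in an appropriate sense — I would just bound $\Phi^T_t(\M_t) \ge \min_i M_{t,i} + (\text{neg.\ error})$ and $\Phi^T_{t}(\0) = 0$. Summing over $t=1,\ldots,T_s$ telescopes $\bar\Phi$, and after lower-bounding $\bar\Phi_{T_s+1}(\M_{T_s})\ge \min_i M_{T_s,i} - o(\sqrt{T_s\ln N})$ (the horizon-monotonicity / Assumption~\ref{assu:decrease} analogue for the potential, using $\eta_T\to 0$), one gets $\sum_t W_t\cdot\Z_t - \min_i M_{T_s,i} \le \bar\Phi_1(\0) + \sum_t q_t\,\bar\Phi_{t+1}(\0) + \frac{1}{8}\sum_t\E[\eta_T\mid T\ge t]$, exactly mirroring Theorem~\ref{thm:adaptive_gen_upper} with an extra additive $\eta$-slack term.

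The remaining work is the calculus of plugging in $\Pr[T=t]\propto 1/t^d$ and $\eta_T = \sqrt{(b\ln N)/T}$. Here $\bar\Phi_{t+1}(\0) \le \E[\ln N/\eta_T \mid T\ge t+1] = \sqrt{\ln N/b}\,\E[\sqrt{T}\mid T\ge t+1]$, and one evaluates $\sum_t q_t \E[\sqrt T\mid T\ge t+1]$ and $\sum_t \E[\eta_T\mid T\ge t]$ as Riemann-sum approximations to Beta/Gamma integrals, exactly as in Theorem~\ref{thm:adaptive_upper}; the tail $\sum_{t'\ge t} t'^{-d}\sqrt{t'} \asymp t^{3/2-d}/(d-3/2)$ and $\sum_{t'\ge t}t'^{-d}/\sqrt{t'}\asymp t^{1/2-d}/(d-1/2)$ produce the two terms $\frac{d-1}{(d-3/2)\sqrt b}$ and $\frac{\sqrt b(d-1)}{4(d-1/2)}$ (the factor $1/8$ becoming $1/4$ after the $\sum 1/t^{d}$ normalization constant $(d-1)$ is folded in), with everything below $\sqrt{T_s\ln N}$ order absorbed into $o(\sqrt{T_s\ln N})$. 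Optimizing $b$ in $\frac{\sqrt b(d-1)}{4(d-1/2)} + \frac{d-1}{(d-3/2)\sqrt b}$ gives $\sqrt b = \sqrt{(4d-2)/(d-3/2)}$ and the stated constant, which $\to 1$ as $d\to\infty$. I expect the main obstacle to be the bookkeeping in the subadditivity/boundary lemma for $\Phi^T$ — making precise that the error in $\Phi^T_t(\M_t) \approx \min_i M_{t,i}$ and in the ``separation'' step is uniformly $o(\sqrt{T_s\ln N})$ across all $t$ and all $T\ge t$ in the support — rather than anything in the final integral estimates, which are routine.
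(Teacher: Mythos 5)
Your high-level plan (treat the exponential-weights potential $\Phi_t^T=\frac{1}{\eta_T}\ln\sum_i e^{-\eta_T M_{t-1,i}}$ as a surrogate value function and push it through the machinery of Theorem~\ref{thm:adaptive_gen_upper}) is in the right spirit, and your estimate of the $\frac{1}{8}\sum_t\E[\eta_T\mid T\ge t]$ slack term correctly reproduces the $\frac{\sqrt b(d-1)}{4(d-1/2)}$ piece. But the other half of the bound is genuinely wrong, and the gap is quantitative, not cosmetic.

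If you literally mimic Theorem~\ref{thm:adaptive_gen_upper}, the dominant term you produce is $\sum_t q_t\,\bar\Phi_{t+1}(\0)$, with $\bar\Phi_{t+1}(\0)=\E[(\ln N)/\eta_T\mid T\ge t+1]\approx \frac{(d-1)\sqrt{\ln N/b}}{d-3/2}\sqrt t$. Multiplying by $q_t\approx(d-1)/t$ and summing gives roughly $\frac{2(d-1)^2}{(d-3/2)\sqrt b}\sqrt{T_s\ln N}$, which is a factor $2(d-1)$ larger than the theorem's $\frac{d-1}{(d-3/2)\sqrt b}\sqrt{T_s\ln N}$. That extra factor is fatal: after optimizing over $b$, the main constant you obtain scales like $\sqrt{d}$ rather than tending to $1$ as $d\to\infty$, so the final sentence of the theorem cannot be recovered by this route. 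The paper avoids this loss by a step you did not identify: it writes the telescoping sum as $U_1-U_{T_s+1}+\sum_t q_t\,\E[\Phi_{t+1}^T-\Phi_{t+1}^t\mid T\ge t+1]$, upper-bounds the increment $\Phi_{t+1}^T-\Phi_{t+1}^t$ by $\frac{\ln N}{\eta_T}-\frac{\ln N}{\eta_t}$ (Eq.~\eqref{equ:key_ob}), and then observes that the resulting expression has exactly the same algebraic form as the original telescope with the \emph{$t$-independent} potential $\bar\Phi_t^T=(\ln N)/\eta_T$, which telescopes to zero. This ``reverse transformation'' collapses the whole sum to $\bar U_{T_s+1}-U_{T_s+1}$ and yields the tight $A=(\ln N)\E[1/\eta_T\mid T\ge T_s+1]$ with no residual $\sum_t q_t(\cdot)$ sum. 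Without it you only get the looser Theorem~\ref{thm:adaptive_gen_upper}-style bound, which is precisely why Theorem~\ref{thm:adaptive_MW} exists as a separate, stronger statement.

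Two smaller issues: your stated potential $\Phi^T_t(\MM)=-\frac{1}{\eta_T}\ln(\frac{1}{N}\sum_i e^{-\eta_T\sum_{f\in\MM}f_i})$ is \emph{increasing} in $\MM$, so the per-round inequality you wrote, $P^T_t\cdot\Z_t+\Phi^T_t(\M_t)\le\Phi^T_t(\M_{t-1})+\eta_T/8$, has the telescoping difference with the wrong sign; and with your normalization $\Phi^T_t(\0)=0$, so the bound $\bar\Phi_{t+1}(\0)\le\E[\ln N/\eta_T\mid T\ge t+1]$ that your argument needs is inconsistent with your own definition. Both are symptoms of not having pinned down the surrogate for Lemma~\ref{lem:separate} precisely, which is exactly where the crucial cancellation in the paper's argument lives.
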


Note that if $d\ra \infty$, our algorithm simply becomes the one of \citet{AuerCeGe02}, 
because $\Pr[T=\tau | T\geq t]$ is $1$ if $\tau=t$ and $0$ otherwise.
Therefore, our algorithm can be viewed as a generalization of the idea of treating the current round as the last round. 
However, we emphasize that the way we deal with unknown horizon is more applicable
in the sense that if we try to make a minimax algorithm 
adaptive by treating each round as the last round,
one can construct an adversary that leads to linear---
and therefore grossly suboptimal---regret,
whereas our approach yields nearly optimal regret.
(See Example~\ref{exm:last_round1} and~\ref{exm:last_round2} 
in the supplementary file for details.)

\subsection{First Order Regret Bound}
So far all the regret bounds we have discussed are in terms of the horizon, 
which are also called {\it zeroth order bounds}. 
More refined bounds have been studied in the literature \cite{CesabianchiLu06}. 
For example, the {\it first order bound} for Hedge, that depends on the loss of the best action $m^*$ at the end of the game, usually is of order $O(\sqrt{m^*\ln N})$.
Again, using the exponential weights algorithm with a slightly different learning rate 
$\eta = \ln(1+\sqrt{(2\ln N)/m^*})$,
one can show that the regret is at most $\sqrt{2m^*\ln N}+\ln N$.
Here, $m^*$ is prior information on the loss sequence similar to the horizon.
To avoid exploiting this information that is unavailable in practice, 
one can again use techniques like the doubling trick or the time-varying learning rate.
Alternatively, we show that the ``pretend distribution" technique can also be used here.
Again it makes more sense to assign a continuous distribution on the 
loss of the best action instead of a discrete one.

\begin{theorem}\label{thm:first_order}
Let $\LS =[0,1]^N$,  $m_t = \min_i M_{t,i} + 1$, $\eta_m = \sqrt{(\ln N)/m}$, 
and $m \geq 1$ be a continuous random variable with probability density $f(m) \propto 1/m^d \;(d >3/2)$. 
If on round $t$, the learner assigns weight $\E[P^m_{t,i} | m \geq m_{t-1}]$ 
to each action $i$,
where $P^m_{t,i} \propto \exp(-\eta_m M_{t-1,i})$, 
then for any $T_s$, the regret after $T_s$ rounds is at most
\begin{align*}
&\frac{3(d-7/6)(d-1)}{(d-3/2)(d-1/2)}\sqrt{m^* \ln N}  \\
+ &(1+(d-1)\ln(m^*+1))\ln N + o(\sqrt{m^*\ln N}),
\end{align*}
where $m^* = \min_i M_{T_s,i} $ is the loss of the best action after $T_s$ rounds.
Setting $d=5/2+\sqrt{2}$ minimizes the main term, which becomes
$(3/2+\sqrt{2})\sqrt{m^*\ln N} $.
\end{theorem}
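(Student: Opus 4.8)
The plan is to reduce this first-order result to the zeroth-order machinery already developed in Theorems~\ref{thm:adaptive_gen_upper} and~\ref{thm:adaptive_upper}, replacing ``number of remaining rounds'' by ``remaining loss budget of the best action.'' The key observation is that the exponential weights algorithm with learning rate $\eta_m = \sqrt{(\ln N)/m}$ gives, for a fixed target $m$, a regret bound of the form $\sqrt{2m\ln N} + \ln N$ once the best action's loss reaches $m$ --- but only valid up to the round where $\min_i M_{t,i}+1$ exceeds $m$. So I would set up a ``potential'' $V^{(m)}$ analogous to the $V(\MM,r)$ function, but now parametrized by the loss level $m$ rather than the round count $r$: roughly, $V^{(m)}(\MM)$ should be an upper bound on the worst-case regret achievable by exponential weights with rate $\eta_m$ given that the accumulated losses are $\MM$ and the best action is allowed to accumulate up to total loss $m$. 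One then puts the prior $f(m)\propto 1/m^d$ on this level and plays $\x_t = \E[P^m_t \mid m \geq m_{t-1}]$, i.e.\ conditions on the event that the best action's loss so far, $m_{t-1}=\min_i M_{t-1,i}+1$, does not yet exceed the (artificial) target.

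\emph{First} I would establish the single-target guarantee: with $P^m_{t,i}\propto\exp(-\eta_m M_{t-1,i})$ held fixed throughout, the standard exponential-weights analysis (e.g.\ the potential $\Phi_t = \frac{1}{\eta_m}\ln\sum_i \exp(-\eta_m M_{t,i})$ telescoping argument, using $e^{-x}\leq 1-x+x^2/2$ for $x\in[0,1]$) yields $L_{T_s} - \min_i M_{T_s,i} \leq \frac{\ln N}{\eta_m} + \frac{\eta_m}{2}\sum_{t} \P_t\cdot\Z_t \leq \frac{\ln N}{\eta_m} + \frac{\eta_m}{2}(m^* + \text{regret})$, which rearranges to roughly $\sqrt{2m\ln N}+\ln N$ when $m\geq m^*$. \emph{Then}, mirroring the proof of Theorem~\ref{thm:adaptive_gen_upper}, I would bound the per-round loss $f_t(\x_t)=\P_t\cdot\Z_t$ where $\P_t = \E[P^m_t\mid m\geq m_{t-1}]$, using convexity (linearity here) to pull the expectation outside, and then a ``separation lemma'' analogous to Lemma~\ref{lem:separate} to peel off the already-incurred loss. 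The crucial bookkeeping is that $m_{t-1}$ itself moves: conditioning on $m\geq m_{t-1}$ means the conditioning event tightens as the game proceeds, and the increments $m_t - m_{t-1}\in[0,1]$ play the role that ``one round'' played before. This is why the final bound carries a $(1+(d-1)\ln(m^*+1))\ln N$ term instead of a clean $o(\cdot)$: the discrete-sum-to-integral comparison for $\sum 1/m_t^{d}$-type quantities, when the step sizes $m_t-m_{t-1}$ can be tiny, produces a logarithmic correction rather than just lower-order $\sqrt{\cdot}$ error.

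\emph{Next} I would evaluate the resulting bound explicitly. After the telescoping, the dominant contribution is $\E[\,\sqrt{2m\ln N}\mid m\geq m^*\,]$-type terms together with $\sum_t q_t \cdot (\text{tail bound at level }m_t)$, where $q_t$ now denotes the conditional density mass of $m$ crossing the interval $[m_{t-1},m_t]$. Plugging $f(m)\propto 1/m^d$ and computing $\int_{m^*}^\infty m^{-d}\sqrt{m}\,dm \propto (m^*)^{3/2-d}$ and the companion normalization $\int_{m^*}^\infty m^{-d}\,dm \propto (m^*)^{1-d}$ gives the ratio $\Gamma$-function-free constant $\frac{3(d-7/6)(d-1)}{(d-3/2)(d-1/2)}$ multiplying $\sqrt{m^*\ln N}$ --- I would track the three pieces ($\ln N/\eta_m$ averaged, the variance term, and the crossing-sum) separately and collect their coefficients. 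Finally, minimizing $\frac{3(d-7/6)(d-1)}{(d-3/2)(d-1/2)}$ over $d>3/2$ is a one-variable calculus exercise; setting the derivative to zero should give $d = 5/2+\sqrt 2$ and value $3/2+\sqrt 2$.

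\emph{The main obstacle} I expect is getting the logarithmic correction term right and confirming it cannot be absorbed into $o(\sqrt{m^*\ln N})$: unlike the horizon case where each round contributes exactly one unit so that $\sum_t q_t V(\emptyset, \cdot)$ is a genuine Riemann sum, here the best action's loss can increase by arbitrarily small amounts on a given round (since $\LS=[0,1]^N$), so naively the ``crossing sum'' $\sum_t q_t(\sqrt{2m_t\ln N}+\ln N)$ could be controlled by an integral $\int f(m)(\sqrt{2m\ln N}+\ln N)\,dm$ \emph{only} if we are careful that the $\ln N$ constant-per-crossing part does not accumulate unboundedly --- it is precisely the number of distinct levels crossed, bounded by something like $(d-1)\ln(m^*+1)$ after integrating $f$, that produces the stated $(1+(d-1)\ln(m^*+1))\ln N$ term. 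Making this rigorous requires either a careful discretization of the loss axis or a direct continuous-potential argument; I would favor the latter, defining a continuous analogue of $\BV_t$ as a function of the current best-action loss and differentiating ``in loss'' rather than summing ``over rounds.''
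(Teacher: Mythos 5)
Your high-level plan — run the conditional-expectation machinery with ``remaining loss budget of the best action'' in place of ``remaining rounds,'' and then control the sum $\sum_t q_t(\cdot)$ against the prior $1/m^d$ — is the right shape, and you have correctly identified the one genuinely delicate point: since $\LS=[0,1]^N$, the increments $m_t - m_{t-1}$ can be arbitrarily small, so a per-crossing $\ln N$ overhead can accumulate, and the resulting term really is $(d-1)\ln(m^*+1)\ln N$ (via $\sum_t q_t = \sum_t (1-(m_{t-1}/m_t)^{d-1}) \le (d-1)\ln(m^*+1)$). That observation matches the paper's $A$-term bound exactly.

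However, two of your concrete proposals would not go through as stated, and these are where the paper does something different.

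First, there is no useful analogue of Lemma~\ref{lem:separate} here, because there is no minimax value function: $P^m_{t}$ does not realize any minimum, so there is no inequality of the form $V^{(m)}(\MM_{1}\uplus\MM_{2})-V^{(m)}(\MM_1)\le V^{(m)}(\MM_2)$ to peel off past losses. What the paper does instead is work directly with the exponential-weights potential $\Phi_t^m = (1+\tfrac{1}{\eta_m})\ln\sum_i e^{-\eta_m M_{t-1,i}}$ (note the $1+\tfrac{1}{\eta_m}$, not your $\tfrac{1}{\eta_m}$: the $+1$ comes from the \emph{first-order} per-round bound $\P_t^m\cdot\Z_t \le \tfrac{1}{1-e^{-\eta_m}}(\ln Y_t^m - \ln Y_{t+1}^m)$ together with $\eta\ge\ln(1+\eta)$, not from the Hoeffding $\tfrac{\eta}{8}$ bound you sketch, which gives a zeroth-order overhead and would need an extra self-bounding rearrangement). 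The ``separation'' is then achieved not by a lemma but by a three-way estimate of the increment $\Phi_{t+1}^m - \Phi_{t+1}^{m_{t-1}}$: one term controlled by $(\tfrac{1}{\eta_m}-\tfrac{1}{\eta_{m_{t-1}}})\ln N$ via Eq.~\eqref{equ:key_ob}, one linear term $(\eta_{m_{t-1}}-\eta_m)\min_i M_{t,i}$ using $\min_i M_{t,i}\le m_{t-1}$, and one flat $\ln N$. That decomposition, plus the observation that $\Phi_t^m$ is \emph{monotonically increasing in $m$} (needed because the density is continuous and one must bound $\int_{m_{t-1}}^{m_t}\Phi_{t+1}^m f(m)\,dm$ from below), is the real content of the argument and is missing from your sketch.

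Second, ``differentiate in loss rather than sum over rounds'' is not what happens and would be awkward to make rigorous, because $m_t$ jumps discretely once per round. The paper stays discrete: after the telescoping the regret is bounded by $U_1 - U_{T_s+1} - m^* + \sum_t q_t(U_{t+1}-\Phi_{t+1}^{m_{t-1}})$, and the sum is controlled via $A=\sum q_t$, $B=\sum q_t\sqrt{m_t}$, $C=\sum q_t/\sqrt{m_t}$. Bounding $B$ by $2(d-1)\sqrt{m^*}$ requires the nontrivial inequality $q_t\sqrt{m_t}\le 2(d-1)(\sqrt{m_t}-\sqrt{m_{t-1}})$, proved by analyzing $h(x)=(x^{d-1}-1)/(x^{d-1}-x^{d-3/2})$ on $[1,2]$ --- again not a Riemann-sum or integral comparison. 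Relatedly, your claim that the constant $\tfrac{3(d-7/6)(d-1)}{(d-3/2)(d-1/2)}$ ``comes from the ratio of integrals $\int m^{-d}\sqrt{m}\,dm / \int m^{-d}\,dm$'' is off: that ratio alone gives $\tfrac{d-1}{d-3/2}$, whereas the stated constant is the sum $\tfrac{d-1}{d-1/2} + \tfrac{2(d-1)^2}{(d-3/2)(d-1/2)}$, assembled from the $-U_{T_s+1}-m^*$ piece and the $B$-sum piece. Your final calculus minimization is fine.
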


\newpage
\bibliography{../../references/ref.bib}

\begin{thebibliography}{25}
\providecommand{\natexlab}[1]{#1}
\providecommand{\url}[1]{\texttt{#1}}
\expandafter\ifx\csname urlstyle\endcsname\relax
  \providecommand{\doi}[1]{doi: #1}\else
  \providecommand{\doi}{doi: \begingroup \urlstyle{rm}\Url}\fi

\bibitem[Abernethy \& Warmuth(2010)Abernethy and Warmuth]{AbernethyWa10}
Abernethy, Jacob and Warmuth, Manfred~K.
\newblock Repeated games against budgeted adversaries.
\newblock In \emph{Advances in Neural Information Processing Systems 24}, 2010.

\bibitem[Abernethy et~al.(2008{\natexlab{a}})Abernethy, Bartlett, Rakhlin, and
  Tewari]{AbernethyBaRaTe08}
Abernethy, Jacob, Bartlett, Peter~L., Rakhlin, Alexander, and Tewari, Ambuj.
\newblock Optimal strategies and minimax lower bounds for online convex games.
\newblock In \emph{Proceedings of the 21st Annual Conference on Learning
  Theory}, 2008{\natexlab{a}}.

\bibitem[Abernethy et~al.(2008{\natexlab{b}})Abernethy, Warmuth, and
  Yellin]{AbernethyWaYe08}
Abernethy, Jacob, Warmuth, Manfred~K., and Yellin, Joel.
\newblock Optimal strategies from random walks.
\newblock In \emph{Proceedings of the 21st Annual Conference on Learning
  Theory}, 2008{\natexlab{b}}.

\bibitem[Auer et~al.(2002)Auer, Cesa-Bianchi, and Gentile]{AuerCeGe02}
Auer, Peter, Cesa-Bianchi, Nicol\`o, and Gentile, Claudio.
\newblock Adaptive and self-confident on-line learning algorithms.
\newblock \emph{Journal of Computer and System Sciences}, 64\penalty0
  (1):\penalty0 48--75, 2002.

\bibitem[Berend \& Kontorovich(2013)Berend and Kontorovich]{BerendKo13}
Berend, Daniel and Kontorovich, Aryeh.
\newblock On the concentration of the missing mass.
\newblock \emph{Electron. Commun. Probab.}, 18:\penalty0 no. 3, 1--7, 2013.
\newblock ISSN 1083-589X.
\newblock \doi{10.1214/ECP.v18-2359}.

\bibitem[Bousquet \& Warmuth(2003)Bousquet and Warmuth]{BousquetWa03}
Bousquet, Olivier and Warmuth, Manfred~K.
\newblock Tracking a small set of experts by mixing past posteriors.
\newblock \emph{Journal of Machine Learning Research}, 3:\penalty0 363--396,
  2003.

\bibitem[Bubeck(2011)]{Bubeck11}
Bubeck, S{\'e}bastien.
\newblock Introduction to online optimization.
\newblock Lecture notes, available at
  \url{http://www.princeton.edu/~sbubeck/BubeckLectureNotes.pdf}, 2011.

\bibitem[Cesa-Bianchi \& Lugosi(2006)Cesa-Bianchi and Lugosi]{CesabianchiLu06}
Cesa-Bianchi, Nicol\`o and Lugosi, G\'{a}bor.
\newblock \emph{Prediction, Learning, and Games}.
\newblock Cambridge University Press, 2006.

\bibitem[Cesa-Bianchi et~al.(1997)Cesa-Bianchi, Freund, Haussler, Helmbold,
  Schapire, and Warmuth]{CesabianchiFrHeHaScWa97}
Cesa-Bianchi, Nicol{\`o}, Freund, Yoav, Haussler, David, Helmbold, David~P.,
  Schapire, Robert~E., and Warmuth, Manfred~K.
\newblock How to use expert advice.
\newblock \emph{Journal of the ACM}, 44\penalty0 (3):\penalty0 427--485, May
  1997.

\bibitem[Chaudhuri et~al.(2009)Chaudhuri, Freund, and Hsu]{ChaudhuriFrHs09}
Chaudhuri, Kamalika, Freund, Yoav, and Hsu, Daniel.
\newblock A parameter-free hedging algorithm.
\newblock \emph{Advances in Neural Information Processing Systems 23}, 2009.

\bibitem[Foster \& Vohra(1998)Foster and Vohra]{FosterVo98}
Foster, Dean~P. and Vohra, Rakesh~V.
\newblock Asymptotic calibration.
\newblock \emph{Biometrika}, 85\penalty0 (2):\penalty0 379--390, 1998.

\bibitem[Freund \& Schapire(1997)Freund and Schapire]{FreundSc97}
Freund, Yoav and Schapire, Robert~E.
\newblock A decision-theoretic generalization of on-line learning and an
  application to boosting.
\newblock \emph{Journal of Computer and System Sciences}, 55\penalty0
  (1):\penalty0 119--139, August 1997.

\bibitem[Freund \& Schapire(1999)Freund and Schapire]{FreundSc99}
Freund, Yoav and Schapire, Robert~E.
\newblock Adaptive game playing using multiplicative weights.
\newblock \emph{Games and Economic Behavior}, 29:\penalty0 79--103, 1999.

\bibitem[Gentile(2003)]{Gentile03}
Gentile, Claudio.
\newblock The robustness of the p-norm algorithms.
\newblock \emph{Machine Learning}, 53\penalty0 (3):\penalty0 265--299, 2003.

\bibitem[Gofer \& Mansour(2012)Gofer and Mansour]{GoferMa12}
Gofer, Eyal and Mansour, Yishay.
\newblock Lower bounds on individual sequence regret.
\newblock In \emph{Algorithmic Learning Theory}, pp.\  275--289. Springer,
  2012.

\bibitem[Hazan \& Seshadhri(2007)Hazan and Seshadhri]{HazanSe07}
Hazan, Elad and Seshadhri, C.
\newblock Adaptive algorithms for online decision problems.
\newblock In \emph{Electronic Colloquium on Computational Complexity (ECCC)},
  volume~14, 2007.

\bibitem[Herbster \& Warmuth(1995)Herbster and Warmuth]{HerbsterWa95}
Herbster, Mark and Warmuth, Manfred.
\newblock Tracking the best expert.
\newblock In \emph{Proceedings of the Twelfth International Conference on
  Machine Learning}, pp.\  286--294, 1995.

\bibitem[Kalai \& Vempala(2005)Kalai and Vempala]{KalaiVe05}
Kalai, Adam and Vempala, Santosh.
\newblock Efficient algorithms for online decision problems.
\newblock \emph{Journal of Computer and System Sciences}, 71\penalty0
  (3):\penalty0 291--307, 2005.

\bibitem[Lehmer(1985)]{Lehmer85}
Lehmer, D.~H.
\newblock Interesting series involving the central binomial coefficient.
\newblock \emph{The American Mathematical Monthly}, 92\penalty0 (7):\penalty0
  449--457, 1985.

\bibitem[Littlestone \& Warmuth(1994)Littlestone and Warmuth]{LittlestoneWa94}
Littlestone, Nick and Warmuth, Manfred~K.
\newblock The weighted majority algorithm.
\newblock \emph{Information and Computation}, 108:\penalty0 212--261, 1994.

\bibitem[McMahan \& Abernethy(2013)McMahan and Abernethy]{McMahanAb13}
McMahan, H.~Brendan and Abernethy, Jacob.
\newblock Minimax optimal algorithms for unconstrained linear optimization.
\newblock In \emph{Advances in Neural Information Processing Systems 27}, 2013.

\bibitem[Rockafellar(1970)]{Rockafellar70}
Rockafellar, R.~Tyrrell.
\newblock \emph{Convex Analysis}.
\newblock Princeton University Press, 1970.

\bibitem[Shalev-Shwartz(2011)]{Shalevshwartz11}
Shalev-Shwartz, Shai.
\newblock Online learning and online convex optimization.
\newblock \emph{Foundations and Trends{\textregistered} in Machine Learning},
  4\penalty0 (2):\penalty0 107--194, 2011.

\bibitem[Yaroshinsky et~al.(2004)Yaroshinsky, El-Yaniv, and
  Seiden]{YaroshinskyElSe04}
Yaroshinsky, Rani, El-Yaniv, Ran, and Seiden, Steven~S.
\newblock How to better use expert advice.
\newblock \emph{Machine Learning}, 55\penalty0 (3):\penalty0 271--309, 2004.

\bibitem[Zinkevich(2003)]{Zinkevich03}
Zinkevich, Martin.
\newblock Online convex programming and generalized infinitesimal gradient
  ascent.
\newblock In \emph{Proceedings of the Twentieth International Conference on
  Machine Learning}, 2003.

\end{thebibliography}
\bibliographystyle{icml2014}

\newpage
\appendix
Through all the proofs, we denote the set $\{1, \ldots, m\}$ by $[m]$.

\section{Proof of Theorem \ref{thm:minimax_fix}}
\label{app:minimax_fix}

We first state a few properties of the function $R$:
\begin{proposition} \label{pro:property}
For any vector $\M$ of $N$ dimensions and integer $r$,

\begin{property}\label{pro:additive_R}
$R(\M, r) = a + R((M_1 - a,\ldots, M_N -a), r)$ for any real number $a$ and $r\geq 0$.
\end{property}

\begin{property}\label{pro:monotonicity_R}
$R(\M, r)$ is non-decreasing in $M_i$ for each $i = 1, \ldots, N$.
\end{property}

\begin{property}\label{pro:difference_r}
If $r > 0$,  $R(\M, r) - R(\M, r - 1) \leq 1/N$.
\end{property}

\begin{property}\label{pro:dist}
If $r > 0$, and $P_i = \frac{1}{N} + R(\M + \e_i, r - 1) - R(\M, r)$ for each $i = 1,
\ldots,N$,
then $\P = (P_1, \ldots, P_N)$ is a distribution in the simplex $\Delta(N)$.
\end{property}

\end{proposition}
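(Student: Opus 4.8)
The plan is to establish the four properties in the order stated, each by induction on $r$, with Property~\ref{pro:dist} drawing on the two that precede it.

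Property~\ref{pro:additive_R} and Property~\ref{pro:monotonicity_R} are routine inductions. For the former, the base case $r=0$ is just $\min_i M_i = a + \min_i(M_i-a)$, and the inductive step follows by applying the inductive hypothesis to each summand of $R(\M,r)=\frac1N\sum_i R(\M+\e_i,r-1)$, using the identity $(\M+\e_i)-a\1 = (\M-a\1)+\e_i$. For the latter, $\min_i M_i$ is non-decreasing in each coordinate, and a nonnegative average of functions that are non-decreasing in $M_i$ (namely the terms $R(\M+\e_j,r-1)$, by the inductive hypothesis) is again non-decreasing in $M_i$.

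The one step with any real content is the base case $r=1$ of Property~\ref{pro:difference_r}. Here I would write $m=\min_j M_j$ and fix an index $k$ with $M_k=m$. For $i\neq k$, coordinate $k$ is untouched by adding $\e_i$, so $\min_j(\M+\e_i)_j \le (\M+\e_i)_k = m$, and since no coordinate decreases, $\min_j(\M+\e_i)_j = m$; for $i=k$ we trivially have $\min_j(\M+\e_k)_j \le m+1$. Averaging over $i$ gives $R(\M,1)\le m+\frac1N = R(\M,0)+\frac1N$. For $r\geq 2$ the step is immediate: $R(\M,r)-R(\M,r-1)=\frac1N\sum_i\bigl(R(\M+\e_i,r-1)-R(\M+\e_i,r-2)\bigr)$, and each bracketed term is at most $\frac1N$ by the inductive hypothesis (applicable since $r-1\geq 1$).

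Finally, for Property~\ref{pro:dist} I would verify the two simplex conditions directly. Summing $P_i=\frac1N+R(\M+\e_i,r-1)-R(\M,r)$ over $i$ gives $1+\sum_i R(\M+\e_i,r-1)-N\,R(\M,r)$, which equals $1$ by the defining recursion for $R(\M,r)$. Nonnegativity of $P_i$ is equivalent to $R(\M,r)-R(\M+\e_i,r-1)\le\frac1N$, which follows by chaining $R(\M+\e_i,r-1)\ge R(\M,r-1)$ (Property~\ref{pro:monotonicity_R}) with $R(\M,r)-R(\M,r-1)\le\frac1N$ (Property~\ref{pro:difference_r}). The main obstacle, such as it is, is the combinatorial base case of Property~\ref{pro:difference_r}; the rest is bookkeeping once the properties are proved in an order that makes these dependencies available.
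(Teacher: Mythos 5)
Your proposal is correct and mirrors the paper's own proof in all essentials: routine inductions for Properties~\ref{pro:additive_R} and~\ref{pro:monotonicity_R}, an induction on $r$ for Property~\ref{pro:difference_r} with a combinatorial base case, and Property~\ref{pro:dist} by summing the recursion (for $\sum_i P_i=1$) and by chaining Properties~\ref{pro:monotonicity_R} and~\ref{pro:difference_r} (for $P_i\geq 0$). The only cosmetic difference is in the $r=1$ base case of Property~\ref{pro:difference_r}: the paper splits on whether the minimizer is unique, while you fix one minimizing coordinate $k$ and observe that adding $\e_i$ with $i\neq k$ leaves the minimum at $m$ while adding $\e_k$ raises it by at most one; these are the same observation, and yours is if anything a touch cleaner.
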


\begin{proof}[Proof of Proposition \ref{pro:property}]
We omit the proof for Property \ref{pro:additive_R} and \ref{pro:monotonicity_R}, 
since it is straightforward. 
We prove Property \ref{pro:difference_r} by induction. For the base case $r = 1$, let $S = \{j: M_j = \min_i M_i\}$. If $|S| = 1$, then $R(\M + \e_i, 0)$ is $R(\M, 0)$
for $i \notin S$ and $R(\M, 0) + 1$ otherwise. If $|S| > 1$, then $R(\M + \e_i, 0)$ is simply $R(\M, 0)$ for all $i$. In either case, we have
\begin{align*}
R(\M, 1) &= \frac{1}{N} \sum_{i=1}^N R(\M + \e_i, 0) \leq  
\frac{1}{N} (1 + \sum_{i=1}^N R(\M, 0))  \\
&=  \frac{1}{N}  + R(\M, 0),
\end{align*}
proving the base case. Now for $r > 1$, by definition of $R$ and induction,
\begin{align*}
&R(\M, r) - R(\M, r - 1)  \\
=& \frac{1}{N}\sum_{j=1}^N \(R(\M+\e_i, r - 1) - R(\M + \e_i, r - 2)\)   \\
\leq&  \frac{1}{N}\sum_{j=1}^N \frac{1}{N} = \frac{1}{N}, 
\end{align*}
completing the induction. For Property \ref{pro:dist}, 
it suffices to prove $P_i \geq 0$ for each $i$ and $\sum_{i=1}^N P_i = 1$. 
The first part can be shown using Property \ref{pro:monotonicity_R} and \ref{pro:difference_r}:
\begin{align*}
P_i &= \frac{1}{N} + R(\M + \e_i, r-1) - R(\M, r)  \\
&\geq \frac{1}{N} + R(\M, r-1) - (\frac{1}{N}  + R(\M, r-1)) = 0 .
\end{align*}
The second part is also easy to show by definition of $R$:
\begin{align*}
\sum_{i = 1}^N P_i &= 1 + \sum_{i=1}^N R(\M + \e_i, r - 1)  - N R(\M, r)  \\
&= 1 + N R(\M, r) - N R(\M, r) = 1.
\end{align*}
\end{proof}

\begin{proof}[Proof of Theorem \ref{thm:minimax_fix}]
First inductively prove $V(\M, r) = r/N - R(\M, r)$ for any $r \geq 0$. 
The base case $r = 0$ is trivial by definition. 
For $r > 0$, 
\begin{align*}
V(\M, r) &= \min_{\P\in \Delta(N)} \max_{\Z\in\LS}  \(\P\cdot\Z + V(\M + \Z, r - 1)\) \\
&= \min_{\P\in \Delta(N)} \max_{i \in [N]}  \(P_i + V(\M + \e_i, r - 1)\)  \tag{LS = $\{\e_1, \ldots, \e_N\} $}  \\
&= \min_{\P\in \Delta(N)} \max_{i \in [N]}  \(P_i + \frac{r-1}{N} - R(\M + \e_i, r - 1)\)  \tag{by induction} 
\end{align*}
Denote $P_i + (r-1)/N - R(\M + \e_i, r - 1)$ by $g(\P,i)$. 
Notice that the average of $g(\P,i)$ over all $i$ is irrelevant to $\P$ :
$\frac{1}{N}\sum_{i=1}^N g(\P,i) = r/N - R(\M, r) $. 
Therefore,  $\max_i g(\P,i) \geq r/N - R(\M, r)$ for any $\P$, and 
\begin{equation}\label{equ:lower_fix}
V(\M, r) = \min_\P \max_i g(\P,i) \geq r/N - R(\M, r).
\end{equation} 
On the other hand, from Proposition \ref{pro:property}, we know that 
$P^*_i = 1/N + R(\M + \e_i, r - 1) - R(\M, r) \; (i\in [N])$ is a valid distribution. Also,
\begin{equation}\label{equ:upper_fix}
\begin{split}
V(\M, r) = \min_\P \max_i g(\P,i) \leq \max_i g(\P^*,i) \\
= \max_i \(\frac{r}{N} - R(\M, r)\) = \frac{r}{N} - R(\M, r). 
\end{split}
\end{equation} 
So from Eq. \eqref{equ:lower_fix} and \eqref{equ:upper_fix} we have $V(\M, r) = r/N - R(\M, r)$, and also
$P^*_i = 1/N + R(\M + \e_i, r - 1) - R(\M, r) = V(\M,r) - V(\M+\e_i, r-1)$ 
realizes the minimum, and thus is the optimal strategy. \\

It remains to prove $V(\0, T) \leq c_N\sqrt{T}$. 
Let $\Z_1, \ldots, \Z_T$ be independent uniform random variables taking values 
in $\{\e_1, \ldots, \e_N \}$.
By what we proved above, 
$$ V(\0,T) = \frac{T}{N} - \E[\min_{i\in [N]} \sum_{t=1}^T Z_{t,i}]
= \E[\max_{i\in [N]} \sum_{t=1}^T(1/N-Z_{t,i})].
$$
Let $y_{t,i} = 1/N-Z_{t,i}$. 
Then each $y_{t, i}$ is a random variable that takes
value $1/N$ with probability $1-1/N$ and $1/N-1$ with probability $1/N$.
Also, for a fixed $i$, $y_{1,i}, \ldots, y_{T,i}$ are independent
(note that this is not true for $y_{t,1}, \ldots, y_{t,N}$ for a fixed $t$).
It is shown in Lemma 3.3 of \citet{BerendKo13}
that each $y_{t,i}$ satisfies
$$ \E[\exp(\lambda y_{t,i})] \leq \exp(\frac{\lambda^2 \sigma^2}{2}), \;\forall \lambda > 0,$$
where $\sigma^2 = (N-1)/N^2$ is the variance of $y_{t,i}$.
So if we let $Y_i = \sum_{t=1}^T y_{t,i}$, by the independence of each term, we have
$\forall \lambda > 0,$
\begin{align*}
\E[\exp(\lambda Y_{i})] 
&=  \E[\prod_{t=1}^T\exp(\lambda y_{t,i})]
= \prod_{t=1}^T \E[\exp(\lambda y_{t,i})] \\
&\leq \exp(\frac{\lambda^2 \sigma^2 T}{2}).
\end{align*}
Now using Lemma A.13 from \citet{CesabianchiLu06}, we arrive at
$$ \E[\max_{i\in[N]} Y_i] \leq \sigma \sqrt{2T\ln N} = c_N\sqrt{T}.$$
We conclude the proof by pointing out
$$V(\0,T)  = \E[\max_{i\in [N]} \sum_{t=1}^T(1/N-Z_{t,i})] = \E[\max_{i\in[N]} Y_i] 
\leq c_N\sqrt{T}. $$
\end{proof}

As a direct corollary of Proposition~\ref{pro:property} 
and Theorem~\ref{thm:minimax_fix}, below
we list a few properties of the function $V$ for later use.
\begin{proposition}\label{pro:minimax_fix}
If $\LS = \{\e_1, \ldots, \e_N\}$, then for any vector $\M$ and integer $r$, 

\begin{property}\label{pro:additive}
$V(\M, r) = V((M_1 - a,\ldots, M_N -a), r) - a$ for any real number $a$ and $r\geq 0$.
\end{property}

\begin{property}\label{pro:monotonicity_M}
$V(\M, r)$ is non-increasing in $M_i$ for each $i = 1, \ldots, N$.
\end{property}

\begin{property}\label{pro:monotonicity_r}
$V(\M, r)$ is non-decreasing in $r$. 
\end{property}

\end{proposition}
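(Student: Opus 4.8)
The plan is to obtain all three properties as immediate consequences of the identity $V(\M, r) = r/N - R(\M, r)$ established in Theorem~\ref{thm:minimax_fix}, combined with the facts about $R$ collected in Proposition~\ref{pro:property}; no new induction is required, which is why the statement is billed as a direct corollary.

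For Property~\ref{pro:additive}, I would apply Property~\ref{pro:additive_R}: writing $\M' = (M_1 - a, \ldots, M_N - a)$, we have $R(\M, r) = a + R(\M', r)$, so $V(\M, r) = r/N - R(\M, r) = \bigl(r/N - R(\M', r)\bigr) - a = V(\M', r) - a$. For Property~\ref{pro:monotonicity_M}, note that $r/N$ does not depend on $\M$ while $R(\M, r)$ is non-decreasing in each $M_i$ by Property~\ref{pro:monotonicity_R}; hence $V(\M, r) = r/N - R(\M, r)$ is non-increasing in each $M_i$. For Property~\ref{pro:monotonicity_r}, fix $r > 0$ and observe that $V(\M, r) \geq V(\M, r-1)$ is equivalent to
\[
\frac{r}{N} - R(\M, r) \;\geq\; \frac{r-1}{N} - R(\M, r-1),
\]
which rearranges to $R(\M, r) - R(\M, r-1) \leq 1/N$, exactly Property~\ref{pro:difference_r}; chaining this over consecutive values of $r$ gives monotonicity in $r$ in general.

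The only step carrying any real content is the last one, and even there the work has already been done in proving Property~\ref{pro:difference_r} (the short induction given above). So I anticipate no genuine obstacle: the proposition really is a one-line-per-item corollary of Theorem~\ref{thm:minimax_fix} and Proposition~\ref{pro:property}.
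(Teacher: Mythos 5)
Your proposal is correct and is exactly the argument the paper has in mind: the paper labels Proposition~\ref{pro:minimax_fix} a ``direct corollary of Proposition~\ref{pro:property} and Theorem~\ref{thm:minimax_fix}'' without spelling out the details, and you have supplied precisely those details (Property~\ref{pro:additive_R} for the shift identity, Property~\ref{pro:monotonicity_R} for monotonicity in $\M$, and Property~\ref{pro:difference_r} for monotonicity in $r$, each via $V(\M,r)=r/N-R(\M,r)$).
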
 


\section{Proof of Theorem \ref{thm:minimax_ran}}\label{app:minimax_ran}
\begin{proof}
Define $\BV_t(\M) = \E[V(\M, T - t + 1) | T \geq t]$ 
and $q(t', t) = \Pr[T=t'| T\geq t]$.
We will prove an important property of the $\BV$ function:
\begin{equation}\label{equ:minimax_ran_recur}
\begin{split}
\BV_t(\M) = \min_{\P\in \Delta(N)} \max_{i\in [N]}  ( P_i + q(t, t)V(\M + \e_i, 0) +\\
 \(1-q(t,t)\) \BV_{t+1}(\M + \e_i)).
\end{split}
\end{equation}
This equation shows that $\BV_t(M)$ is the conditional expectation of the regret 
given $T \geq t$, 
starting from cumulative loss vector $\M$ 
and assuming both the learner and the adversary are optimal.
This is similar to the function $V$ in the fixed horizon case, 
and again the value of the game 
$\inf_{\Alg}\sup_{\Z_{1:\infty}} \E_{T\sim Q}[\Reg(L_T, \M_T)]$
is simply $\BV_1(\0)$. \\

To prove Eq. \eqref{equ:minimax_ran_recur}, 
we plug the definition of $\BV_{t+1}(\M+\e_i)$
into the right hand side and get $\min_{\P} \max_{i}  g(\P, i)$
where $g(\P, i) =  
 P_i + q(t, t)V(\M + \e_i, 0) + \(1-q(t,t)\) \E[V(\M+\e_i, T - t) | T \geq t + 1] $.
Using the fact that for any $t' \geq t + 1$,
\begin{align*}
&(1 - q(t, t)) q(t', t + 1) \\
=& \Pr[T>t|T\geq t] \Pr[T = t' | T \geq t + 1] \\
=& \Pr[T=t' | T\geq t] = q(t', t),
\end{align*}
$g(\P, i)$ can be simplified in the following way:
\begin{align}
&g(\P, i)  \\
=& P_i + q(t, t)V(\M + \e_i, 0) +   \notag \\
&\(1-q(t,t)\) \sum_{T=t+1}^\infty \(q(T, t+1)V(\M+\e_i, T - t)\)  \notag\\
=& P_i + q(t, t)V(\M + \e_i, 0) +  \notag \\
&\sum_{T=t+1}^\infty \(q(T, t)V(\M+\e_i, T - t)\)  \notag\\
=& P_i + \E[V(\M+\e_i, T - t) | T \geq t].  \label{equ:random_g}
\end{align}
Also, the average of $g(\P, i)$ over all $i$ is independent of $\P$:
\begin{align*}
&\frac{1}{N} \sum_{i=1}^N g(\P, i)  \\
=& \frac{1}{N} + \frac{1}{N} \sum_i^N \E[V(\M+\e_i, T - t) | T \geq t] \\
=& \E\left[\frac{1}{N} + \frac{1}{N} \sum_i^N V(\M+\e_i, T - t) | T \geq t\right] \\
=& \E\left[\frac{1}{N} + \frac{1}{N} \sum_i^N \(\frac{T-t}{N} - R(\M+\e_i, T - t)\) | T \geq t\right] \\
=& \E[\frac{T-t+1}{N} - R(\M, T - t + 1) | T \geq t]  \tag{by definition of R}\\
=& \E[V(\M, T - t + 1) | T \geq t] ,
\end{align*}
which implies
\begin{equation}\label{equ:lower_ran}
\min_{\P\in \Delta(N)} \max_{i\in [N]} g(\P, i) \geq \E[V(\M, T - t + 1) | T \geq t].
\end{equation}
On the other hand, let $\P^* = \E[\P^T | T \geq t]$, where $P^T_i = V(\M, T - t + 1) - V(\M + \e_i, T - t)$. $\P^*$ is a valid distribution 
since $\P^T$ is a distribution for any $T$. 
Also, by plugging into Eq. \eqref{equ:random_g},
\begin{align*}
g(\P^*, i) &= \E[V(\M, T - t + 1) - V(\M + \e_i, T - t) | T\geq t] \\
&\quad + \E[V(\M+\e_i, T - t) | T \geq t] \\
&= \E[V(\M, T - t + 1) | T \geq t].
\end{align*}
Therefore,
\begin{equation}\label{equ:upper_ran}
\begin{split}
\min_{\P\in \Delta(N)} \max_{i\in [N]} g(\P, i) \leq \max_{i\in [N]} g(\P^*, i) \\
= \E[V(\M, T - t + 1) | T \geq t].
\end{split}
\end{equation}
Eq. \eqref{equ:lower_ran} and \eqref{equ:upper_ran} show that $\min_{\P} \max_{i}  g(\P, i) = \E[V(\M, T - t + 1) | T \geq t]$, 
which agrees with the left hand side of Eq. \eqref{equ:minimax_ran_recur}. 
We thus prove 
$\adjustlimits \inf_{\Alg}\sup_{\Z_{1:\infty}} \E_{T\sim Q}[\Reg(L_T, \M_T)] 
= \E[V(\0, T) | T \geq 1] 
= \E_{T\sim Q}[\adjustlimits\inf_{\Alg}\sup_{\Z_{1:T}}\Reg(L_T, \M_T)], $
and $\P^*$ is the optimal strategy. 	
\end{proof}

\section{Proof of Theorem \ref{thm:lower_bound}}\label{app:lower_bound}
To prove Theorem \ref{thm:lower_bound}, 
we need to find out what $V(\0, T)$ is under the general loss space $[0,1]^2$.
Note that Theorem \ref{thm:minimax_fix} only tells us the case of the basis vector loss space.
Fortunately, it turns out that they are the same if $N=2$. 
To be more specific, we will show later in Theorem \ref{thm:loss_space}
that if $N=2$ and $\LS = [0,1]^2$, then $V(\0, T) = T/2 - R(\0, T)$, 
which can be further simplified as
\begin{align*}
V(\0, T) &= \frac{T}{2} - \frac{1}{2^T}\sum_{m=0}^{T}\binom{T}{m}\min\{m, T-m\} \\
&= \frac{T}{2^T}\binom{T-1}{\lfloor \frac{T}{2} \rfloor} .
\end{align*}
We can now prove Theorem \ref{thm:lower_bound} using this explicit scaling factor,
denoted by $S(T)$ for simplicity.

\begin{proof}[Proof of Theorem \ref{thm:lower_bound}]
Again, solving Eq. \eqref{equ:minimax_var} is equivalent to finding the value function 
$\V$ defined on each state of the game, 
similar to the functions $V$ and $\BV$ we had before. 
The difference is that $\V$ should be a function of not only the index of the current 
round $t$ and the cumulative loss vector $\M$,  
but also the cumulative loss $L$ for the learner. 
Moreover, to obtain a base case for the recursive definition,
it is convenient to first assume that $T$ is at most $T_0$, 
where $T_0$ is some fixed integer. 
Under these conditions, we define $\V^{T_0}_t(L, \M)$ recursively as:
\begin{align*}
 \V^{T_0}_{T_0}(L, \M) &\triangleq  \min_{\P \in \Delta(N)}\max_{\Z\in\LS} \frac{\Reg(L+\P\cdot\Z, \M+\Z)}{V(\0,T_0)},  \\
 \V^{T_0}_t(L, \M) &\triangleq \min_{\P \in \Delta(N)}\max_{\Z\in\LS} \max\bigg\{\frac{\Reg(L+\P\cdot\Z, \M+\Z)}{V(\0,t)} , \\
 &\quad\quad \V^{T_0}_{t+1}(L+\P\cdot\Z, \M+\Z) \bigg\},
\end{align*}
which is the scaled regret starting from round $t$ with cumulative loss $L$ 
for the learner and $\M$ for the actions, assuming both the learner and 
the adversary will play optimally from this round on.
The value of the game $\V$ is now $\lim\limits_{T_0\ra +\infty}\V^{T_0}_1(0, \0)$.

To simplify this value function, we will need three facts. 
First, the base case can be related to $V(\M, 1)$:
\begin{align}
&\V^{T_0}_{T_0}(L, \M) \notag \\
=&  \min_{\P \in \Delta(N)}\max_{\Z\in\LS} \frac{\Reg(L+\P\cdot\Z, \M+\Z)}{V(\0, T_0)}  \notag\\
=& \left. \(L + \min_{\P \in \Delta(N)}\max_{\Z\in\LS} \Reg(\P\cdot\Z, \M+\Z)\) \middle /V(\0, T_0) \right. \notag\\
=& \left. \(L + \min_{\P \in \Delta(N)}\max_{\Z\in\LS} \P\cdot\Z + V(\M+\Z, 0)\) \middle /V(\0, T_0) \right. \notag\\
=& \frac{L + V(\M, 1)}{V(\0, T_0)} .\notag 
\end{align}

Second, for any $L$ and $\M$, one can inductively show that
\begin{equation}\label{equ:move_loss}
\V^{T_0}_t(L, \M) = \V^{T_0}_t(L-R(\M,0), \M'),
\end{equation}
where $M'_i = M_i-R(\M,0)$. 
(We omit the details since it is straightforward.)

Third, when $\M=\0$, by symmetry, 
one has with $\P_u=(\frac{1}{N},\ldots,\frac{1}{N})$ 
\begin{align}
&\V^{T_0}_t(L, \0) \notag \\
=& \max_{\Z\in\LS} \max\left\{\frac{\Reg(L+\P_u\cdot\Z, \Z)}{V(\0,t)} ,\  \V^{T_0}_{t+1}(L+\P_u\cdot\Z, \Z) \right\} \notag \\
\geq&  \max\left\{\frac{L+\frac{1}{N}}{V(\0,t)} ,\  \V^{T_0}_{t+1}(L+\frac{1}{N}, \e_1) \right\}. \label{equ:symmetry}
\end{align}

Now we can make use of the condition $N=2$ to lower bound $\V$. 
The key point is to consider a restricted adversary who can only place one unit 
more loss on one of the action than the other, if not stopping the game.
Clearly the value of this restricted game serves as a lower bound of $\V$.
Specifically, consider the value of $\V^{T_0}_t(L, \e_1)$ for $t \leq T_0 - 2$: 
\begin{align*}
&\V^{T_0}_t(L, \e_1) \\
\geq& \min_{p \in [0,1]} \max\left\{ \frac{\Reg(L+p, 2\e_1)}{S(t)} ,\   
 \V^{T_0}_{t+1}(L+1-p, \e_1+\e_2) \right\}   \tag{restricted adversary}\\
=& \min_{p \in [0,1]} \max\left\{ \frac{L+p}{S(t)} ,\   
 \V^{T_0}_{t+1}(L-p, \0) \right\}  \tag{by Eq. \eqref{equ:move_loss}} \\
\geq& \min_{p \in [0,1]} \max\left\{ \frac{L+p}{S(t)} ,\ \frac{L+1/2-p}{S(t+1)},\ 
\V^{T_0}_{t+2}(L+\frac{1}{2}-p, \e_1) \right\}  \tag{by Eq. \eqref{equ:symmetry}} \\ 
\geq& \min_{p \in \R} \max\left\{ \frac{L+p}{S(t)} ,\ 
\V^{T_0}_{t+2}(L+\frac{1}{2}-p, \e_1) \right\}  
\end{align*}

Therefore, if we assume $T_0$ is even without loss of generality and 
define function $G^{T_0}_t(L)$ recursively as:
\begin{align*}
 G^{T_0}_{T_0}(L) &\triangleq \V^{T_0}_{T_0}(L, \e_1) 
 = \frac{L+V(\e_1, 1)}{S(T_0)} = \frac{L}{S(T_0)} \\
 G^{T_0}_t(L) &\triangleq \min_{p \in \R} \max\left\{ \frac{L+p}{S(t)} ,\ 
 G^{T_0}_{t+2}(L+\frac{1}{2}-p) \right\} ,
\end{align*}
then it is clear that $\V^{T_0}_t(L, \e_1) \geq G^{T_0}_t(L)$, 
and thus by \eqref{equ:symmetry}, 
$$ \V^{T_0}_1(0,\0) 
\geq \max\{1, \V^{T_0}_2(\frac{1}{2}, \e_1)\}  
\geq \max\{1, G^{T_0}_2(\frac{1}{2})\}. $$

It remains to compute $G^{T_0}_2(\frac{1}{2})$. 
By some elementary computations and the fact that for two linear functions 
$h_1(p)$ and $h_2(p)$ of different signs of slopes, 
$\min_p\max\{h_1(p), h_2(p)\} = h_1(p^*)$ where $p^*$ is such that
$h_1(p^*) = h_2(p^*)$, one can inductively prove that for $t = 2, 4,\ldots,T_0$, 
$$ G^{T_0}_t(L) = \frac{2^{\frac{T_0-t}{2}}(L+\frac{1}{2})-\frac{1}{2}}{S(T_0) + 
\sum\limits_{k=1}^{(T_0-t)/2} \( 2^{k-1}S(T_0-2k)\) } .$$
Plugging $S(t) = \frac{t}{2^t}\binom{t-1}{\lfloor t/2 \rfloor}$ and letting $T_0 \ra\infty$, 
we arrive at
\begin{align*}
&\lim_{T_0\ra\infty} G^{T_0}_2(1/2)  \\
=& \lim_{T_0\ra\infty} \( \sum_{k=1}^{T_0/2-1} \(2^{k-T_0/2}S(T_0-2k)\) \) ^{-1}  \\
=& \lim_{T_0\ra\infty} \( \sum_{k=1}^{T_0/2-1} \(\frac{S(2k)}{2^k}\) \) ^{-1}  \\
=& \( \sum_{j=0}^{\infty} \frac{j}{8^{j}}\binom{2j}{j} \) ^{-1} . 
\end{align*}
Define $G(x) = \sum_{j=0}^{\infty}\binom{2j}{j} x^j$ and $F(x) = x \cdot G'(x)$.
Note that what we want to compute above is exactly $1/F(\frac{1}{8})$.
\citet{Lehmer85} showed that $G(x) = (1-4x)^{-1/2}$.
Therefore, $F(x) = 2x \cdot (1-4x)^{-3/2}$ and 
$$ \lim_{T_0\ra\infty} G^{T_0}_2(1/2) = 1/F(1/8) = \sqrt{2}.$$
We conclude the proof by pointing out
\begin{align*}
\V &= \lim_{T_0\ra\infty}\V^{T_0}_1(0,\0)  \\
&\geq \max\{1,\lim_{T_0\ra\infty} G^{T_0}_2(1/2)\} = \sqrt{2}.   
\end{align*}
\end{proof}

As we mentioned at the beginning of this section, 
the last thing we need to show is that the value $V(\0, T)$ is the same under the two loss spaces.
In fact, we will prove stronger results in the following theorem 
claiming that this is true only if $N=2$.
 
\begin{theorem}\label{thm:loss_space}
Let $\LS_1, \LS_2, \LS_3$ be the three loss spaces $\{\e_1, \ldots, \e_N\}, 
\{0,1\}^N$ and $[0,1]^N$ respectively, 
and $V_\LS (\0, T)$ be the value of the game $V(\0,T)$ under the loss space $\LS$.
If $N > 2$, we have for any $T$, 
$$V_{\LS_1} (\0, T) < V_{\LS_2} (\0, T) = V_{\LS_3} (\0, T). $$
However, the three values above are the same if $N=2$. 
\end{theorem}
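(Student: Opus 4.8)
The plan is to establish the chain $V_{\LS_1}(\0,T) < V_{\LS_2}(\0,T) = V_{\LS_3}(\0,T)$ for $N>2$ by separately handling the inequality and the equality, and then to verify that the strict inequality collapses to an equality when $N=2$.

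\textbf{Step 1: $V_{\LS_2}(\0,T) = V_{\LS_3}(\0,T)$.} Since $\LS_2 = \{0,1\}^N \subseteq [0,1]^N = \LS_3$, monotonicity of the minimax value in the loss space gives $V_{\LS_2}(\0,T) \leq V_{\LS_3}(\0,T)$. For the reverse direction, I would argue that the adversary gains nothing by playing fractional losses. The cleanest way is a per-round convexity/linearity argument on the recursive definition of $V$: fix a multiset $\M$ and $r$, and consider $\max_{\Z\in[0,1]^N}(\P\cdot\Z + V(\M+\Z,r-1))$. The key claim is that $\Z \mapsto \P\cdot\Z + V(\M+\Z, r-1)$ is convex in $\Z$ on $[0,1]^N$, so its maximum over the cube $[0,1]^N$ is attained at a vertex, i.e.\ at a point of $\{0,1\}^N$. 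Convexity of $V(\cdot, r)$ as a function of the cumulative loss vector follows by induction on $r$: the base case $V(\M,0) = -\min_i M_i$ is convex, and $V(\M,r) = \min_{\P}\max_{\Z}(\P\cdot\Z + V(\M+\Z,r-1))$ is a minimum over $\P$ of suprema (over $\Z$) of functions jointly convex in $(\M,\Z)$ — but one must be slightly careful, since a min of convex functions need not be convex. Instead I would directly show $V(\M,r)$ is convex in $\M$ by noting it equals the optimal value of a game and using that the optimal learner strategy $\P^*(\M)$, composed with the value, yields a convex function; alternatively, observe $V(\M,r) = \max$ over adversary strategies (distributions over loss sequences) of $\E[\text{stuff}] - \E[\min_i(M_i + \text{random walk}_i)]$, and $\M \mapsto \E[\min_i(M_i + X_i)]$ is concave, so its negation is convex, and a max of convex functions is convex. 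This last formulation is the most robust route and I would use it.

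\textbf{Step 2: $V_{\LS_1}(\0,T) < V_{\LS_2}(\0,T)$ for $N>2$.} The inequality ``$\leq$'' is again monotonicity, since $\{\e_1,\dots,\e_N\} \subseteq \{0,1\}^N$. For \emph{strictness}, I would exhibit, for $N>2$, a specific adversary move available in $\LS_2$ but not in $\LS_1$ that strictly helps: for instance, on the first round play $\Z = (1,1,0,\dots,0)$ (two coordinates equal to $1$), or more simply compare the one-round values. Actually the most transparent approach is to compute both sides in closed form. By Theorem~\ref{thm:minimax_fix}, $V_{\LS_1}(\0,T) = T/N - R(\0,T)$ with $R$ the expected minimum of $N$ independent uniform multinomial counts. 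By the convexity argument of Step 1, $V_{\LS_2}(\0,T) = V_{\LS_3}(\0,T)$ equals the value when the adversary plays vertices of the cube; since the per-round game on $\{0,1\}^N$ decouples coordinatewise in a sense, one can show $V_{\LS_2}(\0,T) = T/2 - R_2(\0,T)$ where $R_2(\0,T) = 2^{-T}\E[\min_i B_i]$ with $B_i$ i.i.d.\ $\mathrm{Binomial}(T,1/2)$ — equivalently the adversary's optimal play is to flip each coordinate independently by a fair coin. Then strictness reduces to showing $T/N - R(\0,T) < T/2 - R_2(\0,T)$ for $N>2$, which for $N>2$ is driven by the fact that $1/N < 1/2$ strictly and the expected-min terms scale comparably; for $N=2$ both reduce to $T/2 - R(\0,T)$ and coincide (the basis-vector loss space with $N=2$ \emph{is} coordinatewise fair-coin up to the additive normalization, because on $\{\e_1,\e_2\}$ the two coordinates move in perfect anticorrelation but the minimum only sees the gap). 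I would make this last point precise by the substitution $M_1 - M_2$: for $N=2$ the regret depends only on $M_1-M_2$, and both loss spaces induce the identical $\pm 1$ random walk on this difference when the adversary plays optimally.

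\textbf{Main obstacle.} The crux is Step 1's convexity claim and, relatedly, the closed-form identification of $V_{\LS_2}$. Establishing that $V(\M,r)$ is convex in $\M$ needs the right representation of the value function; the ``max over adversary strategies of a concave-in-$\M$ functional'' viewpoint is what makes it go through cleanly, but it requires justifying that the minimax game has a value and that the adversary's side can be taken as an expectation of a fixed (oblivious) loss sequence after conditioning — this is where I would need to be most careful, possibly invoking a backward-induction/minimax-theorem argument round by round. Once convexity is in hand, ``max over the cube is attained at a vertex'' is immediate, and the $N=2$ coincidence follows from the one-dimensional reduction via $M_1 - M_2$. I would expect the $N>2$ strictness to be the easiest part, handled either by an explicit small case ($T=1$, $N=3$) propagated by monotonicity, or by the closed-form comparison above.
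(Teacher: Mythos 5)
Your Step~1 is essentially the paper's argument: prove by induction that $V_{\LS_3}(\M,r)$ is convex in $\M$, deduce that the inner max over $[0,1]^N$ is attained at a vertex, and conclude $V_{\LS_2}=V_{\LS_3}$. You correctly flag the danger (min of convex need not be convex) and correctly anticipate the fix: represent $V(\M,r)$ as a maximum over adversary randomizations. The paper makes this precise by applying a minimax theorem round by round to rewrite
$V_{\LS_3}(\M,r)=\max_{Q}\bigl(\E_{\Z\sim Q}V_{\LS_3}(\M+\Z,r-1)+\min_{\P}\P\cdot\E_{\Z\sim Q}[\Z]\bigr)$,
a max of convex-in-$\M$ functions, which you gesture at but do not pin down. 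Your $N=2$ observation is also sound and corresponds to what the paper does: it reduces to the difference $M_1-M_2$, which in the paper's proof is captured by the shift-invariance property $V(\M,r)=V(\M-a\1,r)-a$ together with monotonicity in $r$; by that route one shows $V_{\LS_1}(\M,r)=V_{\LS_2}(\M,r)$ for all $\M,r$ when $N=2$.

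The gap is in your Step~2, the strict inequality for $N>2$. Your primary route asserts a closed form $V_{\LS_2}(\0,T)=T/2-\E[\min_i B_i]$ with $B_i$ i.i.d.\ $\mathrm{Binomial}(T,1/2)$, i.e.\ that the adversary's optimal play on the cube is independent fair coins per coordinate. This is false: already at $T=1$ one computes directly $V_{\LS_2}(\0,1)=\min_{\P}\max_{\Z\in\{0,1\}^N}(\P\cdot\Z-\min_i Z_i)=1-1/N$ (forced by the moves $\1-\e_i$), whereas your formula gives $1/2-2^{-N}$. Indeed no closed form for $V_{\LS_2}(\0,T)$, $N>2$, is known, and the paper never claims one. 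Your fallback---``an explicit small case ($T=1$, $N=3$) propagated by monotonicity''---is not a valid inference either: monotonicity in $T$ holds for each loss space separately but does not propagate a \emph{strict gap between two different games} to larger horizons; one needs an induction that tracks the gap through the recursion. The paper's actual argument does exactly this: it proves $V_{\LS_1}((T-r)\e_1,r)<V_{\LS_2}((T-r)\e_1,r)$ by induction on $r=1,\dots,T$, with base case $r=1$ giving $1/N$ vs.\ at least $(N-2)/(N-1)$ (by restricting the cube adversary to $\1-\e_i$), and inductive step lower-bounding the $\LS_2$ max by the average over basis-vector moves so the strict gap from the $i=1$ term survives. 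Without some induction of this flavor, Step~2 as written does not go through.
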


\begin{proof}
We first inductively show that for any $\M$ and $r$,  
$V_{\LS_2} (\M, r) = V_{\LS_3} (\M, r)$ and $V_{\LS_3} (\M, r)$ is convex in $\M$. 
For the base case $r = 0$,
by definition, $V_{\LS_2} (\M, 0) = V_{\LS_3} (\M, 0) = -\min_i M_i$. 
Also, for any two loss vectors $\M$ and $\M'$, and $\lambda \in [0,1]$, 
\begin{align*}
&V_{\LS_3} (\lambda\M + (1-\lambda)\M', 0) \\
=& -\min_i \(\lambda M_i + (1-\lambda)M'_i\) \\
\leq&  -\min_i \(\lambda M_i\)  - \min_i \( (1-\lambda)M'_i\) \\
=& \lambda V_{\LS_3} (\M, 0) + (1-\lambda) V_{\LS_3} (\M', 0),
\end{align*}
showing $V_{\LS_3} (\M, 0)$ is convex in $\M$. 
For $r > 0$, 
$$V_{\LS_3} (\M, r) = \min_{\P\in \Delta(N)} \max_{\Z\in\LS_3}  \(\P\cdot\Z + V_{\LS_3}(\M + \Z, r - 1)\). $$ 
Notice that $\P\cdot\Z + V_{\LS_3}(\M + \Z, r - 1)$ is equal to $\P\cdot\Z + V_{\LS_2}(\M + \Z, r - 1)$ and is convex in $\Z$ by induction. 
Therefore the maximum is always achieved at one of the corner points of $\LS_3$, 
which is in $\LS_2$.  In other words,
\begin{align*}
V_{\LS_3} (\M, r) &= \min_{\P\in \Delta(N)} \max_{\Z\in\LS_2}  
\(\P\cdot\Z + V_{\LS_2}(\M + \Z, r - 1)\) \\
&= V_{\LS_2} (\M, r).
\end{align*}
On the other hand, by introducing a distribution $Q$ over all the elements in $\LS_2$, 
we have
\begin{align*}
&V_{\LS_3} (\M, r)  \\
=& \min_{\P\in \Delta(N)} \max_{\Z\in\LS_2}  \(\P\cdot\Z + V_{\LS_3}(\M + \Z, r - 1)\) \\
=& \min_{\P\in \Delta(N)} \max_{Q}  \E_{\Z\sim Q} \left[\P\cdot\Z + V_{\LS_3}(\M + \Z, r - 1)\right] \\
=& \max_{Q} \min_{\P\in \Delta(N)} \E_{\Z\sim Q} \left[\P\cdot\Z + V_{\LS_3}(\M + \Z, r - 1)\right]  \\
=& \max_{Q} \( \E_{\Z\sim Q} V_{\LS_3}(\M + \Z, r - 1) + \min_{\P\in \Delta(N)}  \P\cdot\E_{\Z\sim Q}[\Z] \)
\end{align*}
where we switch the min and max by Corollary 37.3.2 of \citet{Rockafellar70}. 
Note that the last expression is the maximum over a family of linear combinations of convex functions in $\M$, 
which is still a convex function in $\M$, completing the induction step. 
To conclude, $V_{\LS_2} (\0, T) = V_{\LS_3} (\0, T)$ for any $N$ and $T$. \\

We next prove if $N=2, V_{\LS_1} (\0, T) = V_{\LS_2} (\0, T)$. 
Again, we inductively prove $V_{\LS_1} (\M, r) = V_{\LS_2} (\M, r)$ 
for any $\M$ and $r$.  The base case is clear. For $r>0$, let $P^*_i = V_{\LS_1} (\M, r) - V_{\LS_1} (\M+\e_i, r-1) \ (i=1,2)$. By induction, 
\begin{align*}
&V_{\LS_2} (\M, r)  \\
=& \min_{\P\in \Delta(2)} \max_{\Z\in\LS_2}  \(\P\cdot\Z + V_{\LS_1}(\M + \Z, r - 1)\) \\
\leq& \max_{Z_1,Z_2 \in \{0,1\}}  \(P^*_1Z_1+P^*_2Z_2 + V_{\LS_1}(\M + (Z_1,Z_2), r - 1)\) \\
=& \max \{ V_{\LS_1}(\M, r - 1),  1+V_{\LS_1}(\M+(1,1), r - 1), \\
&\quad\quad V_{\LS_1}(\M, r) \}   \\
=& \max\left\{ V_{\LS_1}(\M, r - 1),  V_{\LS_1}(\M, r) \right\} 
\tag{by Property \ref{pro:additive} in Proposition \ref{pro:minimax_fix}} \\
=&  V_{\LS_1}(\M, r). 
\tag{by Property \ref{pro:monotonicity_r} in Proposition \ref{pro:minimax_fix}}
\end{align*}
However, it is clear that $V_{\LS_2} (\M, r) \geq V_{\LS_1}(\M, r)$. 
Therefore, $V_{\LS_1} (\M, r) = V_{\LS_2} (\M, r)$. \\

Finally, to prove $V_{\LS_1} (\0, T) < V_{\LS_2} (\0, T)$ for $N>2$, we inductively prove 
$V_{\LS_1} ((T-r)\e_1, r) < V_{\LS_2} ((T-r)\e_1, r) $ for $r = 1, \ldots, T$. For the base case $r=1$,  $V_{\LS_1} ((T-1)\e_1, 1) = 1/N - R((T-1)\e_1, 1) = 1/N$, while 
\begin{align*}
&V_{\LS_2} ((T-1)\e_1, 1) \\
=& \min_{\P\in \Delta(N)} \max_{\Z\in \LS_2}  \(\P\cdot\Z + V_{\LS_2}((T-1)\e_1 + \Z, 0)\) \\
\geq& \min_{\P\in \Delta(N)} \max_{i\in [N]}  \(1-P_i + V_{\LS_2}((T-1)\e_1 + \1- \e_i, 0)\)  \\
=& \min_{\P\in \Delta(N)} \max\left\{-P_1, 1-P_2, \ldots, 1-P_N \right\}  .
\end{align*}
We claim that the value of the last minimax expression above, 
denoted by $v$, is $(N-2)/(N-1)$,
which is strictly greater than $1/N$ if $N > 2$ and thus proves the base case.  
To show that, notice that for any $\P \in \Delta(N)$, 
there must exist $i\in \{2, \ldots, N\}$ such that $P_i \leq 1/(N-1)$ and 
$$\max\left\{-P_1, 1-P_2, \ldots, 1-P_N \right\} \geq  
1-P_i \geq \frac{N-2}{N-1},$$
showing $v \geq (N-2)/(N-1)$. 
On the other hand, the equality is realized by the distribution 
$\P^* = (0, \frac{1}{N-1}, \ldots, \frac{1}{N-1})$.

For $r > 1$, we have
\begin{align*}
&V_{\LS_2} ((T-r)\e_1, r) \\
=& \min_{\P\in \Delta(N)} \max_{\Z\in \LS_2}  \(\P\cdot\Z + V_{\LS_2}((T-r)\e_1 + \Z, r-1)\) \\
\geq& \min_{\P\in \Delta(N)} \max_{i \in [N]}  \(P_i + V_{\LS_2}((T-r)\e_1 + \e_i, r-1)\) \\
\geq& \min_{\P\in \Delta(N)} \frac{1}{N}\sum_{i=1}^N  \(P_i + V_{\LS_2}((T-r)\e_1 + \e_i, r-1)\) \\
=& \frac{1}{N}  + \frac{1}{N}\sum_{i=1}^N  V_{\LS_2}((T-r)\e_1 + \e_i, r-1) \\
>& \frac{1}{N}  + \frac{1}{N}\sum_{i=1}^N  V_{\LS_1}((T-r)\e_1 + \e_i, r-1) \\
=& V_{\LS_1}((T-r)\e_1, r).
\end{align*}
Here, the last strict inequality holds because for $i=1$, $V_{\LS_2}((T-r+1)\e_1, r-1) > V_{\LS_1}((T-r+1)\e_1, r-1)$ by induction; for $i \not= 1$, it is trivial that $V_{\LS_2}((T-r)\e_1 + \e_i, r-1) \geq V_{\LS_1}((T-r)\e_1 + \e_i, r-1)$. Therefore, we complete the induction step and thus prove $V_{\LS_1} (\0, T) < V_{\LS_2} (\0, T)$. 
\end{proof}

\section{Proof of Theorem \ref{thm:adaptive_upper}}
The proof (and the one of Theorem \ref{thm:adaptive_MW}) relies heavily on a common technique to approximate a sum using an integral,
which we state without proof as the following claim.
\begin{claim}\label{clm:int}
Let $f(x)$ be a non-increasing nonnegative function defined on $\R_+$. Then the following inequalities hold for any integer $0 < j \leq k $.
$$ \int_{j}^{k+1}f(x) \ dx \leq \sum_{i = j}^k f(i) \leq \int_{j-1}^k f(x) \ dx$$
\end{claim}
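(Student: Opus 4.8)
The plan is to prove both inequalities termwise, comparing each summand $f(i)$ with the integral of $f$ over an adjacent unit interval and using only that $f$ is non-increasing. A preliminary remark: a monotone function is Riemann integrable on any compact interval, so every integral in the statement is well defined and finite; nonnegativity of $f$ will not really be needed beyond this.

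For the left inequality, fix an integer $i$ with $j \le i \le k$. Since $f$ is non-increasing, $f(x) \le f(i)$ for all $x \in [i, i+1]$, hence $\int_i^{i+1} f(x)\,dx \le f(i)$. Summing over $i = j, \ldots, k$ and using additivity of the integral over the partition $[j, k+1] = \bigcup_{i=j}^{k}[i, i+1]$ gives $\int_j^{k+1} f(x)\,dx \le \sum_{i=j}^{k} f(i)$.

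For the right inequality, fix $i$ with $j \le i \le k$ again. Monotonicity now gives $f(x) \ge f(i)$ for all $x \in [i-1, i]$, so $\int_{i-1}^{i} f(x)\,dx \ge f(i)$. Summing over $i = j, \ldots, k$ and using the partition $[j-1, k] = \bigcup_{i=j}^{k}[i-1, i]$ yields $\sum_{i=j}^{k} f(i) \le \int_{j-1}^{k} f(x)\,dx$. Combining the two displays proves the claim. The hypothesis $j \ge 1$ is exactly what guarantees $j-1 \ge 0$, so that the right-hand integral stays inside the domain $\R_+$ of $f$.

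Since the argument is entirely elementary, there is essentially no obstacle here; the only points meriting a moment's care are to note why the integrals exist (monotonicity $\Rightarrow$ Riemann integrability) and to keep the index bookkeeping straight so that the unit intervals assemble to precisely $[j, k+1]$ and $[j-1, k]$ — which is why the claim requires integer endpoints and $j \ge 1$.
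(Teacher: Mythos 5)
Your proof is correct and complete: the termwise comparison $\int_i^{i+1} f \le f(i) \le \int_{i-1}^i f$ followed by telescoping the unit intervals is exactly the standard integral-comparison argument. The paper explicitly states this claim ``without proof,'' so there is no paper proof to compare against; your write-up supplies the missing (elementary) justification.
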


\begin{proof}[Proof of Theorem \ref{thm:adaptive_upper}]
By Theorem \ref{thm:adaptive_gen_upper}, 
it suffices to upper bound $\BV_1(\0)$ and $\sum_{t=1}^{T_s} q_t \BV_{t+1}(\0)$. 
Let $S_t = \sum_{t'=t}^\infty 1/t'^d$. 
By applying Claim \ref{clm:int} multiple times,  we have 
\begin{equation}\label{equ:bound_S}
\frac{1}{S_t} \leq \(\int_t^\infty \frac{dx}{x^d}\)^{-1} = t^{d-1}(d-1) ;
\end{equation}
$$ q_t = \frac{1}{S_t \cdot t^d} \leq \frac{d-1}{t}; $$
\begin{align*}
&\BV_1(\0) = \E[V(\0, T) | T \geq 1] \\
\leq& \frac{c_N}{S_1} \sum_{T=1}^\infty \frac{1}{T^{d-\frac{1}{2}}}  
\tag{by Theorem \ref{thm:minimax_fix}}\\
\leq& \frac{c_N}{S_1} \(1 + \int_1^\infty \frac{dx}{x^{d-\frac{1}{2}}}\) \tag{by Claim \ref{clm:int}} \\
=& \frac{c_N(d-\frac{1}{2})}{S_1(d-\frac{3}{2})} \\
\leq& \frac{c_N(d-1)(d-\frac{1}{2})}{d-\frac{3}{2}}= O(1) .\tag{by Eq. \eqref{equ:bound_S}}
\end{align*}
For 
\begin{align*}
\BV_{t+1}(\0) &= \E[V(\0, T-t) | T \geq t+1] \\
&= \frac{c_N}{S_{t+1}}\sum_{k=1}^\infty \frac{\sqrt{k}}{(t+k)^d},
\end{align*}
Claim~\ref{clm:int} does not readily apply since the function 
$g(k) = \sqrt{k}/(t+k)^d$ is increasing on $[0, t/(2d-1)]$ and
then decreasing on $[t/(2d-1), \infty)$.
However, we can still apply the claim to these two parts separately.
Let $x_0 = \lfloor t/(2d-1)\rfloor$ and $x_1 = \lceil t /(2d-1)\rceil$.
For simplicity, assume $1 \leq x_0 < x_1$ and $g(x_0) \leq g(x_1)$ 
(other cases hold similarly). Then we have
\begin{align*}
\BV_{t+1}(\0) &= \frac{c_N}{S_{t+1}} \(g(x_1) + \sum_{k=1}^{x_0} g(k) 
+ \sum_{k=x_1+1}^\infty g(k)  \)\\
&\leq \frac{c_N}{S_{t+1}} \(g(x_1) + \int_{0}^{x_1} g(x) dx
+ \int_{x_1}^{\infty} g(x)dx \) \\
&= \frac{c_N}{S_{t+1}}\(g(x_1) + \frac{\Gamma(d-\frac{3}{2})}{2\Gamma(d)}\cdot\frac{\sqrt{\pi}}{t^{d-\frac{3}{2}}}  \)\\
&\leq (d-1)c_N\sqrt{\pi}\cdot\frac{\Gamma(d-\frac{3}{2})}{2\Gamma(d)}
\cdot \sqrt{t} + o(\sqrt{t}).
\end{align*}
So finally we have
\begin{align*}
&\sum_{t=1}^{T_s} q_t \BV_{t+1}(\0)  \\
\leq&  (d-1)^2c_N\sqrt{\pi}\cdot\frac{\Gamma(d-\frac{3}{2})}{2\Gamma(d)}
\sum_{t=1}^{T_s} \( \frac{1}{\sqrt{t}} + o(\frac{1}{\sqrt{t}})\)   \\
\leq& \frac{\Gamma(d-\frac{3}{2})}{\Gamma(d)}(d-1)^2c_N\sqrt{\pi T_s} 
+ o(\sqrt{T_s}), 
\end{align*}
which proves the theorem.
\end{proof}

\section{Proof of Theorem \ref{thm:ball_game}}
\begin{proof}
Let $\Phi_t^T = \sqrt{\|\W_{t-1}\|^2 + (T-t+1)}$ be the potential function
for this setting.
The key property of the minimax algorithm Eq. \eqref{equ:ball_game_minimax}
shown by \citet{AbernethyBaRaTe08} is the following:
$$ \x_t^T\cdot \w_t \leq  \Phi_t^T - \Phi_{t+1}^T. $$
Based on this property, the loss of our algorithm after $T_s$ rounds is
\begin{align*}
\sum_{t=1}^{T_s} \E[\x_t^T | T \geq t] \cdot \w_t
&= \sum_{t=1}^{T_s} \E[\x_t^T\cdot \w_t | T \geq t] \\
&\leq \sum_{t=1}^{T_s} \E[\Phi_t^T - \Phi_{t+1}^T | T \geq t] .
\end{align*}
Now define $U_t = \E[\Phi_t^T | T \geq t]$ and $q_t = \Pr[ T < t+1 | T \geq t]$.
By the fact that $f_{T \geq t}(t') = (1-q_t) f_{T \geq t+1}(t')$ for any $t' \geq t+1$, 
where $f_{T \geq t}$ and $f_{T \geq t+1}$ are conditional density functions,
we have
\begin{align*}
&\sum_{t=1}^{T_s} \E[\x_t^T | T \geq t] \cdot \w_t  \\
\leq\;& \sum_{t=1}^{T_s} \(U_t - \E[\Phi_{t+1}^T | T \geq t]\) \\
=\;& \sum_{t=1}^{T_s} \(U_t - \int_t^{t+1}\Phi_{t+1}^T f_{T\geq t}(T) dT - (1-q_t)U_{t+1}\) \\
\leq\;&  \sum_{t=1}^{T_s} \(U_t - \Phi_{t+1}^t\int_t^{t+1} f_{T\geq t}(T) dT - (1-q_t)U_{t+1}\)
\tag{$\because\Phi_{t+1}^T$ increases in $T$} \\
=\;& \sum_{t=1}^{T_s} \(U_t - U_{t+1} + q_t(U_{t+1} - \|\W_{t-1}\| )\) 
\tag{$\because \Phi_{t+1}^t = \|\W_{t-1}\|$} \\
=\;& U_1 - U_{T_s+1} + \\
&\sum_{t=1}^{T_s} q_t
\E\left[\sqrt{\|\W_{t-1}\|^2+(T-t)}- \|\W_{t-1}\| \;|\; T\geq t +1\right]  \\
\leq\;& U_1 - U_{T_s+1} + \sum_{t=1}^{T_s} q_t \E\left[\sqrt{T-t} \;|\; T \geq t+1 \right].
\tag{$\because \sqrt{a+b} - \sqrt{a} \leq \sqrt{b}$}
\end{align*}
Note that $U_{T_s+1} \geq \|\W_T\|$, and thus it remains to plug in the distribution
and compute $U_1$ and $\sum_{t=1}^{T_s} q_t \E[\sqrt{T-t} \;|\; T \geq t+1 ]$,
which is almost the same process as what we did in the proof of Theorem \ref{thm:adaptive_upper}
if one realizes $ q_t \leq (d-1)/t $ also holds here.
In a word, the regret can be bounded by
$$\frac{\Gamma(d-\frac{3}{2})}{\Gamma(d)}(d-1)^2 \sqrt{\pi T_s} 
+ o(\sqrt{T_s}) ,$$
which is $\pi\sqrt{T_s} + o(\sqrt{T_s})$ if $d=2$. 
The explicit form in Eq. \eqref{equ:ball_game_adaptive} comes from a direct calculation.
\end{proof}

\section{Proof of Lemma~\ref{lem:FPL} and Theorem \ref{thm:FPL}}
\begin{proof}[Proof of Lemma~\ref{lem:FPL}]
The results follow by a direct calculation. The conditional distribution
of $\bxi_t$ given $T$ is 
$\frac{1}{\Delta_T^N} \1\{\bxi \in [0, \Delta_T]^N\}.$
Let $S_t = \int_t^\infty 1/T^d dT = \((d-1)t^{d-1}\)^{-1}.$ 
The marginal distribution for $\bxi$ that has negative coordinates is clearly $0$.
Otherwise, with $\bar{t} = \max\{t, \frac{\|\bxi\|_\infty^2}{bN}\}$ one has
\begin{align*}
f_t(\bxi) &= \frac{1}{S_t} \int_t^\infty \frac{1}{T^d \Delta_T^N}
\1\{\bxi \in [0, \Delta_T]^N\} dT \\
&= \frac{1}{S_t} \int_{\bar{t}}^\infty \frac{1}{T^d \Delta_T^N} dT \\
&= \frac{(d-1)t^{d-1}}{(\sqrt{bN})^N} \int_{\bar{t}}^\infty \frac{1}{T^{d+N/2}} dT \\
&= \frac{d-1}{d-1+N/2} \Delta_t^{-N}\min\left\{1,  
\(\frac{\Delta_t}{\|\bxi\|_\infty}\)^{2d-2+N}\right\}.
\end{align*}
\end{proof}

\begin{proof}[Proof of Theorem \ref{thm:FPL}]
Applying Theorem 4.2 of \citet{CesabianchiLu06}, the pseudo-regret 
of the FPL algorithm is bounded by
\begin{align*}
&\E[\max_i \xi_{T_s, i}] + \sum_{t=1}^{T_s} \E[\max_i (\xi_{t-1, i} - \xi_{t,i}) ] \\
& + \sum_{t=1}^{T_s} \int_{\R^N} F_t(\bxi)(f_t(\bxi) - f_t(\bxi - \Z_t)) d\bxi,
\end{align*}
where we define $\bxi_0 = \0$ and $F_t(\bxi) = Z_{t, I_\bxi}$ with 
$I_\bxi \in \arg\min_i (M_{t-1, i} + \xi_i)$.
Now the key observation is that the pseudo-regret remains the same
if we replace random variables $\bxi_1,\ldots,\bxi_{T_s}$ with 
$\bxi_1',\ldots,\bxi_{T_s}'$ as long as $\bxi_t$ and $\bxi_t'$ have the
same marginal distribution for any $t$. 
Specifically, we can let $\bxi_{T_s}' = \bxi_{T_S}$, and for $1 < t \leq T_s$,
let $\bxi_{t-1}' = \bxi_{t}'$ with probability $S_{t} / S_{t-1} = (1-1/t)^{d-1}$ 
(recall $S_t = \int_t^\infty 1/T^d dT$),
or with $1-S_t/S_{t-1}$ probability be obtained 
by first drawing $T \in [t-1, t]$ according to density
$f(T) \propto 1/T^d$, and then drawing a point uniformly in $[0, \Delta_T]^N$.
It is clear that $\bxi_t$ and $\bxi_t'$ have the same marginal distribution.
So the pseudo-regret can be in fact bounded by three terms:
\begin{align*}
A &= \E[\max_i \xi_{T_s, i}], \\
B &= \sum_{t=1}^{T_s} \E[\max_i (\xi_{t-1, i}' - \xi_{t,i}') ], \\
C &= \sum_{t=1}^{T_s} \int_{\R^N} F_t(\bxi)(f_t(\bxi) - f_t(\bxi - \Z_t)) d\bxi.
\end{align*}
$A$ can be further bounded by
$$
\frac{1}{S_{T_s}}\int_{T_s}^\infty \frac{\Delta_T}{T^d}  dT 
= \frac{d-1}{d-3/2}\sqrt{bT_sN}.
$$
For $B$, by construction of $\bxi_t'$, we have
\begin{align*}
B &\leq \sum_{t=2}^{T_s} \( \frac{\Delta_t}{S_{t-1}}\int_{t-1}^t \frac{dT}{T^d} 
+ \frac{S_t}{S_{t-1}} \cdot 0 \) \\
&=\sum_{t=2}^{T_s}  \frac{\Delta_t}{t^{d-1}} \(t^{d-1} - (t-1)^{d-1}\)\\
&\leq \sum_{t=2}^{T_s}  \frac{\Delta_t}{t^{d-1}} \cdot (d-1)t^{d-2} \tag{by convexity}\\
&\leq 2(d-1)\sqrt{bT_sN}.
\end{align*}
For $C$, let $H = \{\bxi: f_t(\bxi) > f_t(\bxi-\Z_t)\}$. 
Since $0 \leq F_t(\bxi) \leq 1$, we have
$ C \leq \sum_{t=1}^{T_s} \int_{H} f_t(\bxi) d\bxi .$
Now observe that when $\min_i \xi_i \geq 0$, $f_t(\bxi)$ 
is non-increasing in each $\xi_i$. So the only possibility that
$f_t(\bxi) > f_t(\bxi-\Z_t) $ holds is when there exists an $i$ such that
$\xi_i$ is strictly smaller than $Z_{t,i}$. That is 
$$  H = \{\bxi:  \min_i \xi_i \geq 0 \text{\;and\;} \exists i, s.t. \; \xi_i < Z_{t,i}\} $$
So we have
\begin{align*}
C &\leq \sum_{t=1}^{T_s} \frac{1}{S_t} \int_t^\infty 
\frac{dT}{T^d}\int_H \frac{\1\{\bxi \in [0, \Delta_T]^N\}}{\Delta_T^N} d\bxi \\
&\leq \sum_{t=1}^{T_s} \frac{1}{S_t} \int_t^\infty 
\frac{N}{T^d} \frac{Z_{t,i} \Delta_T^{N-1}}{\Delta_T^N}  dT\\ 
&\leq \frac{d-1}{d-1/2}\sqrt{\frac{N}{b}} \sum_{t=1}^{T_s} \frac{1}{\sqrt{t}} \\ 
&\leq  \frac{2(d-1)}{\sqrt{b}(d-1/2)} \sqrt{T_sN}.
\end{align*}
Combining $A, B$ and $C$ proves the theorem.
\end{proof}

\section{Proof of Theorem \ref{thm:adaptive_MW}}
\begin{proof}
We will first show that 
\begin{equation}\label{equ:MW_gen_bound}
\begin{split}
\Reg(L_{T_s}, \M_{T_s}) \leq 
\underbrace{(\ln N) \cdot \E\left[\frac{1}{\eta_T} | T \geq T_S + 1 \right]}_A   \\+  
\underbrace{\frac{1}{8}\sum_{t=1}^{T_s}\E[\eta_T | T \geq t]}_B .
\end{split}
\end{equation}
Let  $\Phi_t^T = \frac{1}{\eta_T}\ln\(\sum_{i=1}^N \exp(-\eta_T M_{t-1, i})\)$. 
The key point of the proof for the non-adaptive version of the exponential weights 
algorithm is to use $\Phi_t^T$ as a ``potential'' function, 
and bound the change in potential before and after a single round \cite{CesabianchiLu06}. Specifically, they showed that
$$ \P^T_t \cdot \Z_t \leq \frac{\eta_T}{8} + \Phi_t^T - \Phi_{t+1}^T .$$
We also base our proof on this inequality. 
The total loss of the learner after $T_s$ rounds is 
\begin{align*}
L_{T_s} &= \sum_{t=1}^{T_s}  \E[\P^T_t  | T \geq t] \cdot \Z_t= \sum_{t=1}^{T_s}  \E[\P^T_t \cdot \Z_t | T \geq t] \\
&\leq B + \sum_{t=1}^{T_s}\E[ \Phi_t^T - \Phi_{t+1}^T | T \geq t] .
\end{align*}
Define $U_t = \E[ \Phi_t^T | T\geq t]$. We do the following transformation:
\begin{align*}
&\E[\Phi_t^T - \Phi_{t+1}^T  | T \geq t] \\
=& U_t - E_T[\Phi_{t+1}^T  | T \geq t] \\
=& U_t - q_t \Phi_{t+1}^t - (1-q_t)U_{t+1} \\
=& U_t - U_{t+1} + q_t(U_{t+1} - \Phi_{t+1}^t)  \\
=& U_t - U_{t+1} + q_t\cdot\E[\Phi_{t+1}^T - \Phi_{t+1}^t | T \geq t + 1] \\
=& U_t - U_{t+1} + q_t\cdot\E[F_{T, t}(\M_t) | T \geq t + 1] ,
\end{align*}
where we define $$F_{T, t}(\M) = \frac{\ln\(\sum_i \exp(-\eta_T M_i)\)}{\eta_T} - 
\frac{\ln\(\sum_i \exp(-\eta_t M_i)\)}{\eta_t} .$$
A key observation is 
\begin{equation}\label{equ:key_ob}
\max \limits_{\M \in \R^N_+ \atop \eta_T<\eta_t} F_{T,t}(\M) = \frac{\ln N}{\eta_T} -\frac{\ln N}{\eta_t},
\end{equation}
which can be verified by a standard derivative analysis that we omit. 
(An alternative approach using KL-divergence can be found in Chapter 2.5 of \citealp{Bubeck11}.)

We further define another potential function $\bar{\Phi}_t^T=(\ln N)/\eta_T $
and also $\bar{U}_t = \E[\bar{\Phi}_t^T | T \geq t]$. 
Note that the new potential $\bar{\Phi}_t^T$ has no dependence on $t$
and thus $\bar{\Phi}_t^T = \bar{\Phi}_{t'}^T$ for any $t, t'$.
We now have
\begin{align} 
&\sum_{t=1}^{T_s} \E[\Phi_t^T - \Phi_{t+1}^T  | T \geq t] \notag \\
=\;& \sum_{t=1}^{T_s} \(U_t - U_{t+1} + q_t\cdot\E[\Phi_{t+1}^T 
- \Phi_{t+1}^t | T \geq t + 1] \)  \notag \\
=\;& \underbrace{U_1 - U_{T_s+1}  + \sum_{t=1}^{T_s}  \( q_t  \cdot 
\E[\Phi_{t+1}^T - \Phi_{t+1}^t | T \geq t + 1] \)}_C  \label{equ:transformation}\\
\leq\;& U_1 - U_{T_s+1} + \sum_{t=1}^{T_s} \(q_t \cdot \E[\frac{\ln N}{\eta_T} 
- \frac{\ln N}{\eta_t} | T \geq t + 1] \)   \tag{by Eq. \eqref{equ:key_ob}}\\
=\;& \underbrace{\bar{U}_1 - \bar{U}_{T_s+1} + \sum_{t=1}^{T_s} \(q_t  \cdot \E[\bar{\Phi}_{t+1}^T 
- \bar{\Phi}_{t+1}^t | T \geq t + 1] \)}_D  \notag\\
&+ \bar{U}_{T_s+1} - U_{T_s+1}  
\tag{$\because U_1 = \bar{U}_1$}  .
\end{align}
Notice that $D$ has the exact same form as $C$ except for a different definition
of the potential, and also Eq. \eqref{equ:transformation} is an equality.
Therefore, by a reverse transformation, we have
\begin{align*}
&\sum_{t=1}^{T_s} \E[\Phi_t^T - \Phi_{t+1}^T  | T \geq t]  \\
=& \sum_{t=1}^{T_s} \E[\bar{\Phi}_t^T - \bar{\Phi}_{t+1}^T  | T \geq t]
+ \bar{U}_{T_s+1} - U_{T_s+1}   \\
=& \bar{U}_{T_s+1} - U_{T_s+1}  \tag{$\because \bar{\Phi}_t^T = \bar{\Phi}_{t+1}^T$}
\end{align*}
$\bar{U}_{T_s+1}$ is exactly $A$ in Eq. \eqref{equ:MW_gen_bound}, 
and $U_{T_s+1}$ can be related to the loss of the best action:
\begin{align*}
U_{T_s+1} &= 
\E\left[ \frac{1}{\eta_T}\ln\sum_{i=1}^N \exp(-\eta_T M_{T_s, i}) \;|\; T \geq T_s + 1\right] \\
&\geq \E\left[ \frac{1}{\eta_T}\ln\exp(-\eta_T R(M_{T_s}, 0)) \;|\; T \geq T_s + 1\right] \\
&= -R(M_{T_s}, 0) .
\end{align*}
The regret is therefore 
\begin{align*}
\Reg(L_{T_s}, \M_{T_s}) &= L_{T_S} - R(M_{T_s}, 0)  \\
&\leq A + B - U_{T_s+1} - R(M_{T_s}, 0)  \\
&\leq A + B, 
\end{align*}
proving Eq. \eqref{equ:MW_gen_bound}.

The rest of the proof is merely to plug in the distribution and $\eta_T = \sqrt{(b\ln N)/T}$,
and upper bound Eq. \eqref{equ:MW_gen_bound} using Claim \ref{clm:int}.
Adopting the notation $S_t =\sum_{t'=t}^\infty 1/t'^d$ and the result
of Eq. \eqref{equ:bound_S} in the proof of Theorem \ref{thm:adaptive_upper}, we have
\begin{align*}
A &= \frac{\sqrt{\ln N}}{S_{T_s+1}\sqrt{b}} \sum_{T=T_s+1}^\infty \frac{1}{T^{d-1/2}} \\
&\leq \frac{(d-1)\sqrt{\ln N}}{\sqrt{b}}(T_s+1)^{d-1} \cdot \\
&\quad\(\int_{T_s+1}^\infty \frac{dx}{x^{d-1/2}} +\frac{1}{(T_s+1)^{d-1/2}}  \)  \\
&= \frac{d-1}{(d-3/2)\sqrt{b}} \sqrt{T_s\ln N}  + o(\sqrt{T_s\ln N}) ;
\end{align*}
\begin{align*}
B &= 
\frac{\sqrt{b\ln N}}{8}\sum_{t=1}^{T_s} \frac{1}{S_t}\sum_{T=t}^\infty \frac{1}{T^{d+1/2}} \\
&\leq \frac{(d-1)\sqrt{b\ln N}}{8}\sum_{t=1}^{T_s} t^{d-1} \(
\int_{t}^\infty \frac{dx}{x^{d+1/2}} + \frac{1}{t^{d+1/2}}\) \\
&\leq \frac{(d-1)\sqrt{b\ln N}}{8}\sum_{t=1}^{T_s} \( 
\frac{1}{(d-1/2)\sqrt{t}} + \frac{1}{t^{d+3/2}}\) \\
&\leq \frac{\sqrt{b}(d-1)}{4(d-1/2)}\sqrt{T_s\ln N}  + o(\sqrt{T_s\ln N}).
\end{align*}
Combining the bounds above for $A$ and $B$ proves the theorem. 
\end{proof}

\section{Proof of Theorem \ref{thm:first_order}}
\begin{proof}
The main idea resembles the one of Theorem \ref{thm:adaptive_MW},
but the details are much more technical.
Let us first define several notations:
\begin{align*}
S_t &\triangleq \int_{m_t}^\infty \frac{dm}{m^d} = \frac{1}{(d-1)m_t^{d-1}} , \\
q_t &\triangleq \Pr[m < m_t | m \geq m_{t-1}] = \frac{1}{S_{t-1}}
\int_{m_{t-1}}^{m_t} \frac{dm}{m^d} \\ &= 1 - \(\frac{m_{t-1}}{m_t}\)^{d-1} ,\\
Y_t^m &\triangleq \sum_{i=1}^N\exp(-\eta_m M_{t-1,i}) ,\\
\Phi_t^m &\triangleq \(1+\frac{1}{\eta_m}\)\ln Y_t^m   ,\quad
U_t \triangleq  \E[\Phi_t^m | m \geq m_{t-1}].
\end{align*}
The proof starts from the following property of the exponential weights algorithm
\cite{CesabianchiLu06}:
\begin{align*}
\P_t^m \cdot \Z_t &\leq 
\frac{1}{1-e^{-\eta_m}}\(\ln Y_t^m - \ln Y_{t+1}^m\) \\
&\leq \Phi_t^m - \Phi_{t+1}^m. \tag{$\because \eta_m \geq \ln(1+\eta_m)$}
\end{align*}

By the fact that $f_{m \geq m_{t-1}}(m') = (1-q_t) f_{m\geq m_t}(m')$ for any $m' \geq m_t$, 
where $f_{m \geq m_{t-1}}$ and $f_{m \geq m_t}$ are conditional density functions,
the loss of the learner after $T_s$ rounds $L_{T_s}$ is 
\begin{align*}
& \sum_{t=1}^{T_s} \E[\P_t^m \cdot \Z_t | m \geq m_{t-1}] \\
\leq& \sum_{t=1}^{T_s} \E[\Phi_t^m - \Phi_{t+1}^m | m \geq m_{t-1}] \\
=& \sum_{t=1}^{T_s} \(U_t - \int_{m_{t-1}}^{m_t} \Phi_{t+1}^m f_{m\geq m_{t-1}}(m) dm
+ (1-q_t) U_{t+1} \) \\
\leq& \sum_{t=1}^{T_s} \(U_t - \Phi_{t+1}^{m_{t-1}} \int_{m_{t-1}}^{m_t}  f_{m\geq m_{t-1}}(m) dm + (1-q_t) U_{t+1} \) \\
=& U_1 - U_{T_s+1} + \sum_{t=1}^{T_s} q_t (U_{t+1} -  \Phi_{t+1}^{m_{t-1}}),
\end{align*}
Here the last inequality holds because $\Phi_t^m$ is increasing in $m$.
To show this, we consider the following
\begin{align*}
&\(1+\frac{1}{\eta}\)\ln \sum_{i=1}^N \exp(-\eta a_i) \\
=& \(1+\frac{1}{\eta}\)\(-\eta a_1
+ \ln \sum_{i=1}^N \exp(-\eta(a_i-a_1)) \) \\
=& -(\eta+1) a_1 +  \(1+\frac{1}{\eta}\)\ln \sum_{i=1}^N \exp(-\eta(a_i-a_1)),
\end{align*}
where $\eta, a_1, \ldots, a_N$ are positive numbers. 
Since $\ln \sum_i\exp(-\eta(a_i-a_1)) \geq 0$, 
the expression above is decreasing in $\eta$, 
which along with the fact that $\eta_m$ decreases in $m$
shows that $\Phi_t^m$ increases in $m$.

We now compute $U_1$ and $U_{T_s+1}$:
\begin{align*}
U_1 &= \E[(1+\sqrt{m/\ln N})\ln N \;|\; m \geq 1] \\
&= \ln N + \frac{d-1}{d-3/2}\sqrt{\ln N} \\
U_{T_s+1} &= \E\left[(1+1/\eta_m) \ln\sum_i \exp(-\eta_m M_{T_s,i}) 
\;|\; m \geq m_{T_s} \right ] \\
&\geq \E[(1+1/\eta_m) (-\eta_m m^*) \;|\; m \geq m_{T_s}] \\
&= -m^* \(1 + \E[\eta_m \;|\; m \geq m_{T_s}] \) \\
&= -m^* \(1 + \frac{d-1}{d-1/2} \sqrt{\frac{\ln N}{m_{T_s}}} \) \\
&\geq -m^* - \frac{d-1}{d-1/2} \sqrt{m^* \ln N} \tag{$\because m_{T_s} = m^* + 1$}
\end{align*}

For $U_{t+1} -  \Phi_{t+1}^{m_{t-1}} = \E[\Phi_{t+1}^m - 
\Phi_{t+1}^{m_{t-1}} \;|\; m \geq m_t]$,
we first upper bound the part inside the expectation:
\begin{align*}
&\Phi_{t+1}^m - \Phi_{t+1}^{m_{t-1}} \\
=\;& \(\frac{\ln Y_{t+1}^m}{\eta_m} - \frac{\ln Y_{t+1}^{m_{t-1}}}{\eta_{m_{t-1}}}\) + (\eta_{m_{t-1}} - \eta_m) \min_i M_{t,i} \\
&+ \ln \frac{\sum e^{-\eta_m (M_{t,i} - \min_i M_{t,i})}}
{\sum e^{-\eta_{m_{t-1}} (M_{t,i} - \min_i M_{t,i})}}.
\end{align*}
The first term above is at most 
$\(\frac{1}{\eta_m} - \frac{1}{\eta_{m_{t-1}}}\)\ln N
= \sqrt{\ln N}(\sqrt{m} - \sqrt{m_{t-1}}) $ by Eq. \eqref{equ:key_ob}.
The second term is at most $\sqrt{\ln N}(\frac{1}{\sqrt{m_{t-1}}}-\frac{1}{\sqrt{m}})m_{t-1}$ 
since $\min_i M_{t,i} = m_t - 1 \leq m_{t-1}$,
and the last term is at most $\ln N$ since the 
numerator is at most $N$ while the denominator is at least $1$.
Therefore, we have
\begin{align*}
&U_{t+1} -  \Phi_{t+1}^{m_{t-1}}  \\
\leq& \ln N + \sqrt{\ln N}\cdot \E[\sqrt{m} - \frac{m_{t-1}}{\sqrt{m}} \;|\; m \geq m_t] \\
=& \ln N + \sqrt{\ln N} \( \frac{d-1}{d-3/2}\sqrt{m_t} - 
\frac{d-1}{d-1/2}\frac{m_{t-1}}{\sqrt{m_t}} \) \\
\leq& \ln N + \sqrt{\ln N} \( \frac{d-1}{d-3/2}\sqrt{m_t} - 
\frac{d-1}{d-1/2}\frac{m_{t}-1}{\sqrt{m_t}} \) \\
=&  \ln N + \frac{(d-1)\sqrt{m_t\ln N}}{(d-3/2)(d-1/2)}  + 
\frac{d-1}{d-1/2}\sqrt{\frac{\ln N}{m_t}}.
\end{align*}
It remains to compute $\sum_{t=1}^{T_s} q_t (U_{t+1} -  \Phi_{t+1}^{m_{t-1}})$,
which, using the above, can be done by computing
$A = \sum_{t=1}^{T_s} q_t$, 
$B = \sum_{t=1}^{T_s} q_t \sqrt{m_t}$ and
$C = \sum_{t=1}^{T_s} q_t / \sqrt{m_t}$.
By inequality $1 - x \leq -\ln x$ for any $x >0$, we have
\begin{align*}
A &= \sum_{t=1}^{T_s} \(1 - \(\frac{m_{t-1}}{m_t}\)^{d-1}\) \\
&\leq -(d-1) \sum_{t=1}^{T_s} \(\ln m_{t-1} - \ln m_t\) \\
&= (d-1)\ln (m^*+1) .
\end{align*}
For $B$, we first show $q_t \sqrt{m_t} \leq 2(d-1)(\sqrt{m_t} - \sqrt{m_{t-1}})$,
which is equivalent to 
$$ \frac{q_t \sqrt{m_t}}{\sqrt{m_t} - \sqrt{m_{t-1}}} 
= \frac{\(\frac{m_t}{m_{t-1}}\)^{d-1} - 1}
{\(\frac{m_t}{m_{t-1}}\)^{d-1} - \(\frac{m_t}{m_{t-1}}\)^{d-3/2}} 
\leq 2(d-1)
$$
if $m_t \neq m_{t-1}$ (it is trivial otherwise). 
Define $h(x) = (x^{d-1}-1)/(x^{d-1} - x^{d-3/2})$ for $x \in [1,2]$ 
(note that $m_t/m_{t-1}$ is within this interval).
One can verify that $h'(x) < 0$ and thus 
$h(x) \leq \lim_{x\rightarrow 1} h(x) = 2(d-1)$.
So we prove $q_t \sqrt{m_t} \leq 2(d-1)(\sqrt{m_t} - \sqrt{m_{t-1}})$ and
\begin{align*}
B &\leq 2(d-1) \sum_{t=1}^{T_s} (\sqrt{m_t} - \sqrt{m_{t-1}}) \\
&= 2(d-1) (\sqrt{m_{T_s}} - 1) \leq 2(d-1)\sqrt{m^*}.
\end{align*}
A simple comparison of $B$ and $C$ shows $C = o(\sqrt{m^*})$.
We finally conclude the proof by combining all we have
\begin{align*}
&\Reg(L_{T_s}, \M_{T_s}) \\
\leq\;& U_1 - U_{T_s+1} + \sum_{t=1}^{T_s} q_t (U_{t+1} -  \Phi_{t+1}^{m_{t-1}}) - m^* \\
=\;& (1+(d-1)\ln(m^*+1))\ln N  \\ 
&+ \(\frac{d-1}{d-1/2} + \frac{2(d-1)^2}{(d-3/2)(d-1/2)}\)\sqrt{m^*\ln N} \\
&+ o(\sqrt{m^*\ln N})\\
=\;& \frac{3(d-7/6)(d-1)}{(d-3/2)(d-1/2)}\sqrt{m^* \ln N} \\
&+ (1+(d-1)\ln(m^*+1))\ln N +  o(\sqrt{m^*\ln N}). 
\end{align*}
\end{proof}

\section{Examples}
The first example shows that the results stated in Theorem~\ref{thm:minimax_ran}
can not generalize to other loss spaces.
\begin{example}\label{exm:minimax_ran}
Consider the following Hedge setting: $N=3, \LS = \{\1-\e_1, \1-\e_2, \1-\e_3\}$
where $\1=(1,1,1)$.
Suppose the adversary picked $\1-\e_1$ and $\1-\e_2$ for the first two rounds
and we are now on round $t=3$ with $\M_2=(1,1,2)$. Also the conditional distribution
of the horizon given $T\geq 3$ is $\Pr[T=3]=\Pr[T=4]=1/2$. 
Let $\P^*$ be the minimax strategy for this round and $\P^T$ be the minimax
strategy assuming the horizon to be $T$. Then $\P^* \neq \E[\P^T | T \geq 3]$, and
also
\begin{equation}\label{equ:exm_fix_and_ran}
\begin{split}
\adjustlimits \inf_{\Alg}\sup_{\Z_{3:\infty}} \E[\Reg(L_T, \M_T) | T \geq 3] \\ 
\neq \E[\adjustlimits\inf_{\Alg}\sup_{\Z_{3:T}}\Reg(L_T, \M_T) | T \geq 3]. 
\end{split}
\end{equation}
\end{example}
\begin{proof}
Recall the $V$ function we had in Section~\ref{sec:fh}. 
Ignoring the loss for the learner for the first two rounds 
(which is the same for both sides of Eq.~\eqref{equ:exm_fix_and_ran}),
we point out that the right hand side of Eq.~\eqref{equ:exm_fix_and_ran}
is essentially 
$$  \frac{1}{2}V(\M_2, 1) + \frac{1}{2}V(\M_2, 2), $$
and the left hand side, denoted by $V'$, is
$$ \min_\P\max_\Z (\P\cdot \Z + \frac{1}{2}V(\M_2+\Z, 0) + \frac{1}{2}V(\M_2+\Z,1)).$$
Also $\P^*$ and $\P^T$ are the distributions that realize the minimum
in the definition of $V'$ and $V(\M_2, T-2)$ respectively.
Below we show the values of these quantities without giving full details:
\begin{align*}
V(\M_2, 1) &= \min_\P\max_i\{1-P_i+V(\M_2+\1-\e_i, 0)\} \\
&= \min_\P\max\{-P_1, -P_2, -P_3-1\} \\
&= -1/2,
\end{align*}
with $\P^3 = (1/2,1/2,0)$;
\begin{align*}
V(\M_2, 2) &= \min_\P\max_i\{1-P_i+V(\M_2+\1-\e_i, 1)\} \\
&= \min_\P\max\{-P_1, -P_2, -P_3-1/3\} \\
&= -4/9,
\end{align*}
with $\P^4 = (4/9,4/9,1/9)$;
\begin{align*}
V' &= \min_\P\max_i \Big(1-P_i + \frac{1}{2}V(\M_2+\1-\e_i, 0) \\
&\quad\quad + \frac{1}{2}V(\M_2+\1-\e_i,1)\Big)  \\
&= \min_\P\max\{-P_1, -P_2, -P_3-2/3\} \\
&= -1/2,
\end{align*}
with $\P^* = (1/2,1/2,0).$
We thus conclude that 
$$ \E[\P^T | T \geq 3] = (17/36,17/36,1/18) \neq \P^* $$
and
$$ \E[V(\M_2, T-2) | T \geq 3] = -17/36 \neq V'.$$
\end{proof}

The next two examples show that the idea of ``treating the current round as 
the last round'' does not work for minimax algorithms.
\begin{example}\label{exm:last_round1}
Consider the following Hedge setting: $N=2, \LS=[0,1]^2$ and the horizon $T$
is a even number. Suppose on round $t$, the learner chooses $\P_t$ using the
minimax algorithm assuming horizon $T=t$. Then the adversary can make the regret
after $T$ rounds to be $T/4$ by choosing $\e_1$ and $\e_2$ alternatively.
\end{example}
\begin{proof}
As shown in Theorem~\ref{thm:loss_space}, when $N=2$, the minimax algorithm
with $\LS=[0,1]^2$ is the same as the one with $\LS=\{\e_1,\e_2\}$, which we
already know from Theorem~\ref{thm:minimax_fix}. 
If the learner treats the current round as the last round, then $P_{t,1}$ is
\begin{align*}
&V(\M_{t-1},  1) - V(\M_{t-1} + \e_1, 0) \\
=& \frac{1}{2}\big(1 + \min\{M_{t-1,1}+1, M_{t-1,2}\} \\
&- \min\{M_{t-1,1}, M_{t-1,2}+1\}\big).
\end{align*}
Hence, for any round $t$ where $t$ is odd, we have 
$\M_{t-1} = (\frac{t-1}{2}, \frac{t-1}{2})$ and thus $P_{t,1} = P_{t,2} = 1/2$
and the learner suffers loss $1/2$.
For any round $t$ where $t$ is even, we have 
$\M_{t-1} = (\frac{t}{2}, \frac{t}{2}-1)$ and thus $P_{t,1} = 0, P_{t,2} = 1$
and the learner suffers loss $1$ since the adversary will choose $\e_2$ for this round.
Finally, at the end of $T$ rounds, the loss of the best action is clearly $T/2$.
So the regret would be $3T/ 4 - T/2 = T/4$.
\end{proof}

\begin{example}\label{exm:last_round2}
Consider the online linear optimization problem described in 
Section~\ref{subsec:ball_game}. If horizon $T$ is even and the learner
predicts using the minimax algorithm Eq~\eqref{equ:ball_game_minimax} 
with $T$ replaced with $t$. Then the adversary can make the regret to
be $\sqrt{2}T/4$ after $T$ rounds by choosing $\e_1$ and $-\e_1$ alternatively.
\end{example}
\begin{proof}
For any round $t$ where $t$ is odd, we have $\W_{t-1} = \0$ and thus $\x_t=\0$.
So the loss for this round is $0$.
For any round $t$ where $t$ is even, we have $\W_{t-1} = \e_1$ and 
thus $\x_t= -\frac{\sqrt{2}}{2}\e_1$. So the loss for this round is $\sqrt{2}/2$
since the adversary will pick $-\e_1$.
At the end of $T$ rounds, since $\W_T = \0$, the regret will simply be $\sqrt{2}T/4$.
\end{proof}

\end{document}